\theoremstyle{plain}
\newtheorem{theorem}{Theorem}[section]
\newtheorem{lemma}[theorem]{Lemma}
\NewCommandCopy{\proofqedsymbol}{\qedsymbol}
\theoremstyle{definition}
\newtheorem{example}{Example}
\renewcommand{\qedsymbol}{$\lozenge$}%
\newtheorem{remark}{Remark}
\renewcommand{\qedsymbol}{$\lozenge$}%
\newenvironment{assumptionp}[1]{
  
  \assumptionalt
}{\endassumptionalt}
\newcommand{\argmax}{\operatornamewithlimits{argmax}}
\newcommand{\argsup}{\operatornamewithlimits{argsup}}
\DeclareMathOperator{\Tr}{Tr}
\newacronym{mle}{MLE}{maximum likelihood estimation}
\newacronym{sde}{SDE}{stochastic differential equation}
\newacronym{ebm}{EBM}{energy-based model}
\newacronym{cd}{CD}{contrastive divergence}
\newacronym{pcd}{PCD}{persistent contrastive divergence}
\newacronym{srock}{S--ROCK}{stochastic orthogonal Runge--Kutta Chebyshev}
\newacronym{uit}{UiT}{uniform-in-time}
\newacronym{mcmc}{MCMC}{Markov chain Monte Carlo}
\newacronym{spcdem}{SPCDem}{Stable PCD (Euler--Maruyama)}
\newacronym{spcd}{SPCD}{Stable PCD}
\begin{document}

\title{Uniform-in-time convergence bounds for Persistent Contrastive Divergence Algorithms}

\author{Paul Felix Valsecchi Oliva}
\author{\"{O}. Deniz Akyildiz}
\author{Andrew Duncan}
\affil{Department of Mathematics, Imperial College London}

\affil[]{{\textcolor{blue}{\footnotesize \texttt{\{paul.valsecchi-oliva21, deniz.akyildiz, a.duncan\}@imperial.ac.uk}}}}

\maketitle

\begin{abstract}
We propose a continuous-time formulation of \gls*{pcd} for \gls*{mle} of unnormalised densities. Our approach expresses \gls*{pcd} as a coupled, multiscale system of \glspl*{sde}, which perform optimisation of the parameter and sampling of the associated parametrised density, simultaneously.   

From this novel formulation, we are able to derive explicit bounds for the error between the \gls*{pcd} iterates and the \gls*{mle} solution for the model parameter. This is made possible by deriving \gls*{uit} bounds for the difference in moments between the multiscale system and the averaged regime.  An efficient implementation of the continuous-time scheme is introduced, leveraging a class of explicit,  stable intregators, \gls*{srock}, for which we provide explicit error estimates in the long-time regime. This leads to a novel method for training \glspl*{ebm} with explicit error guarantees. 
\end{abstract}

\section{Introduction}
\glspl*{ebm}, introduced by \cite{Hinton1985}, have become ubiquitous in the world of machine learning \cite{Grathwohl, du2021improved, Du2019, liu2021learning, glaser2023}, as they can be flexibly trained with a wide variety of models, allowing them, in principle, to model any probability density. Indeed, they have been used in applications as varied as computer vision, natural language processing and reinforcement learning, demonstrating their robustness and expresiveness \cite{Du2019, Grathwohl, liu2021learning}. By learning the probability density we are able to sample from it or perform a variety of other downstream tasks, such as conditional sampling, anomaly detection and simulation-based inference \cite{Du2024, Grathwohl, glaser2023}. 

In this setting we consider an \gls*{ebm},  $p_\theta:\mathbb{R}^{d_x}\to\mathbb{R}_+$ for $\theta\in\mathbb{R}^{d_\theta}$, to be given as
\begin{equation}\label{eq:ebm}
p_\theta(x) = \frac{e^{-E(\theta,x)}}{Z_\theta},
\end{equation}
where $Z_\theta = \int e^{-E(\theta,x)}\mathrm{d}x$ is the normalising constant (it is implied that any family of $E(\theta,\cdot)$ is chosen such that $Z_\theta$ is finite). Throughout the paper, we denote the densities $p_\theta$ and measures $p_\theta(\mathrm{d}x)$ (absolutely continuous w.r.t. Lebesgue measure) with the same letters where the context is clear. The main task in training \glspl*{ebm} is to identify the \gls*{mle} solution
\begin{equation}\label{eq:mle}
\bar{\theta}^\star \in \argmax_{\theta\in\mathbb{R}^{d_\theta}} \frac{1}{M}\sum_{j=1}^M \log p_\theta(y_j),
\end{equation}
given a set of i.i.d. observations $\{y_j\}_{j=1}^M \subset\mathbb{R}^{d_x}$. The difficulty in estimating parameter updates for such a model arises from the intractability of computing the gradients of the normalisation constant with respect to the parameter $\theta$, i.e. computing $\nabla_\theta Z_\theta$.

To address this challenge, two widespread methods have emerged: \gls*{mle} via \gls*{mcmc}, i.e., \gls*{cd} \cite{Hinton2000TrainingPO}, and score-matching \cite{Hyvarinen2005}. We will be particularly interested in the former and, in particular, \glsfirst*{pcd}, proposed by \cite{Tieleman_2008}. \gls*{cd} methods aim to implement a gradient descent scheme to identify $\bar{\theta}^\star$, by interleaving these optimsation steps with sampling steps, which estimate the gradient of the normalising constant $\nabla_\theta Z_\theta$ using \gls*{mcmc} schemes targeting $p_\theta$. This procedure, hence, performs the $\theta$ update by using an approximation, introducing a bias. To prevent bias accumulation, \cite{Hinton2000TrainingPO} proposes a \gls*{cd} method that resets the sampling procedure (i.e. restarts the \gls*{mcmc} samplers) for the particles at each step and performs only one simulation step for the sampling to reduce the cost of the interleaving steps. The bias arising from this approximation, is dismissed by \cite{Hinton2000TrainingPO} as,
\begin{quote}
[it] is problematic to compute, but extensive simulations \dots show that it can safely
be ignored because it is small and it seldom opposes the resultant of [the computation.]
\end{quote}
Empirically, the number of \gls*{mcmc} steps seems to matter, as identified in \cite{Tieleman_2008}, where the \gls*{cd}-$i$ algorithm is investigated, with $i$ iterations of \gls*{mcmc}. Note that, typically, the larger $i$, the more accurate the gradient update performed; see \cite{Hinton2000TrainingPO} eq.~(5) for a full justification. Indeed, \cite{Tieleman_2008} proposes the \gls*{pcd} algorithm, which persists the particles from one $\theta$-update to the next, assuming that small changes of $\theta$  {in Euclidean space} will lead to small changes of $p_\theta$ in {distribution}. It is shown experimentally that the \gls*{cd} scheme converges in \cite{Hinton2000TrainingPO, Sutskever2010} and \cite{Tieleman_2008} show that the \gls*{pcd} algorithm performs better than \gls*{cd}-$i$ for most small values of $i$. As these algorithms do not target the gradient of any fixed target function \cite{Sutskever2010}, the analysis of these systems is severely limited. Despite their widespread use, there are, to our knowledge, no non-asymptotic bounds for these methods.

In this paper, we model joint sampling and optimisation procedures as a multiscale system of Langevin diffusions allowing us to leverage their rich properties in analysing and developing algorithms, see, e.g. \cite{Durmus2016HighdimensionalBI, Durmus2017, Durmus2018AnalysisOL, akyildiz2024multiscaleperspectivemaximummarginal}. The multiscale system we develop allows us to obtain training procedures for \glspl*{ebm}, with a single discretisation of a joint, multiscale \gls*{sde}. We show that the Euler--Maruyama discretisation of our system corresponds to the classical \gls*{pcd} algorithm, hence the proposed \gls*{sde} provides a continuous-time limit for this class of algorithms.\footnote{This is meant in the sense that the law of the proposed system, at each time, will match those of a \gls*{pcd} algorithm implemented with ULA (and considering a small modification which is discussed further on).} Specifically, we propose a two time-scale system, where the particles targeting $p_\theta$ (hereon referred to as $x$-particles) are ``accelerated'' by a time-rescaling of $1/\varepsilon$, which can be understood heuristically to correspond to running the interleaved sampling of the particles for longer (as in the \gls*{cd}-$i$ case discussed above, where the $x$-particles ``travel'' $i$ times faster than the $\theta$ particles). Indeed, the averaging limit $\varepsilon\to 0$ can be shown to correspond to the desired gradient computation maximising the log-likelihood, via classical averaging results. To control the difference of these processes we will apply recent developments in averaging literature, \cite{Crisan_Ottobre_2024}, which show uniform in time weak error bounds on the moments of a two time-scale \gls*{sde} and its averaged limit.

Note that, unlike most of the averaging literature, this work is concerned with using the slow-fast ($\varepsilon > 0$) regime to estimate the averaged ($\varepsilon \to 0$) regime, as opposed to the other way round (as one may see in \cite{Pavliotis2008, Pardoux_Veretennikov_2001, Pardoux_Veretennikov_2003}). In particular, in this context, it is critical to obtain \glsfirst*{uit} moment bounds between the slow-fast and averaged regimes (as identified in \cite{Crisan_Ottobre_2024, schuh2024conditionsuniformtimeconvergence}), to ensure that longer simulation runtimes---required to improve the sampling accuracy of the Langevin diffusions---lead to better bounds. The key difficulty is being able to identify bounds proportional to the inverse of the time-rescaling factor $1/\varepsilon$, which are also \gls*{uit}, requiring strong assumptions on the behaviour of the drifts, as identified in \cite{schuh2024conditionsuniformtimeconvergence}. In this paper we obtain similar results to \cite{Crisan_Ottobre_2024}, using slightly different assumptions, which are more suited to our problem and common in the sampling literature. For another example of a work in a similar direction, see \cite{akyildiz2024multiscaleperspectivemaximummarginal}, however, note that this paper addresses a different problem.

We summarise our main contributions as follows: 
\begin{itemize}
    \item We develop a multiscale perspective on the \gls*{mle} training problem of \glspl*{ebm} by providing a two time-scale Langevin diffusion, which targets the \gls*{mle} solution in the limit $\varepsilon\to 0$. In particular, we show that the averaged system in the limit of scale separation is an \gls*{sde} that maximises the log-likelihood of the data. We show that this framework can be used to analyse existing \gls*{pcd} algorithms, as well as to develop new ones. 
    \item We provide numerical discretisations for the proposed multiscale Langevin diffusion as practical algorithms for training \glspl*{ebm}. In particular, we show that the Euler--Maruyama discretisation of the multiscale system results in the classical \gls*{pcd} algorithm \cite{Tieleman_2008}, which is a widely used algorithm for training \glspl*{ebm}. We provide a discretisation error analysis for this scheme, which, to the best of our knowledge, is done for the first time for \gls*{pcd}.
    \item To further demonsrate the utility of our framework and motivated by the potential instability of the Euler--Maruyama discretisation, we propose a new class of numerical integrators based on \gls*{srock} methods, which are known to be stable for stiff \glspl*{sde}. We show that these methods can be used to implement the \gls*{pcd} algorithm with improved stability and convergence properties. We prove finite-time and \gls*{uit} bounds for the error between the \gls*{pcd} iterates and the \gls*{mle} solution for this novel class of \gls*{pcd} algorithms.
\end{itemize} 

The paper is structured as follows: the background for the problem and our approach is motivated in Sec.~\ref{sec:background}, together with the assumptions required to establish our results. We introduce in Sec.~\ref{sec:poisson} the Poisson Equation for our problem, which will be employed to bound the corrector term, accounting for the difference between the slow-fast system \eqref{eq:basesde} and the averaged system \eqref{eq:averagedtheta}. Next we study the averaged system in Sec.~\ref{sec:averaged}, which is a Langevin analogue of gradient descent for the negative log-likelihood, identifying the stationary measure $\pi^0$. Finally these bounds are combined to obtain an error between the moments of the the slow-fast and averaged systems in Sec.~\ref{sec:error}. To explore the applicability of this algorithm, numerical integrators are introduced in Sec.~\ref{sec:numerical}, for which we identify both finite time and asymptotic bounds for the convergence of the scheme, together with some further assumptions.

\subsection{Notation}
Denote by $\mathscr{P}_n(\mathbb{R}^d)$, for $d, n\geq 1$, all probability measures over the space $(\mathbb{R}^d, \mathscr{B}(\mathbb{R}^d))$ with bounded $n$th moment, where $\mathscr{B}(\mathbb{R}^d)$ denotes the Borel $\sigma$-algebra over $\mathbb{R}^d$. Also consider the Euclidean inner-product space over $\mathbb{R}^d$, with inner product $\langle \cdot, \cdot \rangle$ and associated norm $\|\cdot\|$. We will be using this notation interchangeably over different dimensions, assuming that the appropriate inner-product space is chosen. For matrices and tensors (arising from the permutations of higher order gradients) we will use the Frobenius norm which we define via the trace operator: $\|A\|_F =\Tr(A A^\top)$, where $\Tr$ returns the sum of all the elements along the diagonal where all the indices match and the transpose is the permutation of the indeces.

For any $p\in\mathbb{N}$ define the Wasserstein-$p$ metric as
\begin{equation}
W_p(\pi,\nu) = \inf_{\Gamma\in \mathbf{T}(\pi, \nu)} \left(\int \|x-y\|^p_p\mathrm{d}\Gamma(x,y)\right)^\frac{1}{p},
\end{equation}
where $\mathbf{T}(\pi,\nu)$ denotes the set of couplings over $\mathbb{R}^{d\times d'}$, with marginals $\pi\in\mathscr{P}_p(\mathbb{R}^d)$ and $\nu\in\mathscr{P}_p(\mathbb{R}^{d'})$.

We now define a series of mappings that will be useful further on. $\mathcal{L}$ maps random variables over this space to their law, a measure over the space. As discussed above we will be particularly interested in the Markov semi-groups $\mathcal{P}_t$; these are defined for an infinitesimal generator $\mathcal{G}$ associated to an SDE in $\mathbb{R}^d$ and can be understood to map a function $\phi$ to $\mathbb{E}[\phi(X_t)|X_0=\cdot\,]$, where $X_t$ is the solution to the SDE. To be precise, $\mathcal{P}_t$ is an operator on $L^2(\mathbb{R}^{d+d'};\mathbb{R}^{d''})$, where $d\geq d''\geq 1$, $d'\geq 0$ and solves the following system for all $x\in\mathbb{R}^d$ and $t\in\mathbb{R}_+$,
\begin{align*}
\partial_t \mathcal{P}_tf(x) =& \mathcal{G} \mathcal{P}_tf(x),\\
\mathcal{P}_0f(x)=&\phi(x),
\end{align*}
where we recall that the generator maps the $d'$ dimension to 0 and so $\mathcal{P}_t$ leaves these dimensions invariant. Further, we can consider the adjoint $\mathcal{P}_t^*$, the measure push-forward, given as $\mathcal{P}_t^*:\mathscr{P}(\mathbb{R}^{d+d'})\to \mathscr{P}(\mathbb{R}^{d+d'})$ and solves for all $t\in\mathbb{R}_+$ and $\mu\in\mathscr{P}(\mathbb{R}^{d+d'})$,
\begin{align*}
\partial_t \mathcal{P}^*_t \mu &= \mathcal{G}^* \mathcal{P}^*_t\mu,\\
\mathcal{P}^*_0\mu & =\mu,
\end{align*}
where $\mathcal{G}^*$ denotes the $L^2$ adjoint of the generator. We observe the following relationship between the operators,
\begin{equation*}
\mathcal{P}_t \phi(x) = \int \phi(z) \mathrm{d}\mathcal{P}_t^*\delta_x(z).
\end{equation*}

\section{Background and preliminary results}\label{sec:background}
Let $\{y_i\}_{i=1}^M\subset\mathbb{R}^{d_x}$ be i.i.d samples from $p_{\text{data}}$, an unknown data distribution on $\mathbb{R}^{d_x}$. We define the population \gls*{mle} solution for our \gls*{ebm} $p_\theta: \mathbb{R}^{d_x} \times \mathbb{R}^{d_\theta} \to \mathbb{R}$
\begin{align*}
\bar{\theta}^\star_{\text{pop}} \in \argsup_{\theta \in \mathbb{R}^{d_\theta}} \mathbb{E}_{p_{\text{data}}}\left[\log p_\theta(Y) \right].
\end{align*}
Let $p^M_{\text{data}} = (1/M) \sum_{j=1}^M \delta_{y_j}$ be the empirical measure of the data, where $\delta_y$ is the Dirac measure at $y$. As we do not have access to $p_{\text{data}}$, we use the empirical measure $p^M_{\text{data}}$ to approximate the population \gls*{mle} loss, leading to the following empirical approximation:
\begin{align}\label{eq:mle_empirical}
\bar{\theta}^\star \in \argsup_{\theta \in \mathbb{R}^{d_\theta}} \mathbb{E}_{p^M_{\text{data}}}\left[\log p_\theta(Y) \right] = \argsup_{\theta \in \mathbb{R}^{d_\theta}} \frac{1}{M} \sum_{j=1}^M \log p_\theta(y_j).
\end{align}
Our foremost aim in this paper, is to develop methods to identify $\bar{\theta}^\star$, i.e., the empirical maximiser of the \gls*{mle} loss, which is an approximation of the population maximiser $\bar{\theta}^\star_{\text{pop}}$.

To proceed, we define the function $V:\mathbb{R}^{d_\theta} \to \mathbb{R}$ as the negative empirical log-likelihood
\begin{equation}\label{eq:loglike}
V(\theta) = -\frac{1}{M}\sum_{j=1}^M \log p_{\theta}(y_j) = \frac{1}{M}\sum_{j=1}^M E(\theta, y_j) + \log Z_{\theta}. 
\end{equation}
We observe that the gradient of the potential $V$ is given as
\begin{equation}\label{eq:mlegrad}
\nabla_\theta V(\theta) = - \int \nabla_\theta E(\theta, x) p_\theta(\mathrm{d}x) + \frac{1}{M}\sum_{j=1}^M \nabla_\theta E(\theta, y_j). 
\end{equation}
Note that Leibniz' rule may be applied in this case as both $\exp(-E(\theta, x))$ and $-\nabla_\theta E(\theta, x)$ are continuous in both $\theta$ and $x$ by assumption \ref{ass:poly}, introduced below. As mentioned before, the \gls*{cd} methods aim at implementing a gradient descent procedure which can be written as
\begin{align}\label{eq:gradient_descent_pcd}
\theta_{k+1} = \theta_k - \delta \nabla_\theta V(\theta_k),
\end{align}
for $\delta> 0$. However, as can be seen from \eqref{eq:mlegrad}, the first term of this gradient is often intractable, as it takes the form of an integral w.r.t. $p_\theta$. Classical \gls*{pcd} methods run particle-based Langevin dynamics on $p_\theta$ to estimate it (persistent across iterations, meaning that the dynamics are not restarted when $\theta$ is updated). More precisely, this results in a sampling scheme:
\begin{align*}
X_{k+1}^i = X_k^i - h \nabla_x E(\theta_k, X_k^i) + \sqrt{2h} \mathcal{N}(0, I)
\end{align*}
for $h>0$ and $i = 1, \ldots, N$. The particle set $\{X_k^i\}_{i=1}^N$ is then used to approximate the first term of the gradient in \eqref{eq:mlegrad}. In practice, the step-sizes $\delta$ and $h$ are tuned differently---which makes it nontrivial to develop a continuous-time framework.

To develop a continuous-time framework accounting for different time-scales (step-sizes) of sampling and optimisation, in this paper, we develop a multiscale \gls*{sde}. Specifically, we consider the following continuous time limit of the \gls*{pcd} algorithm
\begin{equation}
\begin{aligned}\label{eq:basesde}
    \mathrm{d} \theta^\varepsilon_t &= \frac{1}{N} \sum_{i=1}^N \left(\nabla_\theta E(\theta^\varepsilon_t, X_t^{i, \varepsilon}) - \frac{1}{M} \sum_{j=1}^M \nabla_\theta E(\theta^\varepsilon_t, y_j) \right) \mathrm{d}t + \sqrt{\frac{2}{N}} \mathrm{d} W^0_t, \\
    \mathrm{d} X_t^{i, \varepsilon} &= - \frac{1}{\varepsilon} \nabla_x E(\theta^\varepsilon_t, X_t^{i, \varepsilon}) \mathrm{d}t + \sqrt{\frac{2}{\varepsilon}} \mathrm{d} W^i_t,\qquad i\in\{1,\dots, N\},
\end{aligned}
\end{equation}
where $(W_t^0)_{t\geq 0}$ ad $(W_t^i)_{t\geq 0}$ for $i = 1, \ldots, N$ are independent Wiener processes in $\mathbb{R}^{d_\theta}$ and $\mathbb{R}^{d_x}$ respectively. We note here that the particles are assumed to be initialised independently of each other, conditioned on $\theta_0$.

\begin{remark}
We remark two important aspects of the \gls*{sde} introduced in \eqref{eq:basesde}. First, we point out the practical need of introducing $\varepsilon$ which arises from the need to model the time-scale separation between the $\theta$ and $x$ dynamics (which is induced by the different choices of $\delta$ and $h$ in practice). This makes our \gls*{sde} a faithful generalisation of the practical \gls*{pcd} algorithm. This also neatly connects our system to the averaging literature, as we will detail later. Second, the modification (adding noise) in $\theta$-dynamics makes the analysis of the system significantly easier in the non-convex setting as the stationary measure will concentrate on the minimisers, controlled by the inverse temperature \cite{Hwang}, taken here to be $N$,\footnote{This choice is quite a natural choice for our setting, as this scaling corresponds to a time-rescaling by an order of $1/N$ in the $\theta$-dynamics.} though this can in theory be chosen independently of the particle number.
\end{remark}

For notational convenience, we write now \eqref{eq:basesde} in a more compact form to derive our results. To do so, we first define the function $\bar{E}:\mathbb{R}^{d_\theta}\times\mathbb{R}^{Nd_x}\to\mathbb{R}$ as
\begin{equation*}
\bar{E}(\theta, z) = \sum_{i=1}^N \left(E(\theta, x^i) - \frac{1}{M}\sum_{j=1}^M E(\theta, y_j)\right),
\end{equation*}
where $z=(x^1, \dots, x^N)^\top$. Using this function, we can rewrite the \gls*{sde} in a more compact form as
\begin{equation}\label{eq:sde}
\begin{aligned}
\mathrm{d}\theta^\varepsilon_t &= \frac{1}{N}\nabla_\theta \bar{E}(\theta^\varepsilon_t, Z^\varepsilon_t)\mathrm{d}t + \sqrt{\frac{2}{N}} \mathrm{d}W_t^\theta\\
\mathrm{d}Z^\varepsilon_t &= -\frac{1}{\varepsilon}\nabla_z \bar{E}(\theta^\varepsilon_t, Z^\varepsilon_t)\mathrm{d}t + \sqrt{\frac{2}{\varepsilon}}\mathrm{d}W_t^z.
\end{aligned}
\end{equation}
where $Z^\varepsilon_t = (X_t^{1, \varepsilon}, \ldots, X_t^{N, \varepsilon})\in\mathbb{R}^{Nd_x}$ and $W_t^\theta$ and $W_t^z$ are $\mathbb{R}^{d_\theta}$ and $\mathbb{R}^{Nd_x}$ dimensional independent Brownian motions. The infinitesimal generator of this system is given as
\begin{align}
\mathcal{G}^\varepsilon &= \mathcal{G}_\theta + \frac{1}{\varepsilon}\mathcal{G}_z
\end{align}
where
\begin{align}
\mathcal{G}_\theta = \frac{1}{N}\langle\nabla_\theta \bar{E}, \nabla_\theta\rangle + \frac{1}{N} \Delta_\theta, \quad & \quad \mathcal{G}_z = -\langle\nabla_z \bar{E}, \nabla_z\rangle + \Delta_z.
\end{align}
Note that all these generators are understood to act on functions over $\mathbb{R}^{d_\theta}\times \mathbb{R}^{Nd_x}$, where the dimensions not accounted for by the partial gradient operators are understood to be mapped to zero. We also introduce the generator for each of the individual particles $\mathcal{G}_x = -\langle \nabla_x E, \nabla_x \rangle + \Delta_x$.

We will be interested in $0< \varepsilon\ll 1$, as this is the range analogous to those shown in \cite{Tieleman_2008, Sutskever2010} to improve performance, and specifically the limit $\varepsilon\to 0$. Indeed, we will use the recent averaging results (see, e.g. \cite{Crisan_Ottobre_2024,schuh2024conditionsuniformtimeconvergence}) to show that, in the limit $\varepsilon\to 0$ the dynamics of the $\theta$-marginal behave according to the averaged dynamics
\begin{equation}\label{eq:averagedtheta}
\mathrm{d}\bar{\theta}_t = \frac{1}{N} \int \nabla_\theta\bar{E}(\bar{\theta}_t, z) p_{\bar{\theta}_t}^{\otimes N}(\mathrm{d}z)\mathrm{d}t + \sqrt{\frac{2}{N}}\mathrm{d}W_t^\theta,
\end{equation}
Written in another way, this results in an averaged dynamics that globally minimises $V$, which can be written as
\begin{align}\label{eq:global_langevin_mle}
\mathrm{d} \bar{\theta}_t &= - \nabla_\theta V(\bar{\theta}_t) \mathrm{d}t + \sqrt{\frac{2}{N}} \mathrm{d}W_t^\theta.
\end{align}
It is well-known that, for large $N$, the Langevin-dynamics of type \eqref{eq:global_langevin_mle} minimises $V$ globally under weak conditions \cite{Hwang,raginsky2017,zhang2023nonasymptotic}. This connects our framework to the classical \gls*{pcd} procedures, e.g. as summarised in eq.~\eqref{eq:gradient_descent_pcd}. Our averaged dynamics hence results in a global optimiser for the \gls*{mle} loss. Analysing the properties of the multiscale system that gives rise to this averaged dynamics and propose numerical integrators for it, are the goals of this paper.

To motivate this approach we will show how in a simple example these dynamics converge to the desired \gls*{mle} target and how the limits $\varepsilon \to 0$ and $N\to \infty$ lead to some desirable properties for our solution. For this we will consider a very simple tractable case: a Gaussian model, where the mean is parametrised.

\begin{example}\label{example1}
Consider the Gaussian case, $E(\theta, x)=\frac{1}{2}(\theta-x)^2$. We will show convergence to the \gls*{mle} for the case $d_\theta =d_x=1$, but the arguments easily extend to $d_\theta,d_x\in\mathbb{N}$.

In this case, \eqref{eq:sde}, corresponds to
\begin{align*}
\mathrm{d}Z_t = -A_\varepsilon Z_t\mathrm{d}t + b_\varepsilon\mathrm{d}t& + \sigma_\varepsilon \mathrm{d}W_t,\\
A_\varepsilon = \begin{pmatrix}
    0 & 1\\
    -\frac{1}{\varepsilon} & \frac{1}{\varepsilon}
\end{pmatrix}, \quad b_\varepsilon = \frac{1}{M} \sum_{j=1}^M&\begin{pmatrix}
     y_j\\
    0
\end{pmatrix} \quad \text{and}\quad \sigma_\varepsilon = \begin{pmatrix}
    \sqrt{\frac{2}N} \\
    \sqrt{\frac{2}{\varepsilon}}
\end{pmatrix},
\end{align*}
for a Wiener process $W_t$ in $\mathbb{R}^2$. Let us now denote the first moment $\mathbb{E}[Z_t]$ as $M_t$ and observe the following equality,
\begin{align*}
\frac{\mathrm{d}}{\mathrm{d}t} M_t = -A_\varepsilon M_t + b_\varepsilon.
\end{align*}
From this it is quite easy to observe that the the first moment of the stationary measure of this system is given by 
\[M_\infty = \lim_{t\to\infty} M_t = A_\varepsilon^{-1}b_\varepsilon = \begin{pmatrix}
    1 & -\varepsilon\\
    1 & 0
\end{pmatrix} b_\varepsilon = \frac{1}{M} \sum_{j=1}^M \begin{pmatrix}
     y_j\\
  y_j
\end{pmatrix}.\]
This is the \gls*{mle} for both $\theta$ and $x$, so we can observe that in the Gaussian case, the system converges to a stationary distribution centred on the \gls*{mle}. Observe also that the the steady-state variance is given by,
\begin{equation*}
\Sigma_\infty = \lim_{t\to\infty} \Sigma_t = \lim_{t\to\infty} (\mathbb{E}[Z_t^\top Z_t] -\mathbb{E}[Z_t]^\top\mathbb{E}[Z_t]),   
\end{equation*} 
satisfying the following statement,
\[A_\varepsilon \Sigma_\infty + \Sigma_\infty A_\varepsilon^\top = \sigma_\varepsilon \cdot \sigma_\varepsilon^\top,\]
which follows from considering the time derivative of $\Sigma_t$ and observing that $\mathrm{d}/\mathrm{d}t \Sigma_\infty = 0$. This yields,
\[\Sigma_\infty = \begin{pmatrix}
    \varepsilon(\frac{1}N + 1) + \frac{1}N & \frac{1}N\\
    \frac{1}N & \frac{1}N + 1
\end{pmatrix}.\]
Let us now recall that the stationary measure of the system is given by the exponent of the drift (this is a classical result for Langevin dynamics, as found in \cite{Durmus2017} and others), so the stationary measure is a Gaussian measure with mean and variance given above.

We can observe some desirable properties in this case: as $\varepsilon\to 0$, the noise of the $x$-marginal remain unchanged and the $\theta$-marginal converges to a stationary measure with variance $1/N$; when we also let $N\to\infty$, we can observe that the stationary measure of the $\theta$-marginal concentrates around the \gls*{mle}. Indeed, we observe that, compared to the averaged system, the $\theta$-marginal has variance that differs from the averaged dynamics by the constant $\varepsilon\left(\frac{1}{N}+1\right)$, a factor of $O(\varepsilon)$.
\end{example}

\subsection{Assumptions}
We introduce a series of assumptions that will enable us to have strong solutions and convergence to a stationary measure for our averaged and ``frozen'' \gls*{sde}. Note that these assumptions are by no means minimal, but are common assumptions made in the averaging literature, in particular see \cite{Crisan_Ottobre_2024, Crisan_Ottobre_2016, schuh2024conditionsuniformtimeconvergence, Pardoux_Veretennikov_2001, Pardoux_Veretennikov_2003}, as well as in the ULA literature \cite{akyildiz2024multiscaleperspectivemaximummarginal, Eberle2016QuantitativeHT, akyildiz2023interacting, Durmus2018AnalysisOL}.

We introduce a ``dissipativity-type'' assumption for the energy function.
\begin{assumptionp}{$(\tilde{A}_\mu)$}\label{ass:dissx}
Suppose that for our choice of $E$, there exists a constant $\tilde{r}\in\mathbb{R}_+$ and $\tilde{b}:\mathbb{R}^{d_\theta} \to \mathbb{R}_+$, such that,
\[\langle\nabla_x E(\theta, x), x\rangle \geq \tilde{r}\|x\|^2 - \tilde{b}(\theta)\]
for all $\theta\in\mathbb{R}^{d_\theta}$, $x\in\mathbb{R}^{d_x}$ and $\tilde{b}(\theta)=O(\|\theta\|^2)$.
\end{assumptionp}
One notes that $\tilde{r}$ does not depend on $\theta$, but for our case this is equivalent to saying that the above inequality holds for $\tilde{b}(\theta)$ and $\tilde{r}(\theta)$, with a positive lower bound on $\tilde{r}(\theta)$. Next, we place the following assumption on the averaged energy function.
\begin{assumptionp}{$(\bar{A}_\mu)$}\label{ass:dissav}
Suppose that $E$ is such that there exist constants $\bar{r},\bar{b}\in \mathbb{R}_+$ that satisfy the following inequality,
\begin{equation*}
\frac{1}{N}\left\langle\int\nabla_\theta \bar{E}(\theta, z)p_\theta^{\otimes N}(\mathrm{d}z), \theta\right\rangle \leq -\bar{r} \|\theta\|^2 + \bar{b},
\end{equation*}
for all $\theta\in\mathbb{R}^{d_\theta}$ and $z\in \mathbb{R}^{Nd_x}$.
\end{assumptionp}
This result is equivalent to the dissipativity assumption on the potential $V$, $\langle \nabla V(\theta), \theta\rangle \geq \bar{r}\|\theta\|^2 - \bar{b}$.

To ensure globally uniform exponential contractivity of the gradients, we require two assumptions on the drifts of the ``frozen'' process and the averaged process. These following conditions on the drift can be heuristically understood to guarantee that there are no areas which are too ``flat'', even close to the origin.

\begin{assumptionp}{$(\tilde{A}_\kappa)$}\label{ass:driftfroz}
Suppose there exists a constant $\tilde{\kappa}\in\mathbb{R}_+$, such that the following drift condition is satisfied,
\begin{equation*}
\langle\zeta, \nabla^2_z \bar{E} \zeta\rangle + \Tr( \eta \nabla_z^3 \bar{E} \zeta) + 2 \Tr( \eta \nabla_z^2 \bar{E} \eta) +  \|\eta\|_F^2 \geq \tilde{\kappa} (\|\zeta\|^2 + \| \eta \|_F^2),
\end{equation*}
for all $\zeta\in \mathbb{R}^{Nd_x}$ and symmetric $\eta\in\mathbb{R}^{Nd_x\times Nd_x}$.
\end{assumptionp}

One may split this assumption into smaller components by applying Young's Inequality to the left-hand side. This argument modifies the equation in \ref{ass:driftav} to,
\begin{align*}
-\langle\zeta, \nabla_z^2 \bar{E} \zeta\rangle + \frac{1}{2}\|\nabla_z^3 \bar{E} \zeta\|_F^2 &\geq \tilde{\kappa}\|\zeta\|_F^2,\\
-2\Tr( \eta \nabla_z^2 \bar{E} \eta) - \frac{1}{2} \|\eta\|_F^2&\geq \tilde{\kappa}\|\eta\|_F^2.
\end{align*} 
Similarly one can use the same argument for the next assumption.

\begin{assumptionp}{$(\bar{A}_\kappa)$}\label{ass:driftav}
Suppose there exists a constant $\bar{\kappa}\in\mathbb{R}_+$, such that the following drift condition is satisfied, 
\begin{align*}
\left\langle \zeta, \nabla_\theta \int \frac{1}{N} \nabla_\theta \bar{E} p_\theta^{\otimes N}(\mathrm{d}z) \zeta \right\rangle + \Tr\left( \eta^\top  \nabla_\theta^2 \int \frac{1}{N}\nabla_\theta \bar{E}p_\theta^{\otimes N}(\mathrm{d}z) \zeta\right) +&\\ 2\Tr\left(\eta \nabla_\theta\int \frac{1}{N} \nabla_\theta \bar{E} p_\theta^{\otimes N}(\mathrm{d}z), \eta \right) - \frac{1}{N} \|\eta\|_F^2 &\leq -\bar{\kappa} (\|\zeta\|^2 + \|\eta\|_F^2),
\end{align*}
for all $\zeta\in\mathbb{R}^{d_\theta}$ and symmetric $\eta\in\mathbb{R}^{d_\theta\times d_\theta}$.
\end{assumptionp}

\begin{remark}
Let us observe that the assumptions placed on $E$ can be extended to $\bar{E}$. \ref{ass:dissx} follows from observing that $\nabla_z\bar{E}=(\nabla_x E, \dots, \nabla_x E)^\top$. It is similarly trivial to see that $\bar{E}$ satisfies \ref{ass:poly}.
\end{remark}

\begin{remark}
Note that the assumptions above are placed on the averaged drift. This is a practical choice made here for simplicity and to reflect the fact that we are interested in targeting the averaged regime, hence we are making assumptions on the nature of this regime, as opposed to the slow-fast one. On the other hand, assumptions are often placed on the slow-fast drift, as typically this is the regime of interest, unlike our case (for examples of this see \cite{Crisan_Ottobre_2024, schuh2024conditionsuniformtimeconvergence}---in these works assumptions are placed on the slow-fast drift, to ensure that the averaged drift exhibits the properties outlined in \ref{ass:dissav} and \ref{ass:driftav}, which we assume here).
\end{remark}

To control the growth behaviour of functions, we will need to introduce the following semi-norm on the space of functions with polynomial growth (see \cite{Crisan_Ottobre_2024} for details)
\begin{equation*}
|\phi|_{m_\theta,m_x} = \sup_{\theta, x} \frac{\| \phi(\theta,z)\|}{1+\|\theta\|^{m_\theta} + \|z\|^{m_x}}.
\end{equation*}
We will be interested in considering functions, which have bounded gradients in this semi-norm. In other words, we consider functions $\phi$ such that there exist positive constants $m_\theta,m_x\in\mathbb{Z}^+$, such that
\begin{equation*}
\|\phi\|_{m_\theta,m_x} = |\phi|_{m_\theta,m_x} + |\nabla\phi|_{m_\theta,m_x} <\infty.
\end{equation*}
Indeed, for fixed $m_\theta$ and $m_x$, we denote the space of $n$ times differentiable functions, with gradients bounded in this semi-norm, as being in the set $C^n_{m_\theta, m_x}$, in particular 
\begin{equation*}
C^n_{m_\theta,m_x} = \{\phi\in C^n: |\nabla^i\phi|_{m_\theta,m_x} <\infty,\, \forall i\in[n]\}.
\end{equation*}

\begin{assumptionp}{$(A_p)$}\label{ass:poly}
Suppose that $\nabla E$ is in $C^2_{m_\theta,m_x}$.
\end{assumptionp}

This assumption will be used to ensure that the system averages as one would expect (see \cite{Pavliotis2014Stochproc} for details) and will be used for our analysis of the discrepancy between the averaged solutions and the slow-fast solutions.

\begin{example}
We now verify with an example, the applicability of our assumptions. It is easy to see from Example~\ref{example1} that our assumptions are compatible with the Gaussian case, so we consider a slightly more complex model.

Let us consider the Mixture of Gaussians (MoG), given by
\begin{equation*}
p_\theta(\mathrm{d}x) = \sum_{i=1}^N w_i e^{-\frac{(\theta_i-x)^2}{2c_i^2}}\mathrm{d}x,
\end{equation*}
where $w_i, c_i, \mu_i\in\mathbb{R}_+$ and $w_i$ is such that $\int p_\theta(\mathrm{d}x) = 1$. Note that this model is simply the linear combination of $N$ weighted Gaussians with diagonal only covariance matrices.

Now observe that the negative log-likelihood is given as,
\begin{equation*}
V(\theta) = -\frac{1}{M} \sum_{j=1}^M \log \sum_{i=1}^N w_i e^{-\frac{(\theta_i -y_j)^2}{2c_i^2}} + \log Z_\theta,
\end{equation*}
hence we obtain the drift terms,
\begin{align*}
\nabla_{\theta_i} \bar{E}(\theta, x) =& \nabla_{\theta_i} E(\theta, x) - \frac{1}{M} \sum_{j=1}^M \nabla_{\theta_i}E(\theta, y_j)\\
=& \frac{x-\theta_i}{c_i^2} \lambda_i(\theta, x) - \frac{1}{M} \sum_{j=1}^M \frac{y_j-\theta_i}{c_i^2} \lambda_i(\theta, y_j),\\
-\nabla_x \bar{E}(\theta, x) =& -\nabla_x E(\theta,x)\\
=& \sum_{i=1}^N \frac{\theta_i-x}{c_i^2} \lambda_i(\theta, x),
\end{align*}
where,
\begin{equation*}
\lambda_i(\theta,x) = \frac{w_i e^{-\frac{(\theta_i-x)^2}{2c_i^2}}}{\sum_{j=1}^N w_j e^{-\frac{(\theta_j-x)^2}{2c_j^2}}}.
\end{equation*}
By considering the maximisers of $\theta_i/c_i^2$ and $c_i^{-2}$, we can observe that \ref{ass:dissx} is satisfied. Now we recall that in this case the averaged drift is given as,
\begin{equation*}
\int\nabla_\theta\bar{E}(\theta, x)p_\theta(\mathrm{d}x) = -\frac{1}{M}\sum_{j=1}^M \frac{\theta_i-y_j}{c_i^2} \lambda_i(\theta, y_j),
\end{equation*}
hence, by a similar argument, one can show that \ref{ass:dissav} can also be shown to be satisfied.

Let us now observe that,
\begin{align*}
\sum_{i=1}^N \nabla_x \lambda_i(\theta, x) =& \sum_{i,j =1}^N \lambda_i(\theta, x) \lambda_j(\theta, x) \left(\frac{x-\theta_i}{c_i^2} - \frac{x- \theta_j}{c_j^2}\right),
\end{align*}
where we can consider only the cases $i\neq j$ for this sum. From this follows that,
\begin{align*}
\nabla_x^2\bar{E}(\theta, x) =& \sum_{i=1}^N -\frac{1}{c_i^2} \lambda_i(\theta,x) -\sum_{j=i+1}^N\lambda_i(\theta,x)\lambda_j(\theta,x)\left(\frac{x-\theta_j}{c_j^2} - \frac{x-\theta_i}{c_i^2}\right)^2.
\end{align*}
Hence, \ref{ass:driftfroz} is satisfied, by Young's inequality. By an identical argument one can obtain the same result for the averaged regime to satisfy \ref{ass:driftav}. 
\end{example}

\section{Main Results}
The goal of this paper is to characterise the difference in behaviour between numerical schemes based on \gls*{pcd}, and the \gls*{mle} target dynamics. In particular, we are interested in obtaining explicit bounds, based on the bounds from our assumptions. The error between $\theta_t^\varepsilon$ and its averaged counterpart $\bar{\theta}_t$ and the error between $\theta^\varepsilon_t$ and its numerical integrators can combined to obtain the difference between a large class of \glspl*{pcd}-like schemes and the \gls*{mle} target flow.

To approach this problem we look to some new results presented in \cite{Crisan_Ottobre_2024}, allowing for \gls*{uit}, order $\varepsilon$, control over the difference in moments between the slow-fast system \eqref{eq:basesde} and the averaged system \eqref{eq:averagedtheta}. Broadly speaking, the result obtained in \cite{Crisan_Ottobre_2024} is,
\begin{equation*}
\|\mathcal{P}_t^\varepsilon f-\bar{\mathcal{P}}_t f \|\leq \varepsilon C,
\end{equation*}
over all $t>0$, over a suitable class of functions $f$. These novel results can be adapted to establish explicit bounds between the two systems at each time $t$ and hence, characterise the difference in behaviour of the two systems from short time-scales and in the limit $t\to\infty$. To bound the \gls*{pcd} error, we extend these \gls*{uit} bounds to numerical integrators.

\section{The Poisson Equation}\label{sec:poisson}

To study the dynamics of the multi-scale system \eqref{eq:sde}, a common approach is to use the Poisson equation of the fast dynamics\footnote{The solution to the Poisson problem helps characterise the difference between the $\theta$ marginal of the slow-fast system \eqref{eq:sde} and the averaged dynamics of \eqref{eq:averagedtheta}, see \cite{Pardoux_Veretennikov_2001} and \cite{Pardoux_Veretennikov_2003} for a more general treatment of the problem.} and, of particular interest to us, this approach has lead to \gls*{uit} results for such systems \cite{Crisan_Ottobre_2024, akyildiz2024multiscaleperspectivemaximummarginal}. We will now present the problem and results regarding the solutions thereof.

Let $\Phi:\mathbb{R}^{d_\theta}\times\mathbb{R}^{Nd_x}\to\mathbb{R}^{d_\theta}$ be the solution to the Poisson equation, given as
\begin{equation}\label{eq:poissoneq}
(\mathcal{G}_z \Phi)(\theta, z) = \frac{1}{N}\left(\nabla_\theta \bar{E}(\theta, z) - \int\nabla_\theta \bar{E}(\theta, w) p_\theta^{\otimes N} (\mathrm{d}w)\right).
\end{equation}
Where $\mathcal{G}_z$ is the generator of the $x$ particles for a fixed choice of $\theta$. Indeed, to study the behaviour of this system, we will be interested in looking at the ``frozen'' $x$ dynamics. In other words, the dynamics generated by the infinitesimal generator $\mathcal{G}_z$, or the \gls*{sde}
\begin{equation}\label{eq:frozensde}
\begin{aligned}
\tilde{\theta}_t &= \theta\\
\mathrm{d}\tilde{Z}_t &= -\nabla_z \bar{E}(\tilde{\theta}_t, \tilde{Z}_t) \mathrm{d}t + \sqrt{2}\mathrm{d}W_t^1,
\end{aligned}
\end{equation}
where the process is initialised at $(\tilde{\theta}_0, \tilde{Z}_0) = (\theta, z)$. Note that this \gls*{sde} leaves the distribution $p_\theta^{\otimes N}$ invariant. Further, we will be interested in the behaviour of the Markov semi-group induced by this ``frozen'' process, which we denote as, $\widetilde{\mathcal{P}}_t$ with initialisation $(\theta, z)$. We similarly define the semi-group $\mathcal{P}_t^\varepsilon$ associated to \eqref{eq:sde} and $\bar{\mathcal{P}}_t$ associated to the averaged \gls*{sde} \eqref{eq:averagedtheta}.

\begin{lemma}
Let us suppose that, \ref{ass:dissx}, \ref{ass:dissav} and \ref{ass:poly} hold for our system \eqref{eq:sde}, generating the semi-group $\widetilde{\mathcal{P}}$. Then, $\Phi$ given by,
\begin{equation}\label{eq:solution}
\Phi(\theta, z) = - \frac{1}{N} \int_0^\infty \widetilde{\mathcal{P}}_s\left(\nabla_\theta \bar{E}(\theta, z) - \int \nabla_\theta \bar{E}(\theta, w)p_\theta^{\otimes N}(\mathrm{d}w)\right)\mathrm{d}s
\end{equation}
is of polynomial order in both $\theta$ and $z$, and is the unique solution to \eqref{eq:poissoneq}.
\end{lemma}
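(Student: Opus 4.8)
The plan is to verify that the candidate $\Phi$ in \eqref{eq:solution} is well-defined (the time integral converges), that it solves the Poisson equation \eqref{eq:poissoneq}, that it has polynomial growth, and finally that it is the unique such solution. First I would establish the convergence of the integral. Write $g(\theta,z) = \tfrac1N\bigl(\nabla_\theta\bar E(\theta,z) - \int \nabla_\theta\bar E(\theta,w)p_\theta^{\otimes N}(\mathrm dw)\bigr)$, so that $g$ has zero mean against the invariant measure $p_\theta^{\otimes N}$ of the frozen dynamics \eqref{eq:frozensde} for each fixed $\theta$. By \ref{ass:poly} we have $\nabla_\theta\bar E \in C^2_{m_\theta,m_x}$, so $g$ is of polynomial growth in $z$ (uniformly in $\theta$ up to a polynomial factor). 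The key estimate is exponential decay: using the dissipativity \ref{ass:dissx} (extended to $\bar E$ via the remark that $\nabla_z\bar E=(\nabla_xE,\dots,\nabla_xE)^\top$), the frozen process is geometrically ergodic, and — because $g$ integrates to zero against $p_\theta^{\otimes N}$ — one gets $\|\widetilde{\mathcal P}_s g(\theta,z)\| \le C e^{-\lambda s}(1+\|\theta\|^{m_\theta'}+\|z\|^{m_x'})$ for constants $C,\lambda>0$ and some polynomial degrees. I would obtain this either by a synchronous/reflection coupling of two copies of \eqref{eq:frozensde} started at $z$ and at a draw from $p_\theta^{\otimes N}$ (contractivity of the coupling distance under \ref{ass:driftfroz}), combined with moment bounds on $\widetilde{Z}_s$ from \ref{ass:dissx} via a Lyapunov/Grönwall argument, or by quoting the corresponding decay estimate from \cite{Crisan_Ottobre_2024}. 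Integrating this bound in $s$ over $[0,\infty)$ yields both convergence of \eqref{eq:solution} and the claimed polynomial growth of $\Phi$, with an explicit degree and constant in terms of $\tilde r$, the Lyapunov constants, and $\bar\kappa,\tilde\kappa$.

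Next I would check that $\Phi$ solves \eqref{eq:poissoneq}. The formal computation is the standard one: $\mathcal G_z \Phi = -\tfrac1N\int_0^\infty \mathcal G_z \widetilde{\mathcal P}_s g\,\mathrm ds = -\tfrac1N\int_0^\infty \tfrac{\mathrm d}{\mathrm ds}\widetilde{\mathcal P}_s g\,\mathrm ds = -\tfrac1N\bigl(\lim_{s\to\infty}\widetilde{\mathcal P}_s g - \widetilde{\mathcal P}_0 g\bigr) = \tfrac1N g = $ the right-hand side of \eqref{eq:poissoneq}, using $\widetilde{\mathcal P}_0 g = g$ and $\widetilde{\mathcal P}_s g \to 0$ as $s\to\infty$ (ergodicity plus zero mean). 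To make this rigorous I need: (i) that $\widetilde{\mathcal P}_s g$ lies in the domain of $\mathcal G_z$ with $\partial_s\widetilde{\mathcal P}_s g = \mathcal G_z\widetilde{\mathcal P}_s g = \widetilde{\mathcal P}_s\mathcal G_z g$, which follows from the regularity provided by \ref{ass:poly} ($\nabla E\in C^2$) and smoothness of the semigroup; (ii) justification of interchanging $\mathcal G_z$ with the $\mathrm ds$-integral, which follows from the uniform-in-$s$ exponential bounds on $\widetilde{\mathcal P}_s g$ and its first two spatial derivatives — here one needs decay estimates not just for $\widetilde{\mathcal P}_s g$ but for $\nabla_z\widetilde{\mathcal P}_s g$ and $\nabla^2_z\widetilde{\mathcal P}_s g$, which is exactly what \ref{ass:driftfroz} is designed to give (contractivity at the level of first and second derivatives of the flow). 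I would state the derivative decay as a sublemma and prove it by differentiating the SDE \eqref{eq:frozensde} in its initial condition and running a Grönwall estimate on the Jacobian and Hessian processes using \ref{ass:driftfroz}.

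For uniqueness, suppose $\Phi_1,\Phi_2$ both solve \eqref{eq:poissoneq} with polynomial growth; then $\Psi=\Phi_1-\Phi_2$ satisfies $\mathcal G_z\Psi = 0$ with polynomial growth. Applying Itô's formula to $\Psi(\theta,\widetilde{Z}_s)$ along the frozen dynamics gives $\widetilde{\mathcal P}_s\Psi = \Psi$ for all $s$ (the drift term vanishes and the martingale part has zero expectation, using the polynomial growth of $\Psi$ together with the moment bounds on $\widetilde{Z}_s$ to control the stochastic integral); letting $s\to\infty$ and using ergodicity of $\widetilde{\mathcal P}$ toward $p_\theta^{\otimes N}$ gives $\Psi(\theta,z) = \int\Psi(\theta,w)p_\theta^{\otimes N}(\mathrm dw)$, a function of $\theta$ only. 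This is the usual situation: $\Phi$ is unique only up to an additive function of $\theta$ (equivalently, up to elements of the kernel of $\mathcal G_z$), and the statement is to be read with the normalisation that $\Phi$ has zero mean against $p_\theta^{\otimes N}$ — which the formula \eqref{eq:solution} manifestly satisfies since $\widetilde{\mathcal P}_s$ preserves that mean and $g$ has zero mean. I would make this normalisation explicit in the write-up.

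The main obstacle is step two's derivative estimates: getting the \emph{uniform-in-time} exponential decay of $\nabla_z\widetilde{\mathcal P}_s g$ and $\nabla^2_z\widetilde{\mathcal P}_s g$ with polynomial-in-$(\theta,z)$ prefactors, since this is what legitimises differentiating under the integral sign and hence the whole argument; the delicate point is that the prefactor must not grow in $s$, which forces the use of the global contractivity built into \ref{ass:driftfroz} rather than a mere local estimate, and one must track how the polynomial degrees $m_\theta,m_x$ propagate through the first and second variational equations.
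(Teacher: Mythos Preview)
Your proposal is essentially correct and spells out in detail what the paper dispatches in two lines: the paper simply invokes Lemma~5.1 of \cite{Crisan_Ottobre_2024} for existence and uniqueness, together with the polynomial moment bounds (the paper's Lemma~\ref{lem:boundedmom}) and \ref{ass:poly} to handle the well-posedness and polynomial growth of the averaged drift. Your route --- exponential decay of $\widetilde{\mathcal P}_s g$ giving convergence and polynomial growth of the integral, the fundamental-theorem computation $\mathcal G_z\Phi = g$, and the kernel argument for uniqueness --- is exactly the content of that cited lemma, so the two approaches coincide in substance.

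Two remarks worth making. First, you invoke \ref{ass:driftfroz} both for the ergodicity step and for the derivative decay needed to pass $\mathcal G_z$ through the $\mathrm ds$-integral, but \ref{ass:driftfroz} is \emph{not} among the stated hypotheses of this lemma (only \ref{ass:dissx}, \ref{ass:dissav}, \ref{ass:poly} are). The exponential decay of $\widetilde{\mathcal P}_s g$ itself follows from dissipativity alone via a Harris/Hairer argument (this is the paper's Thm.~\ref{thm:frozconv} and Lemma~\ref{lem:SES}, which do not use \ref{ass:driftfroz}); and to verify $\mathcal G_z\Phi = g$ you can sidestep derivative estimates entirely by computing $\tfrac{1}{t}(\widetilde{\mathcal P}_t\Phi - \Phi) = \tfrac{1}{t}\int_0^t \widetilde{\mathcal P}_s g\,\mathrm ds \to g$ as $t\downarrow 0$, which places $\Phi$ in the domain of $\mathcal G_z$ directly. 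The paper defers the genuine derivative estimates on $\Phi$ (which \emph{do} need \ref{ass:driftfroz}) to later results. Second, your observation that uniqueness holds only up to an additive function of $\theta$, fixed by the zero-mean normalisation implicit in \eqref{eq:solution}, is correct and sharper than the paper's statement, which leaves this implicit.
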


\begin{proof}
The proof of the well-posedness and polynomial growth of the averaged $\int\nabla_\theta \bar{E}(\theta, z) p_\theta^{\otimes N} (\mathrm{d}z)$ follows from \ref{ass:poly} and the bounded polynomial moments found in Lemma~\ref{lem:boundedmom}. To show existence and uniqueness of the solution \eqref{eq:solution} we use Lemma~5.1 from \cite{Crisan_Ottobre_2024}, which is satisfied under assumptions \ref{ass:dissx}, \ref{ass:dissav} and \ref{ass:poly}.
\end{proof}

For elliptic PDEs this is a classic solution. Under this perspective, properties of $\Phi$ are equivalent to strong exponential stability of the semi-groups and derivatives thereof. Hence, we now turn our attention to the semi-group $\widetilde{\mathcal{P}}$ and its derivatives. The next results establish a bound on the moments of the semi-group $\widetilde{\mathcal{P}}_t$ for all $t$, which in the limit $t\to \infty$, gives us bounds on the moments of the stationary distribution $p_\theta^{\otimes N}$.

\begin{lemma}\label{lem:genxbound}
Given \ref{ass:dissx}, the generator $\mathcal{G}_x$ satisfies,
\begin{equation*}
\mathcal{G}_x\|x\|^2 \leq \tilde{c}_\theta - \tilde{r} \|x\|^2,
\end{equation*}
for all $x\in\mathbb{R}^{d_x}$ with $\tilde{c}_\theta = 2(\tilde{b}(\theta) + d_x)$.
\end{lemma}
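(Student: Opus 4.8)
The plan is to compute $\mathcal{G}_x \|x\|^2$ directly from the definition of the generator and then apply the dissipativity assumption \ref{ass:dissx}. Recall that $\mathcal{G}_x = -\langle \nabla_x E, \nabla_x\rangle + \Delta_x$, so we first need $\nabla_x \|x\|^2$ and $\Delta_x \|x\|^2$. These are elementary: $\nabla_x\|x\|^2 = 2x$ and $\Delta_x \|x\|^2 = 2 d_x$. Substituting gives
\begin{equation*}
\mathcal{G}_x\|x\|^2 = -\langle \nabla_x E(\theta,x), 2x\rangle + 2 d_x = -2\langle \nabla_x E(\theta,x), x\rangle + 2 d_x.
\end{equation*}

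Next I would invoke \ref{ass:dissx}, which states $\langle \nabla_x E(\theta,x), x\rangle \geq \tilde r \|x\|^2 - \tilde b(\theta)$. Negating and multiplying by $2$ yields $-2\langle \nabla_x E(\theta,x),x\rangle \leq -2\tilde r\|x\|^2 + 2\tilde b(\theta)$. Combining with the computation above,
\begin{equation*}
\mathcal{G}_x\|x\|^2 \leq -2\tilde r\|x\|^2 + 2\tilde b(\theta) + 2 d_x = \tilde c_\theta - 2\tilde r\|x\|^2,
\end{equation*}
with $\tilde c_\theta := 2(\tilde b(\theta) + d_x)$. This matches the claimed bound except that I obtain a contraction rate $2\tilde r$ rather than $\tilde r$; since $2\tilde r \geq \tilde r$ for $\tilde r \in \mathbb{R}_+$, the stated inequality follows a fortiori (one could equally just relabel the constant). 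I would phrase the final line to land exactly on the statement as written.

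Honestly, there is no real obstacle here — the lemma is a one-line consequence of the definition of $\mathcal{G}_x$ and assumption \ref{ass:dissx}. The only minor point worth a sentence is the role of \ref{ass:poly}: it guarantees $E(\theta,\cdot) \in C^2$ (indeed $\nabla E \in C^2_{m_\theta,m_x}$), so that $\mathcal{G}_x$ is well-defined on the test function $x\mapsto\|x\|^2$ and the manipulation above is rigorous; I would note this in passing. If one wanted to be careful about the interpretation of $\mathcal{G}_x$ acting on an unbounded function, a standard localisation/stopping-time argument justifies it, but for the purpose of this lemma the formal computation is what is needed, and that is immediate.
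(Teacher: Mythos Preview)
Your proof is correct and essentially identical to the paper's: both compute $\mathcal{G}_x\|x\|^2 = -2\langle\nabla_x E(\theta,x), x\rangle + 2d_x$ directly and then apply \ref{ass:dissx}, obtaining $-2\tilde r\|x\|^2 + 2\tilde b(\theta) + 2d_x$, from which the stated bound with the weaker rate $\tilde r$ follows. Your remark about the factor $2\tilde r$ versus $\tilde r$ is exactly how the paper handles it as well (``from which the desired result follows'').
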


\begin{proof}
Observe that, given \ref{ass:dissx}, we have,
\begin{align*}
\mathcal{G}_x\|x\|^2=& -\langle \nabla_x \bar{E}(\theta,x), 2x\rangle + 2 d_x \\
\leq & -2\tilde{r} \|x\|^2 + 2\tilde{b}(\theta) + 2d_x,
\end{align*}
from which the desired result follows.
\end{proof}

\begin{lemma}\label{lem:boundedmom}
For the semi-group of the ``frozen'' process \eqref{eq:frozensde}, satisfying \ref{ass:dissx},
\begin{equation}\label{eq:mombound}
\widetilde{\mathcal{P}}_t \|z\|^k \leq e^{-\tilde{\alpha}_k t} \|z\|^k + \tilde{\gamma}^\theta_k,
\end{equation}
with,
\begin{equation*}
\tilde{\alpha}_k = \frac{k\tilde{r}}{2}, \qquad \tilde{\gamma}_k^\theta = \left(\frac{2(N\tilde{b}(\theta) + d_z+k-2)}{\tilde{r}}\right)^\frac{k}{2}    ,
\end{equation*}
for all $z\in\mathbb{R}^{Nd_x}$, $\theta\in\mathbb{R}^{d_\theta}$ (recall that the semi-group $\widetilde{\mathcal{P}}$ depends on an initial choice of $\theta$), $t\geq 0$ and $k\geq 2$. For the same choices of parameters, it follows directly that,
\begin{equation*}
\mathbb{E}_{\tilde{z}\sim p_\theta^{\otimes N}}\|\tilde{z}\|^k \leq \tilde{\gamma}^\theta_k.
\end{equation*}
\end{lemma}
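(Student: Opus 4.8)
The plan is to derive a differential inequality for the evolution of $\widetilde{\mathcal{P}}_t\|z\|^k$ and then integrate it via a Grönwall-type argument. The first step is to compute the action of the frozen generator $\mathcal{G}_z$ on the test function $z\mapsto\|z\|^k$. Writing $\|z\|^k = (\|z\|^2)^{k/2}$ and using the chain rule for the diffusion generator $\mathcal{G}_z = -\langle\nabla_z\bar{E},\nabla_z\rangle + \Delta_z$, one gets a dissipative term coming from $-\langle\nabla_z\bar{E}(\theta,z),z\rangle$ — which, by the extension of \ref{ass:dissx} to $\bar{E}$ noted in the remark (since $\nabla_z\bar{E}=(\nabla_xE,\dots,\nabla_xE)^\top$, summing the per-particle bound yields $\langle\nabla_z\bar{E}(\theta,z),z\rangle\ge\tilde{r}\|z\|^2 - N\tilde{b}(\theta)$) — together with lower-order Laplacian contributions. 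Concretely, I expect
\[
\mathcal{G}_z\|z\|^k \le -k\tilde{r}\|z\|^k + k\big(N\tilde{b}(\theta) + d_z + k - 2\big)\|z\|^{k-2},
\]
the $d_z + k - 2$ term being the standard constant produced by $\Delta_z(\|z\|^2)^{k/2}$. This is essentially the $k$-th moment analogue of Lemma~\ref{lem:genxbound}, and the computation for $k=2$ there is the template.

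Next I would convert this pointwise generator bound into a bound on $\widetilde{\mathcal{P}}_t\|z\|^k$. Applying $\widetilde{\mathcal{P}}_t$ to both sides and using that $\widetilde{\mathcal{P}}_t$ is positivity-preserving together with $\partial_t\widetilde{\mathcal{P}}_t\|z\|^k = \widetilde{\mathcal{P}}_t\mathcal{G}_z\|z\|^k$, I get
\[
\frac{\mathrm{d}}{\mathrm{d}t}\widetilde{\mathcal{P}}_t\|z\|^k \le -k\tilde{r}\,\widetilde{\mathcal{P}}_t\|z\|^k + k\big(N\tilde{b}(\theta)+d_z+k-2\big)\widetilde{\mathcal{P}}_t\|z\|^{k-2}.
\]
To close this I would control the lower-order term $\widetilde{\mathcal{P}}_t\|z\|^{k-2}$ by $\widetilde{\mathcal{P}}_t\|z\|^k$ up to a constant — either by an interpolation/Young inequality of the form $c\,\|z\|^{k-2}\le \tfrac{k\tilde{r}}{2}\|z\|^k + C(c,k,\tilde{r})$ applied under $\widetilde{\mathcal{P}}_t$, or by an induction on even $k$ starting from the $k=2$ case (which is already handled by Lemma~\ref{lem:genxbound} after noting $\mathcal{G}_z$ restricted to a single particle is $\mathcal{G}_x$). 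The Young-inequality route gives directly $\tfrac{\mathrm{d}}{\mathrm{d}t}\widetilde{\mathcal{P}}_t\|z\|^k \le -\tfrac{k\tilde{r}}{2}\widetilde{\mathcal{P}}_t\|z\|^k + C^\theta_k$, and Grönwall then yields
\[
\widetilde{\mathcal{P}}_t\|z\|^k \le e^{-\tilde{\alpha}_k t}\|z\|^k + \frac{2C^\theta_k}{k\tilde{r}},
\]
with $\tilde{\alpha}_k = k\tilde{r}/2$; one then checks that the constant can be taken to be $\tilde{\gamma}^\theta_k = \big(2(N\tilde{b}(\theta)+d_z+k-2)/\tilde{r}\big)^{k/2}$ by tracking the exponents carefully in the Young step (the power $k/2$ is exactly what makes the stationary bound dimensionally consistent, matching the $k=2$ constant $\tilde{c}_\theta/\tilde{r}$ from Lemma~\ref{lem:genxbound}).

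Finally, the stationary statement follows by letting $t\to\infty$: since $p_\theta^{\otimes N}$ is invariant for \eqref{eq:frozensde}, $\mathbb{E}_{\tilde z\sim p_\theta^{\otimes N}}\|\tilde z\|^k = \widetilde{\mathcal{P}}_t(\mathbb{E}_{p_\theta^{\otimes N}}\|\cdot\|^k)$ is constant in $t$, and integrating the moment bound against $p_\theta^{\otimes N}$ and sending $t\to\infty$ kills the $e^{-\tilde\alpha_k t}$ term (finiteness of the moments on the right being guaranteed by \ref{ass:dissx} via the same differential inequality, so the interchange is justified), leaving $\mathbb{E}_{\tilde z\sim p_\theta^{\otimes N}}\|\tilde z\|^k \le \tilde{\gamma}^\theta_k$. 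The main obstacle I anticipate is purely bookkeeping: getting the constant in the Young/interpolation step to come out as the clean expression $\tilde{\gamma}^\theta_k$ with the stated exponent $k/2$ rather than some cruder bound — this requires choosing the Young split proportions as functions of $k$ and $\tilde{r}$ optimally, and is the one place where a naive estimate gives the wrong (though still valid-in-spirit) constant. A secondary technical point is justifying $\partial_t\widetilde{\mathcal{P}}_t\|z\|^k = \widetilde{\mathcal{P}}_t\mathcal{G}_z\|z\|^k$ and the interchange of limit and expectation rigorously for an unbounded test function, which is standard under \ref{ass:poly} and the dissipativity but should be flagged.
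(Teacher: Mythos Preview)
Your proposal is correct and follows essentially the same route as the paper: compute $\mathcal{G}_z\|z\|^k\le -k\tilde{r}\|z\|^k + k(N\tilde{b}(\theta)+d_z+k-2)\|z\|^{k-2}$, absorb the lower-order term via Young's inequality to get a clean differential inequality with rate $k\tilde r/2$, and integrate (the paper applies Young pointwise at the generator level before pushing through $\widetilde{\mathcal P}_t$, which is the same thing by positivity and linearity of the semigroup). The stationary bound and the constant $\tilde\gamma_k^\theta$ come out exactly as you describe.
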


\begin{proof}
Let us observe that by \ref{ass:dissx},
\begin{align*}
\mathcal{G}_z \|z\|^k =& -k \langle \nabla_z \bar{E}(\theta, z), z\rangle \|z\|^{k-2} + k(d_z +k-2)\|z\|^{k-2}\\
\leq & -k\tilde{r}\|z\|^k + k(N\tilde{b}(\theta) + d_z +k-2)\|z\|^{k-2}\\
\leq& -\frac{k\tilde{r}}{2}\|z\|^k + \frac{k\tilde{r}}{2}\left(\frac{2(N\tilde{b}(\theta) + d_z + k-2))}{\tilde{r}}\right)^\frac{k}{2},
\end{align*}
where the last line follows from Young's Inequality. Let us now note that $\partial_t \widetilde{\mathcal{P}}_t \|z\|^k = \widetilde{\mathcal{P}}_t\widetilde{\mathcal{G}}_z \|z\|^k$. By the positivity of the Markov semi-group and the result above,
\begin{align*}
\frac{\mathrm{d}}{\mathrm{d}t}\left(e^\frac{k\tilde{r}t}{2} \widetilde{\mathcal{P}}_t \|z\|^k\right) &= \left(\frac{k\tilde{r}}{2}\widetilde{\mathcal{P}}_t \|z\|^k + \widetilde{\mathcal{P}}_t \mathcal{G}_z \|z\|^k\right)e^\frac{k\tilde{r}t}{2}\\
&\leq \frac{k\tilde{r}}{2}\left(\frac{2(N\tilde{b}(\theta) + d_z+k-2)}{\tilde{r}}\right)^\frac{k}{2} e^\frac{k\tilde{r}t}{2}.
\end{align*}
Integrating both sides we obtain,
\begin{equation*}
\widetilde{\mathcal{P}}_t \|z\|^k \leq e^{-\frac{k\tilde{r}t}{2}} \|z\|^k + \left(\frac{2(N\tilde{b}(\theta)+d_z+k-2)}{\tilde{r}}\right)^\frac{k}{2}.
\end{equation*}
\end{proof}

\begin{theorem}\label{thm:frozconv}
Under \ref{ass:dissx} and \ref{ass:poly}, we obtain, for the pushforward $\widetilde{\mathcal{P}}^*_t$ of \eqref{eq:frozensde},
\begin{equation*}
W_2(\widetilde{\mathcal{P}}^*_t\mu^{\otimes N}, \widetilde{\mathcal{P}}^*_t\nu^{\otimes N}) \leq 4\sqrt{\frac{\tilde{c}_\theta(1 + \tilde{\gamma}^\theta_2)}{\tilde{r}}} e^{-\frac{\tilde{r}}{6}t} \sqrt{1+\mathbb{E}_{\mu^{\otimes N}} \|x\|^4 + \mathbb{E}_{\nu^{\otimes N}}\|x\|^4}
\end{equation*}
for all $\mu,\nu\in\mathscr{P}_4(\mathbb{R}^{d_x})$ and $\tilde{\gamma}^\theta$ defined in Lemma~\ref{lem:boundedmom}.
\end{theorem}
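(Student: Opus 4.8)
\emph{Proof plan.} The idea is to couple the frozen dynamics \eqref{eq:frozensde} started from $\mu^{\otimes N}$ and $\nu^{\otimes N}$, combining a reflection coupling with the Lyapunov estimate of Lemma~\ref{lem:genxbound}. Since $\nabla_z\bar E(\theta,z)=(\nabla_x E(\theta,x^1),\dots,\nabla_x E(\theta,x^N))$, the frozen system decouples into $N$ independent copies of the single-particle diffusion $\mathrm d\tilde X_t=-\nabla_x E(\theta,\tilde X_t)\,\mathrm dt+\sqrt 2\,\mathrm dW_t$ on $\mathbb R^{d_x}$, so I would couple each coordinate independently: reflection-couple $(\tilde X^i_t,\tilde X'^i_t)$ until their first meeting time $\tau^i$ and run them synchronously afterwards, and set $\tau=\max_i\tau^i$. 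Then, writing $\tilde Z_t=(\tilde X^1_t,\dots,\tilde X^N_t)$ and likewise $\tilde Z'_t$, Cauchy--Schwarz gives
\[
W_2^2(\widetilde{\mathcal P}^*_t\mu^{\otimes N},\widetilde{\mathcal P}^*_t\nu^{\otimes N})\le\mathbb E\|\tilde Z_t-\tilde Z'_t\|^2=\mathbb E\big[\|\tilde Z_t-\tilde Z'_t\|^2\,\mathbbm{1}_{\{t<\tau\}}\big]\le\big(\mathbb P(\tau>t)\big)^{1/2}\big(\mathbb E\|\tilde Z_t-\tilde Z'_t\|^4\big)^{1/2}.
\]
The fourth-moment factor is handled by $\mathbb E\|\tilde Z_t-\tilde Z'_t\|^4\le 8\,\mathbb E\|\tilde Z_t\|^4+8\,\mathbb E\|\tilde Z'_t\|^4$ and Lemma~\ref{lem:boundedmom} with $k=4$, which bounds $\mathbb E\|\tilde Z_t\|^4$ (resp.\ $\mathbb E\|\tilde Z'_t\|^4$) in terms of $\mathbb E_{\mu^{\otimes N}}\|x\|^4$ (resp.\ $\mathbb E_{\nu^{\otimes N}}\|x\|^4$) and a constant controlled by $\tilde r,\tilde c_\theta,\tilde\gamma^\theta_2$; by a union bound $\mathbb P(\tau>t)\le\sum_{i=1}^N\mathbb P(\tau^i>t)=N\,\mathbb P(\tau^1>t)$.

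It remains to bound the single-particle meeting-time tail $\mathbb P(\tau^1>t)$, which is the heart of the argument: a quantitative Harris-type estimate with two inputs. Inside a ball $B_R$, local Lipschitzness of $\nabla_x E$ — a consequence of \ref{ass:poly}, which makes $\nabla E$ of class $C^2$ — gives $\mathrm dr_t\le L_R\,r_t\,\mathrm dt+2\sqrt 2\,\mathrm d\beta_t$ for the radial gap $r_t=\|\tilde X^1_t-\tilde X'^1_t\|$ of the reflection coupling and a scalar Brownian motion $\beta_t$, so a one-dimensional comparison bounds the probability of meeting per unit time from below whenever both marginals lie in $B_R$. Outside $B_R$, the drift condition $\mathcal{G}_x\|x\|^2\le\tilde c_\theta-\tilde r\|x\|^2$ of Lemma~\ref{lem:genxbound} pushes each marginal back toward the origin at rate $\tilde r$, and a Dynkin/Gronwall argument as in Lemma~\ref{lem:boundedmom} controls the time spent outside. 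Choosing $R$ a suitable multiple of $\sqrt{\tilde c_\theta/\tilde r}$ so that $B_R$ carries most of the stationary mass and synthesising the two facts in the standard small-set/drift way (the hypotheses required being exactly \ref{ass:dissx} together with the local regularity of \ref{ass:poly}; alternatively, one may invoke the distorted-metric contraction of \cite{Eberle2016QuantitativeHT} and convert it to $W_2$) yields $\mathbb P(\tau^1>t)\le\kappa_\theta\,e^{-ct}$ with $c$ proportional to $\tilde r$. Substituting into the display, pulling $e^{-ct/2}$ out of the square root, and being deliberately generous with the absolute constants then reproduces the stated bound with rate $\tilde r/6$ and prefactor $4\sqrt{\tilde c_\theta(1+\tilde\gamma^\theta_2)/\tilde r}$.

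\emph{Main obstacle.} The delicate step is this quantitative Harris estimate: extracting an \emph{explicit} exponential meeting rate proportional to $\tilde r$, together with an explicit prefactor in $\tilde c_\theta,\tilde\gamma^\theta_2,\tilde r$, from only the dissipativity \ref{ass:dissx} and the local smoothness \ref{ass:poly}. Because there is no global or at-infinity convexity, a synchronous coupling alone does not contract, so one must carefully balance the reflection ball $B_R$ against the drift rate $\tilde r$ and the local Lipschitz constant $L_R$, tracking every constant through the one-dimensional comparison and the small-set/drift synthesis. A secondary and routine point is to check that the construction is uniform in the frozen parameter $\theta$ — the structure of the argument and $\tilde r$ do not depend on $\theta$, and all $\theta$-dependence sits in $\tilde c_\theta$ and the $\tilde\gamma^\theta_k$ — and that the rate does not deteriorate with $N$ after the coordinate-wise reduction.
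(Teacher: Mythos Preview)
Your strategy is sound and would produce a correct proof, but it is a genuinely different route from the paper's. The paper does \emph{not} build a coupling and meeting-time estimate by hand. Instead it introduces the weighted metric
\[
w(\mu,\nu)=\inf_{\Gamma\in\mathbf T(\mu,\nu)}\int (1\wedge\|x-x'\|)(1+\|x\|^2+\|x'\|^2)\,\Gamma(\mathrm dx,\mathrm dx')
\]
and applies the weak Harris theorem of Hairer (Thm.~4.4 in \cite{Hairer2011}) directly: the Lyapunov bound of Lemma~\ref{lem:genxbound} and the local regularity from \ref{ass:poly} are exactly the hypotheses needed there, and the output is a one-line contraction $w(\widetilde{\mathcal P}^*_t\mu,\widetilde{\mathcal P}^*_t\nu)\le (8\tilde c_\theta/\tilde r)\,e^{-\tilde r t/3}w(\mu,\nu)$ with explicit constants. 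The product structure is then handled by tensorising the optimal $w$-coupling and using Lemma~\ref{lem:boundedmom}, and finally one converts to $W_2$ via the elementary inequality $\|x-x'\|^2\le 2(1\wedge\|x-x'\|)(1+\|x\|^2+\|x'\|^2)$, giving $W_2\le\sqrt{2w}$.

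What the two approaches buy: the paper's route off-loads your entire ``main obstacle'' (balancing the reflection ball against the drift, tracking constants through a one-dimensional comparison) to \cite{Hairer2011}, and the algebraic conversion $W_2^2\lesssim w$ avoids your Cauchy--Schwarz splitting and the fourth-moment detour. Your approach is more self-contained and closer in spirit to \cite{Eberle2016QuantitativeHT}, but recovering the \emph{exact} prefactor $4\sqrt{\tilde c_\theta(1+\tilde\gamma^\theta_2)/\tilde r}$ and rate $\tilde r/6$ from a hand-made meeting-time bound would be laborious; ``being deliberately generous with the absolute constants'' is doing a lot of work there. Also note that your coordinate-wise reduction is cleaner if you bound $\mathbb E\|\tilde Z_t-\tilde Z'_t\|^2=\sum_i\mathbb E[\|\tilde X^i_t-\tilde X'^i_t\|^2\mathbbm{1}_{\{t<\tau^i\}}]$ directly rather than passing through $\tau=\max_i\tau^i$ and a union bound, which otherwise costs you an avoidable factor in $N$.
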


\begin{proof}
Define the distance for measures $\mu, \nu\in\mathscr{P}_4(\mathbb{R}^{d_x})$,
\begin{equation}\label{eq:semi-metric}
    w(\mu,\nu) = \inf_{\Gamma\in\mathbb{T}(\mu,\nu)} \int \int (1 \land \|x-x'\|)(1+\|x\|^2+ \|x'\|^2) \Gamma(\mathrm{d}x,\mathrm{d}x').
\end{equation}
Thanks to Lemma~\ref{lem:genxbound}, Lemma~\ref{lem:boundedmom}, \ref{ass:dissx} and \ref{ass:poly}, we may apply Thm.~4.4 in \cite{Hairer2011} to obtain,
\begin{equation}\label{eq:distcontract}
w(\widetilde{\mathcal{P}}_t^*\mu, \widetilde{\mathcal{P}}_t^*\nu) \leq \frac{8\tilde{c}_\theta}{\tilde{r}} e^{-\frac{\tilde{r}}{3}t} w(\mu,\nu).
\end{equation}

Now let us define $\Gamma_t$ as a coupling minimising $w(\widetilde{\mathcal{P}}^*_t\mu, \widetilde{\mathcal{P}}^*_t \nu)$ and observe that,
\begin{align*}
w(\widetilde{\mathcal{P}}^*_t \mu^{\otimes N}, \widetilde{\mathcal{P}}^*_t \nu^{\otimes N}) \leq& \int \sqrt{\sum_{i=1}^N (1\land \|x_i-x_i'\|^2)} \left(1+ \sum_{j=1}^N\|x_j\|^2 + \|x_j'\|^2\right) \Gamma_t^{\otimes N} (\mathrm{d}x, \mathrm{d}x')\\
\leq &  \sum_{i=1}^N w(\widetilde{\mathcal{P}}^*_t\mu, \widetilde{\mathcal{P}}^*_t \nu) ( 1+ \widetilde{\mathcal{P}}_t\|x\|^2 + \widetilde{\mathcal{P}}_t \|x'\|^2).
\end{align*}
Combining this with \eqref{eq:distcontract}, we obtain,
\begin{align*}
w(\widetilde{\mathcal{P}}^*_t \mu^{\otimes N}, \widetilde{\mathcal{P}}^*_t \nu^{\otimes N})
\leq & \frac{8\tilde{c}_\theta}{\tilde{r}} e^{-\frac{\tilde{r}}{3} t} w(\mu^{\otimes N}, \nu^{\otimes N}) ( 1+ \widetilde{\mathcal{P}}_t\|x\|^2 + \widetilde{\mathcal{P}}_t \|x'\|^2)\\
\leq & \frac{8\tilde{c}_\theta}{\tilde{r}} e^{-\frac{\tilde{r}}{3} t} w(\mu^{\otimes N}, \nu^{\otimes N}) (1+ 2\tilde{\gamma}^\theta_2 + \mathbb{E}_{\mu^{\otimes N}} \|x\|^2 + \mathbb{E}_{\nu^{\otimes N}} \|x\|^2),
\end{align*}
where the last line follows from Lemma~\ref{lem:boundedmom}. 

The result follows by observing that,
\begin{align*}
\|x-x'\|^2 \leq & 2(1 + \|x\|^2 + \|x'\|^2), &&\text{if } \|x-x'\|\geq 1,\\
\|x-x'\|^2\leq & 2(\|x-x'\|), && \text{if } \|x-x'\|<1.
\end{align*}
Hence, we obtain $W_2(\mu,\nu)\leq \sqrt{2w(\mu,\nu)}$ and $w(\mu,\nu)$ is in turn bounded above by $(1+\mathbb{E}_{\mu}\|x\|^2 + \mathbb{E}_\nu \|x\|^2)$.
\end{proof}

We now observe that we may establish the following bounds in terms of $\theta$ for the constants above,
\begin{align*}
|\tilde{\gamma}^\theta_k|_k \leq & \left(\frac{k}{\tilde{r}}\right)^\frac{k}{2} (1+|\tilde{b}|_2^\frac{k}{k-2})^{\frac{k}{2}-1},\\
|\tilde{c}|_2 \leq & 2(|\tilde{b}|_2 + d_x),
\end{align*}
where we recall from \ref{ass:dissx} that $|\tilde{b}|_2$ is bounded.

This observation, that all mononomials in $z$ are valid Lyapunov functions for our ``slow'' system will be exploited to show the Strong Exponential Stability both for any function in $C_{m_\theta,m_x}^2$ (see \ref{ass:poly}), but also for the gradients of the semi-group. Prior to this, we establish the following bounds that will be useful to show the stability result below.

\begin{lemma}\label{lem:Equiv}
Suppose $\phi\in C^1_m(\mathbb{R}^d)$ for $m \geq 1$. Then,
\begin{equation*}
\|\mathbb{E}_\mu \phi(x)-\mathbb{E}_\nu \phi(x)\|\leq \sqrt{3}|\nabla \phi|_m W_2(\mu,\nu)(1+\mathbb{E}_\mu [\|x\|^{2m}]^\frac{1}{2}+\mathbb{E}_\nu[\|x\|^{2m}]^\frac{1}{2}),
\end{equation*}
for measures $\mu,\nu\in\mathscr{P}_{2m}(\mathbb{R}^d)$.
\end{lemma}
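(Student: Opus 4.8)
The plan is to bound the difference in expectations by coupling $\mu$ and $\nu$ optimally in $W_2$ and controlling the increment of $\phi$ along the coupling by its gradient seminorm. First I would let $\Gamma$ be a coupling of $\mu$ and $\nu$ on $\mathbb{R}^d\times\mathbb{R}^d$, so that
\[
\|\mathbb{E}_\mu\phi(x)-\mathbb{E}_\nu\phi(x)\| = \left\| \int (\phi(x)-\phi(x'))\,\Gamma(\mathrm{d}x,\mathrm{d}x')\right\| \leq \int \|\phi(x)-\phi(x')\|\,\Gamma(\mathrm{d}x,\mathrm{d}x').
\]
Then I would use the fundamental theorem of calculus along the segment joining $x'$ to $x$, writing $\phi(x)-\phi(x') = \int_0^1 \nabla\phi(x'+s(x-x'))\,(x-x')\,\mathrm{d}s$, and apply Cauchy--Schwarz together with the definition of $|\nabla\phi|_m$ to get
\[
\|\phi(x)-\phi(x')\| \leq \|x-x'\|\int_0^1 |\nabla\phi|_m\bigl(1+\|x'+s(x-x')\|^m\bigr)\,\mathrm{d}s.
\]

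Next I would bound the interpolant: by convexity of $t\mapsto t^m$ (or simply $\|x'+s(x-x')\|\le \|x\|+\|x'\|$), one has $\|x'+s(x-x')\|^m \leq 2^{m-1}(\|x\|^m+\|x'\|^m)$ uniformly in $s\in[0,1]$, hence $1+\|x'+s(x-x')\|^m \leq 1+\|x\|^m+\|x'\|^m$ up to the combinatorial constant — here I would be a little careful to see whether the clean form $1+\mathbb{E}_\mu[\|x\|^{2m}]^{1/2}+\mathbb{E}_\nu[\|x\|^{2m}]^{1/2}$ the statement claims really drops out, or whether one should absorb the $2^{m-1}$ into the constant $\sqrt 3$; I suspect the intended reading is that $m$ is moderate or that the bound $(1+\|x\|^2+\|x'\|^2)^{?}$-type estimates are being used loosely, and this is the step most likely to need a small adjustment. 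Then integrating over $\Gamma$ and applying Cauchy--Schwarz in $L^2(\Gamma)$,
\[
\int \|x-x'\|\bigl(1+\|x\|^m+\|x'\|^m\bigr)\,\Gamma(\mathrm{d}x,\mathrm{d}x') \leq \left(\int\|x-x'\|^2\mathrm{d}\Gamma\right)^{1/2}\left(\int (1+\|x\|^m+\|x'\|^m)^2\mathrm{d}\Gamma\right)^{1/2}.
\]

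Finally I would take the infimum over couplings $\Gamma$ so the first factor becomes $W_2(\mu,\nu)$, and bound the second factor: expanding $(1+\|x\|^m+\|x'\|^m)^2 \leq 3(1+\|x\|^{2m}+\|x'\|^{2m})$ and using that the marginals of $\Gamma$ are $\mu$ and $\nu$ gives $\bigl(\int(1+\|x\|^m+\|x'\|^m)^2\mathrm{d}\Gamma\bigr)^{1/2}\leq \sqrt3\,(1+\mathbb{E}_\mu\|x\|^{2m}+\mathbb{E}_\nu\|x\|^{2m})^{1/2}\leq \sqrt3\,(1+\mathbb{E}_\mu[\|x\|^{2m}]^{1/2}+\mathbb{E}_\nu[\|x\|^{2m}]^{1/2})$, where the last step uses $\sqrt{a+b+c}\le\sqrt a+\sqrt b+\sqrt c$. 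Combining the two factors yields the claimed inequality with constant $\sqrt3\,|\nabla\phi|_m$. The only genuine subtlety — and the step I would double-check — is the bookkeeping of constants coming from the interpolation term $\|x'+s(x-x')\|^m$ and from expanding the square, to confirm that $\sqrt3$ is indeed the right constant rather than something $m$-dependent; everything else is routine.
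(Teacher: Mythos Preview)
Your proposal is correct and matches the paper's proof essentially step for step: couple, apply the fundamental theorem of calculus along the segment, bound $\|\nabla\phi\|$ by $|\nabla\phi|_m(1+\|s\|^m)$, then Cauchy--Schwarz in $L^2(\Gamma)$ and optimise over couplings. Your only worry---the $2^{m-1}$ factor from the interpolant---is unnecessary: on the segment $s=(1-t)x+tx'$ one has $\|s\|\le\max(\|x\|,\|x'\|)$ by convexity of the norm, hence $\|s\|^m\le\|x\|^m+\|x'\|^m$ directly, which is exactly the bound the paper uses and is why the clean $\sqrt{3}$ survives.
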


\begin{proof}
Consider an arbitrary coupling $\Gamma$ between $\mu$ and $\nu$. From \ref{ass:poly} and H\"{o}lder's Inequality follows that, 
\begin{align*}
\left\|\int \int_x^{x'} \nabla \phi(s) \mathrm{d}s \Gamma (\mathrm{d}x,\mathrm{d}x')\right\| \leq & \int \int_x^{x'} |\nabla \phi|_m (1+\|s\|^m)\mathrm{d}s  \Gamma(\mathrm{d}x,\mathrm{d}x')\\
\leq& |\nabla \phi|_{m}\int \|x-x'\| (1+\|x\|^m+ \|x'\|^m)\Gamma (\mathrm{d}x,\mathrm{d}x')\\
\leq& \sqrt{3}|\nabla \phi|_m \left(\int \|x-x'\|^2 \Gamma(\mathrm{d}x, \mathrm{d}x')\right)^\frac{1}{2} \times\\&(1+\mathbb{E}_\mu[\|x\|^{2m}]^\frac{1}{2}+\mathbb{E}_\nu[\|x\|^{2m}]^\frac{1}{2}).
\end{align*}
We now choose $\Gamma$ to be the coupling that minimises the $L_2$ distance of $\mu$ and $\nu$ to note that,
\begin{equation*}
\|\mathbb{E}_\mu \phi(x)-\mathbb{E}_\nu \phi(x)\| \leq \sqrt{3}|\nabla \phi|_m W_2(\mu,\nu)(1+\mathbb{E}_\mu[\|x\|^{2m}]^\frac{1}{2}+\mathbb{E}_\nu[\|x\|^{2m}]^\frac{1}{2})
\end{equation*}
and hence the desired result.

We note that in the proof above we have assumed that there is no dependence in $|\nabla\phi|_m$ on other parameters, but it is easy to see that an identical proof holds for any added constant.
\end{proof}

This result allows us to look at the problem locally, whilst we will use the moment bound convergence established in Lemma~\ref{lem:boundedmom} for the global convergence guarantee. Indeed, we can use this result to ``stitch'' together the results from Lemma~\ref{lem:boundedmom} and Thm.~\ref{thm:frozconv}.

\begin{lemma}\label{lem:SES}
Consider $\phi \in C^2_{m_\theta, m_x}(\mathbb{R}^{d_\theta+Nd_x}; \mathbb{R}^d)$ for some $d\geq 1$. Under the assumptions of Thm.~\ref{thm:frozconv}, we have that,
\begin{equation*}
\left\|\widetilde{\mathcal{P}}_t \phi(\theta, z) -\widetilde{\mathcal{P}}_\infty\phi(\theta, z') \right\| \leq 9\| \phi\|_{m_\theta,m_x} \sqrt{\frac{3\tilde{c}_\theta}{\tilde{r}}}(1+3\tilde{\gamma}^\theta_{m_x})^\frac{3}{2} e^{-\frac{\tilde{r}}{6}t} (1+\|\theta\|^{m_\theta}+\|z\|^{m_x}),
\end{equation*}
for all choices of $\theta\in\mathbb{R}^{d_\theta}$, $z\in\mathbb{R}^{Nd_x}$, $z'\in\mathbb{R}^{Nd_x}$.
\end{lemma}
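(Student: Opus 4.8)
The plan is to realise $\widetilde{\mathcal{P}}_\infty\phi(\theta,z')$ as an equilibrium average, rewrite the left‑hand side as a difference of two expectations transported by the frozen flow, and then ``stitch together'' the $W_2$‑contraction of Theorem~\ref{thm:frozconv}, the uniform‑in‑time moment bounds of Lemma~\ref{lem:boundedmom}, and the Lipschitz‑type estimate of Lemma~\ref{lem:Equiv}, exactly as the discussion preceding the statement anticipates.

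First, since the frozen SDE~\eqref{eq:frozensde} leaves $p_\theta^{\otimes N}$ invariant and, by Theorem~\ref{thm:frozconv}, is exponentially ergodic towards it, for every fixed $\theta$ the limit $\widetilde{\mathcal{P}}_\infty\phi(\theta,z') = \mathbb{E}_{\tilde z\sim p_\theta^{\otimes N}}[\phi(\theta,\tilde z)]$ exists and is independent of $z'$; in particular the left‑hand side does not depend on $z'$. Using the $\widetilde{\mathcal{P}}^*_t$‑invariance of $p_\theta^{\otimes N}$, I would write
\begin{equation*}
\widetilde{\mathcal{P}}_t\phi(\theta,z) - \widetilde{\mathcal{P}}_\infty\phi(\theta,z') = \mathbb{E}_{\widetilde{\mathcal{P}}^*_t\delta_z}[\phi(\theta,\cdot)] - \mathbb{E}_{\widetilde{\mathcal{P}}^*_t p_\theta^{\otimes N}}[\phi(\theta,\cdot)].
\end{equation*}
By Lemma~\ref{lem:boundedmom}, $\widetilde{\mathcal{P}}^*_t\delta_z$ (and trivially $p_\theta^{\otimes N}$) has finite moments of every order, so Lemma~\ref{lem:Equiv} applies to the slice $z\mapsto\phi(\theta,z)$, which for fixed $\theta$ lies in $C^1_{m_x}(\mathbb{R}^{Nd_x})$ with semi‑norm at most $\|\phi\|_{m_\theta,m_x}(1+\|\theta\|^{m_\theta})$ (using $1+\|\theta\|^{m_\theta}+\|z\|^{m_x}\le (1+\|\theta\|^{m_\theta})(1+\|z\|^{m_x})$ and allowing the semi‑norm constant to carry a $\theta$‑dependent factor, as noted after Lemma~\ref{lem:Equiv}).

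Applying Lemma~\ref{lem:Equiv} then bounds the displayed quantity by $\sqrt{3}\,|\nabla\phi|_{m_\theta,m_x}(1+\|\theta\|^{m_\theta})$ times $W_2(\widetilde{\mathcal{P}}^*_t\delta_z,\widetilde{\mathcal{P}}^*_t p_\theta^{\otimes N})$ times a moment factor. For the Wasserstein term I would invoke Theorem~\ref{thm:frozconv}: since the frozen dynamics is a product of $N$ independent single‑particle diffusions, both $\delta_z=\bigotimes_{i}\delta_{x^i}$ and $p_\theta^{\otimes N}$ are transported coordinate‑wise, so the contraction (whose proof controls $W_2$ through the semi‑metric $w$, whose weight is only quadratic in $\|x\|$, keeping the polynomial degree of this factor low) gives $W_2(\widetilde{\mathcal{P}}^*_t\delta_z,\widetilde{\mathcal{P}}^*_t p_\theta^{\otimes N}) \lesssim \sqrt{\tilde c_\theta/\tilde r}\,e^{-\tilde r t/6}\,(1+\|z\|^2+\tilde\gamma^\theta_2)^{1/2}$. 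For the moment factor, Lemma~\ref{lem:boundedmom} gives $\mathbb{E}_{\widetilde{\mathcal{P}}^*_t\delta_z}[\|z\|^{2m_x}]\le\|z\|^{2m_x}+\tilde\gamma^\theta_{2m_x}$ and $\mathbb{E}_{p_\theta^{\otimes N}}[\|z\|^{2m_x}]\le\tilde\gamma^\theta_{2m_x}$, hence that factor is at most $1+\|z\|^{m_x}+2(\tilde\gamma^\theta_{2m_x})^{1/2}$.

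Finally I would multiply the three pieces, use $\sqrt{a+b}\le\sqrt a+\sqrt b$ and $ab\le\tfrac12(a^2+b^2)$ to fold all cross terms into a single polynomial factor of the form $(1+\|\theta\|^{m_\theta}+\|z\|^{m_x})$ (enlarging $m_\theta,m_x$ if needed), keep the single exponential rate $e^{-\tilde r t/6}$ coming from the Wasserstein factor, bound $|\nabla\phi|_{m_\theta,m_x}\le\|\phi\|_{m_\theta,m_x}$, and majorise all $\tilde c_\theta,\tilde r,\tilde\gamma^\theta$‑dependent constants by $\sqrt{3\tilde c_\theta/\tilde r}\,(1+3\tilde\gamma^\theta_{m_x})^{3/2}$, absorbing the numerical prefactors into the constant $9$. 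The main obstacle is precisely this last bookkeeping: reconciling the exponent doubling inherent in Lemma~\ref{lem:Equiv} (which naturally produces $\|z\|^{2m_x}$ and $\tilde\gamma^\theta_{2m_x}$) and the extra $\|z\|$‑growth contributed by the Wasserstein factor with the clean polynomial and constant in the stated bound, which forces a number of deliberately crude majorisations and an implicit relabelling of $m_\theta,m_x$; conceptually the argument is just the standard ``$W_2$‑contraction $\times$ moment control'' chain.
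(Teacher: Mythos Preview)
Your overall strategy—apply Lemma~\ref{lem:Equiv} to the pair of measures $\widetilde{\mathcal{P}}^*_t\delta_z$ and $p_\theta^{\otimes N}$, then bound $W_2$ via Theorem~\ref{thm:frozconv} and the moment factor via Lemma~\ref{lem:boundedmom}—is the same chain the paper uses, but you are missing one structural ingredient, and it is not mere bookkeeping.

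If you apply Theorem~\ref{thm:frozconv} directly from time $0$, the right-hand side contributes a factor of order $(1+\|z\|^2)$ (from the $\sqrt{1+\mathbb{E}\|z\|^4}$ term). Multiplying this against the moment factor $(1+\|z\|^{m_x})$ produced by Lemma~\ref{lem:Equiv} yields a polynomial of degree $m_x+2$ in $z$, not $m_x$. You cannot ``relabel $m_\theta,m_x$'' here: the lemma is stated for $\phi\in C^2_{m_\theta,m_x}$ and claims a bound with exactly those exponents, and this is what is used downstream in Theorems~\ref{thm:SESDE}--\ref{thm:SESDE2} to control the growth of $\Phi$.

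The paper's device to avoid this is a time-splitting. One fixes a threshold $T=0\lor\tilde{\alpha}_k^{-1}\log(F/\tilde{\gamma}^\theta_k)$ (for the Lyapunov function $F=\|z\|^k+1+\|\theta\|^{m_\theta}$) and argues separately on $\{t\leq T\}$ and $\{t>T\}$. For $t\leq T$ one uses only the crude bound $\|\widetilde{\mathcal{P}}_t\phi(\theta,z)-\widetilde{\mathcal{P}}_t\phi(\theta,z')\|\leq |\phi|_{m_\theta,m_x}(\widetilde{\mathcal{P}}_tF(z)+\widetilde{\mathcal{P}}_tF(z'))$ together with the sharpened Lyapunov estimate $\widetilde{\mathcal{P}}_tF\leq 2e^{-\tilde{\alpha}_k t/2}F$ valid on this range; this already has the correct polynomial degree. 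For $t>T$ one restarts the $W_2$-contraction at time $T$ via the Markov property, so that the initial data entering Theorem~\ref{thm:frozconv} are $\widetilde{\mathcal{P}}^*_T\delta_z$ and $\widetilde{\mathcal{P}}^*_T\delta_{z'}$; by definition of $T$ their second moments are bounded by the constant $1+2\tilde{\gamma}^\theta_2$, so the extra $(1+\|z\|^2)$ factor disappears. The exponential loss $e^{\tilde r T/6}$ incurred by shifting the contraction is then cancelled against the gain $e^{-\tilde{\alpha}_k T/2}$ from the Lyapunov bound on the moment factor. Stitching the two regimes gives precisely the claimed $(1+\|\theta\|^{m_\theta}+\|z\|^{m_x})$ dependence; your direct argument cannot produce it.
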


\begin{proof}
Let us begin by recalling Lemma~\ref{lem:boundedmom}, from which we observe the following for Lyapunov functions of the type $F:z\mapsto \|z\|^k+c$,
\begin{align*}
\widetilde{\mathcal{P}}_t F \leq & e^{-\tilde{\alpha}_k t} F + \tilde{\gamma}^\theta_k,
\end{align*}
where $\tilde{\alpha}_k = k\tilde{r}/2$ and $\tilde{\gamma}^\theta_k = (2(\tilde{b}(\theta) + (k-2))/\tilde{r})^\frac{k}{2}$. Now let us fix $T=0\lor \log(F/\tilde{\gamma}^\theta_k)/\tilde{\alpha}_k$ and observe that, by the above inequality, $\widetilde{\mathcal{P}}_t F\leq 2\tilde{\gamma}^\theta_k$ for all $t\geq T$. Further, we construct the following inequality from this,
\begin{equation}\label{eq:Lyapbound}
\widetilde{\mathcal{P}}_tF\leq e^{-\tilde{\alpha}_kt} F +\tilde{\gamma}^\theta_k\leq 2e^{-\frac{\tilde{\alpha}_k}{2}t}F + \mathbbm{1}_{t>T}\tilde{\gamma}^\theta_k.
\end{equation}
This result follows from the fact that the inequality holds for $t=T$ and so must hold for all previous times. In the following, we will suppose that $k=2m_x$ and that $T$ is chosen for the case $k=2$. This choice is due to the fact that $T$ decreases for larger values of $k$. Further, suppose now that the fixed $c$ is equal to $\|\theta\|^{m_\theta}+ 1$.

Let us now turn our attention to the case where $t>T$, and in particular, recall, Thm.~\ref{thm:frozconv} and Lemma~\ref{lem:Equiv}. Combining these we obtain,
\begin{align*}
\|\widetilde{\mathcal{P}}_t \phi(\theta,z)-\widetilde{\mathcal{P}}_t\phi(\theta,z')\| \leq & 4 |\nabla \phi|_{m_\theta, m_x} W_2(\widetilde{\mathcal{P}}_t^*z, \widetilde{\mathcal{P}}_t^*z')\\
&\times(1+ \|\theta\|^{m_\theta} + (\widetilde{\mathcal{P}}_t \|z\|^{2m_x})^\frac{1}{2} + (\widetilde{\mathcal{P}}_t\|z'\|^{2m_x})^\frac{1}{2}) \\ 
\leq & 4 |\nabla \phi|_{m_\theta, m_x} \sqrt{\frac{3\tilde{c}_\theta(1+\tilde{\gamma}^\theta_2)}{\tilde{r}}} e^{-\frac{\tilde{r}}{6} (t-T)}\widetilde{\mathcal{P}}_T(1+ \|z\|^2 +\|z'\|^2) \\
&\times(1+ \|\theta\|^{m_\theta}+(\widetilde{\mathcal{P}}_t\|z\|^{2m_x})^\frac{1}{2} +  (\widetilde{\mathcal{P}}_t\|z'\|^{2m_x})^\frac{1}{2}).
\end{align*}
We now take advantage of \eqref{eq:Lyapbound} to observe that,
\begin{align*}
\widetilde{\mathcal{P}}_t (1+\|z\|^{2m_x} + \|z'\|^{2m_x}) \leq& 2e^{-\frac{\tilde{\alpha}_k}{2}T}\widetilde{\mathcal{P}}_{t-T} (1+\|z\|^{2m_x} + \|z'\|^{2m_x})\\
\leq& 2 e^{-\frac{2\tilde{r}}{3} T} (1+\|z\|^{2m_x} + \|z'\|^{2m_x}),
\end{align*}
following from the positivity of the semi-group and the fact that $\tilde{r}\leq \tilde{\alpha}_k$. Let us further recall that, $\widetilde{\mathcal{P}}_T (1 + \|z\|^2+\|z'\|^2)\leq 1+2\tilde{\gamma}^\theta_2$, to obtain,
\begin{equation}\label{eq:largetbound}
\|\widetilde{\mathcal{P}}_t \phi(\theta,z)-\widetilde{\mathcal{P}}_t\phi(\theta,z')\| \leq 8|\nabla \phi|_{m_\theta,m_x} \sqrt{\frac{3\tilde{c}_\theta}{\tilde{r}}}(1+2\tilde{\gamma}^\theta_2)^\frac{3}{2}e^{-\frac{\tilde{r}}{6}t}(1+\|z\|^{m_x}+ (\mathbb{E}\|z'\|^{2m_x})^\frac{1}{2}).
\end{equation}
The unconventional choice for the right hand side will become apparent later in the proof, when we will integrate against $z'$.

Finally, we may ``stitch'' the two time periods, $t<T$ and $t\geq T$, together:
\begin{align*}
\|\widetilde{\mathcal{P}}_t\phi(\theta,z)-\widetilde{\mathcal{P}}_t\phi(\theta, z')\|\leq& \mathbbm{1}_{t\leq T} |\phi |_{m_\theta, m_x} (\widetilde{\mathcal{P}}_tF(z)+\widetilde{\mathcal{P}}_tF(z'))+ \mathbbm{1}_{t>T} \|\widetilde{\mathcal{P}}_t\phi(\theta,z)-\widetilde{\mathcal{P}}_t\phi(\theta,z')\|\\
\leq&9\| \phi\|_{m_\theta,m_x} \sqrt{\frac{3\tilde{c}_\theta}{\tilde{r}}}(1+2\tilde{\gamma}^\theta_2)^\frac{3}{2} e^{-\frac{\tilde{r}}{6}t} (1+\|z\|^{m_x}+\|\theta\|^{m_\theta} +(\mathbb{E}\|z'\|^{2m_x})^\frac{1}{2}).
\end{align*}
To complete the proof we consider the case where $z'$ is initialised as $p_\theta^{\otimes N}$.
\end{proof}

The next Lemma is crucial for the stability of the $\widetilde{\mathcal{P}}_t$ semi-group, showing the stability of the first and second order $\theta$ gradients, required for the uniform estimation of the Poisson equation \eqref{eq:poissoneq}. This will be possible due to the ``transfer'' formula (see the proof of Thm.~\ref{thm:SESDE} and, in particular, \eqref{eq:transferform}, for more detail), which allows us to ``transfer'' estimates on the $\theta$ gradients based on estimates on the $z$ gradients.

\begin{lemma}\label{lem:DEx}
For all $t\geq 0$ and $\phi\in C^2_{m_\theta,m_x}$ satisfying \ref{ass:poly}, the semi-group generated by the ``frozen'' \gls*{sde} \eqref{eq:frozensde}, $\widetilde{\mathcal{P}}$, has the following bounds on its derivatives, under the assumptions of Thm.~\ref{thm:frozconv},
\begin{equation*}
\|\nabla_z \widetilde{\mathcal{P}}_t \phi (\theta, z) \|^2  +  \|\nabla_z^2 \widetilde{\mathcal{P}}_t \phi (\theta, z) \|_F^2\leq 2 \|\nabla\phi\|_{m_\theta,m_x} e^{-2\tilde{\kappa} t} (1 + \tilde{\gamma}^\theta_{2m_x} + \|\theta\|^{2m_\theta} + \|z\|^{2m_x}).
\end{equation*}
In particular, we also obtain,
\begin{equation*}
\|\nabla_z \widetilde{\mathcal{P}}_t \phi(\theta,z) \|^2 \leq 2e^{-2\tilde{\kappa} t} |\nabla_z \phi|_{m_\theta, m_x}^2 (1+ \tilde{\gamma}^\theta_{2m_x} + \|\theta\|^{2m_\theta} + \|z\|^{2m_x}). 
\end{equation*}
\end{lemma}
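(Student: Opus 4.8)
The plan is a Bochner--Gr\"onwall argument on the first two $z$-derivatives of the frozen semigroup, tailored to the shape of \ref{ass:driftfroz}. Fix $\theta$ and write $u(t,z)=\widetilde{\mathcal{P}}_t\phi(\theta,z)$, so that $\partial_t u=\mathcal{G}_z u$ with $u(0,\cdot)=\phi(\theta,\cdot)$; put $v=\nabla_z u$ and $w=\nabla_z^2 u$, noting that $w$ is symmetric and $\nabla_z v=w$. Under \ref{ass:poly} the flow of \eqref{eq:frozensde} is twice differentiable in its initial datum with derivatives of controlled polynomial growth, which both legitimises differentiating under the expectation defining $\widetilde{\mathcal{P}}_t$ and yields the linearised equations
\[
\partial_t v=\mathcal{G}_z v-\nabla_z^2\bar{E}\,v,\qquad \partial_t w=\mathcal{G}_z w-\nabla_z^2\bar{E}\,w-w\,\nabla_z^2\bar{E}-\nabla_z^3\bar{E}\,v,
\]
obtained by differentiating $\partial_t u=\mathcal{G}_z u$ once and twice in $z$ and using that $\mathcal{G}_z$ commutes with $\Delta_z$ (the final term being the contraction of the $3$-tensor $\nabla_z^3\bar{E}$ with $v$).

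Introduce the Lyapunov function $\psi(t,z)=\|v(t,z)\|^2+\|w(t,z)\|_F^2$. Combining the carr\'e-du-champ identities $\mathcal{G}_z\|v\|^2=2\langle v,\mathcal{G}_z v\rangle+2\|\nabla_z v\|_F^2$ and $\mathcal{G}_z\|w\|_F^2=2\langle w,\mathcal{G}_z w\rangle_F+2\|\nabla_z w\|^2$ with $\|\nabla_z v\|_F^2=\|w\|_F^2$ and the evolutions above gives
\[
\partial_t\psi=\mathcal{G}_z\psi-2\|w\|_F^2-2\|\nabla_z w\|^2-2\Big(\langle v,\nabla_z^2\bar{E}\,v\rangle+2\,\Tr(w\,\nabla_z^2\bar{E}\,w)+\Tr(w\,\nabla_z^3\bar{E}\,v)\Big).
\]
Now apply \ref{ass:driftfroz} with $\zeta=v$ and $\eta=w$ (admissible since $w$ is symmetric): the parenthesised quantity is bounded below by $\tilde\kappa\psi-\|w\|_F^2$, so the slack $+2\|w\|_F^2$ exactly cancels the carr\'e-du-champ term $-2\|w\|_F^2$, and discarding $-2\|\nabla_z w\|^2\le 0$ leaves the closed differential inequality $\partial_t\psi\le\mathcal{G}_z\psi-2\tilde\kappa\psi$.

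The estimate then follows from a comparison argument: $g:=e^{2\tilde\kappa t}\psi$ satisfies $\partial_t g\le\mathcal{G}_z g$, hence by positivity of the semigroup $s\mapsto\widetilde{\mathcal{P}}_{t-s}g(s,\cdot)$ is nonincreasing on $[0,t]$, giving $\psi(t,z)\le e^{-2\tilde\kappa t}\widetilde{\mathcal{P}}_t\psi(0,\cdot)(z)$. Finally, $\psi(0,z)=\|\nabla_z\phi(\theta,z)\|^2+\|\nabla_z^2\phi(\theta,z)\|_F^2\le\|\nabla\phi\|_{m_\theta,m_x}^2(1+\|\theta\|^{m_\theta}+\|z\|^{m_x})^2$, which up to an absolute constant (Young) is dominated by $\|\nabla\phi\|_{m_\theta,m_x}^2(1+\|\theta\|^{2m_\theta}+\|z\|^{2m_x})$; applying $\widetilde{\mathcal{P}}_t$ and the moment bound of Lemma~\ref{lem:boundedmom} ($\widetilde{\mathcal{P}}_t\|z\|^{2m_x}\le\|z\|^{2m_x}+\tilde\gamma^\theta_{2m_x}$) yields the first displayed bound. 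For the ``in particular'' assertion one reruns the argument with $\psi=\|v\|^2$ alone: taking $\eta=0$ in \ref{ass:driftfroz} gives $\nabla_z^2\bar{E}\succeq\tilde\kappa I$, hence $\partial_t\|v\|^2=\mathcal{G}_z\|v\|^2-2\|w\|_F^2-2\langle v,\nabla_z^2\bar{E}\,v\rangle\le\mathcal{G}_z\|v\|^2-2\tilde\kappa\|v\|^2$, and the initial datum is now controlled by $|\nabla_z\phi|_{m_\theta,m_x}$ only (equivalently, one may use the Bismut--Elworthy--Li identity $\nabla_z\widetilde{\mathcal{P}}_t\phi(\theta,z)=\mathbb{E}[J_t^\top(\nabla_z\phi)(\theta,\tilde Z_t)]$ with Jacobian flow $\mathrm{d}J_t=-\nabla_z^2\bar{E}(\theta,\tilde Z_t)J_t\,\mathrm{d}t$, since $\|J_t\|\le e^{-\tilde\kappa t}$ by strong convexity).

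I expect the genuine difficulty to be analytic rather than algebraic: verifying that $\widetilde{\mathcal{P}}_t\phi$ is twice $z$-differentiable with derivatives of polynomial growth in the precise sense needed, so that the interchange of $\nabla_z$ with the expectation, the Bochner identities, and the comparison principle on the noncompact space $\mathbb{R}^{Nd_x}$ are all legitimate --- exactly the role played by \ref{ass:poly} and the moment bounds of Lemma~\ref{lem:boundedmom}. The conceptual core, that the additive slack $\|\eta\|_F^2$ built into \ref{ass:driftfroz} is precisely what the carr\'e-du-champ of $v$ produces, so that the inequality closes with the sharp rate $2\tilde\kappa$, is immediate once the identities above are written down.
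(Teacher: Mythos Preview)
Your proposal is correct and follows essentially the same route as the paper: compute $(\partial_t-\mathcal{G}_z)$ applied to $\|\nabla_z f_t\|^2+\|\nabla_z^2 f_t\|_F^2$ via the commutators $[\nabla_z,\mathcal{G}_z]$ and $[\nabla_z^2,\mathcal{G}_z]$, invoke \ref{ass:driftfroz} with $\zeta=\nabla_z f_t$, $\eta=\nabla_z^2 f_t$ to close the differential inequality with rate $-2\tilde\kappa$, then propagate by semigroup positivity/Gr\"onwall and bound the initial data using \ref{ass:poly} and Lemma~\ref{lem:boundedmom}. The paper phrases the propagation step via Prop.~3.4 of \cite{Crisan_Ottobre_2016} (monotonicity of $s\mapsto\widetilde{\mathcal{P}}_{t-s}(\cdot)$) rather than the substitution $g=e^{2\tilde\kappa t}\psi$, but these are the same argument; your observation that the $+\|\eta\|_F^2$ slack in \ref{ass:driftfroz} exactly absorbs the carr\'e-du-champ term $-2\|w\|_F^2$ is precisely the mechanism the paper uses.
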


\begin{proof}
Let us begin by considering $f_t=\widetilde{\mathcal{P}}_t \phi$ and observe that,
\begin{equation*}
(\partial_t-\mathcal{G}_z)\|\nabla_z f_t\|^2 = 2 \langle\nabla_z f_t,  \nabla_z \mathcal{G}_z f_t - \mathcal{G}_z\nabla_zf_t\rangle - 2\|\nabla_z^2 f_t\|_F^2.
\end{equation*}
Now,
\begin{equation*}
\nabla_z \mathcal{G}_z f_t - \mathcal{G}_z \nabla_z f_t = - \nabla^2_z \bar{E} \nabla_z f_t,
\end{equation*}
and hence,
\begin{equation}\label{eq:nablaf_t}
(\partial_t -\mathcal{G}_z) \|\nabla_z f_t\|^2 \leq -2 \langle\nabla_z f_t, \nabla_z^2\bar{E} \nabla_zf_t \rangle - 2 \|\nabla_z^2 f_t\|_F^2.
\end{equation}

For the second order gradients we similarly observe,
\begin{equation*}
(\partial_t - \mathcal{G}_z) \|\nabla^2_z f_t\|^2 = 2\Tr( \nabla_z^2 f_t (  \nabla_z^2 \mathcal{G}_z f_t - \mathcal{G}_z \nabla_z^2 f_t)^\top) - 2 \|\nabla_z^3 f_t\|_F^2.
\end{equation*}
Further,
\begin{equation*}
\nabla_z^2 \mathcal{G}_z f_t - \mathcal{G}_z \nabla_z^2 f_t  = - \nabla_z^3\bar{E} \nabla_z f_t - \nabla_z^2 \bar{E} \nabla_z^2 f_t - (\nabla_z^2\bar{E}\nabla_z^2f_t)^\top,
\end{equation*}
wherefore,
\begin{equation}\label{eq:nabla2f_t}
(\partial_t - \mathcal{G}_z) \|\nabla^2_z f_t\|^2 \leq -2\Tr(\nabla_z^2 f_t (\nabla_z^3 \bar{E}\nabla_z f_t + 2 \nabla_z^2 f_t \nabla_z^2 \bar{E})^\top) -2 \|\nabla_z^3f_t\|_F^2.
\end{equation}
Now note that by combining \eqref{eq:nablaf_t} and \eqref{eq:nabla2f_t} we obtain,
\begin{align*}
(\partial_t - \mathcal{G}_z)(\|\nabla_z f_t\|^2+ \|\nabla^2_z f_t\|^2) \leq & -2 \big(\langle \nabla_z f_t, \nabla_z^2 \bar{E} \nabla_z f_t \rangle + \Tr(\nabla_z^2 f_t ( \nabla_z^3 \bar{E} \nabla_z f_t)^\top)\\ & \qquad \quad + 2\Tr(\nabla_z^2f_t \nabla^2_z f_t \nabla_z^2 \bar{E}) +\|\nabla_z^2 f_t\|_F^2 + \|\nabla_z^3 f_t\|_F^2\big)\\
\leq & -2\tilde{\kappa} \big(\|\nabla_z f_t \|^2 + \|\nabla_z^2 f_t\|_F^2\big)
\end{align*}
where the last line follows from \ref{ass:driftfroz}.

By Prop.~3.4 in \cite{Crisan_Ottobre_2016}, this gives us the following bound on the semi-group's time derivative,
\begin{equation*}
\partial_s \widetilde{\mathcal{P}}_{t-s} \big(\|\nabla_z f_t \|^2 + \|\nabla_z^2 f_t\|^2 \big) \leq -2\tilde{\kappa} \widetilde{\mathcal{P}}_{t-s}\big(\|\nabla_z f_t \|^2 + \|\nabla_z^2 f_t\|_F^2 \big).
\end{equation*}
Applying Gronwall's Lemma, we observe,
\begin{equation}\label{eq:gradxbound}
\widetilde{\mathcal{P}}_{t-s} \big(\|\nabla_z f_t \|^2 + \|\nabla_z^2 f_t\|^2 \big) \leq e^{-2\tilde{\kappa} s}\widetilde{\mathcal{P}}_t \big(\|\nabla_z f_0 \|^2 + \|\nabla_z^2 f_0\|_F^2 \big)
\end{equation}
and let us also recall that by \ref{ass:poly}, Lemma~\ref{lem:boundedmom} and the positivity of the Markov semi-group,
\begin{equation*}
\widetilde{\mathcal{P}}_t \big(\|\nabla_z f_0 \|^2 + \|\nabla_z^2 f_0\|_F^2 \big) \leq 2(|\nabla_z \phi|_{m_\theta, m_x}^2 + |\nabla_z^2 \phi |_{m_\theta,m_x}^2) (1 + \tilde{\gamma}^\theta_{2m_x} +  \|\theta\|^{2m_\theta} + \|z\|^{2m_x}).
\end{equation*}
Substituting this expression into \eqref{eq:gradxbound} and setting $s=t$, the desired result is obtained. For the first order gradient the same proof can be followed, ignoring all second order gradients.
\end{proof}

We are now in the position to establish exponentially stable derivative estimates for the first and second order gradients of the semi-group $\widetilde{\mathcal{P}}_t$ in $\theta$. In particular, by showing stability of the gradients around their limit, we are able to control the gradients of the solution to the Poisson equation \eqref{eq:poissoneq}. 

\begin{theorem}{(Strong Exponential Stability for Derivative Estimates)}\label{thm:SESDE}
The semi-group $\widetilde{\mathcal{P}}_t$ for \eqref{eq:frozensde} satisfying the assumptions of Thm.~\ref{thm:frozconv} and \ref{ass:driftfroz}, exhibits exponential stability in the $\theta$ derivative, i.e.
\begin{align*}
\|\nabla_\theta (\widetilde{\mathcal{P}}_t \phi)(\theta, z)- \lim_{t\to\infty}\nabla_\theta (\widetilde{\mathcal{P}}_t \phi)(\theta, z)\|\leq &\frac{18}{\tilde{\kappa}} \sqrt{\frac{3\tilde{c}_\theta}{\tilde{r}}}(1+|\nabla^2\bar{E}|_{m_\theta, m_x}) \|\nabla\phi\|_{m_\theta,m_x} e^{-\tilde{\kappa} t}\\
&\times(1 + 2\tilde{\gamma}^\theta_{2m_x})^\frac{5}{2}(1 +\|\theta\|^{2m_\theta} + \|z\|^{2m_x}),
\end{align*}
for $\phi\in C^2_{m_\theta,m_x}$. Further, this convergence is locally uniform and so the limit and derivative may be exchanged.
\end{theorem}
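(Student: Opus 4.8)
The plan is to reduce the $\theta$-gradient of $\widetilde{\mathcal{P}}_t\phi$ to the $z$-gradient estimates of Lemma~\ref{lem:DEx} through a Duhamel (``transfer'') identity, and then to peel off its $t\to\infty$ limit by a three-term decomposition. Write $w_t:=\widetilde{\mathcal{P}}_t\phi$, which for each fixed $\theta$ solves $\partial_t w_t=\mathcal{G}_z w_t$, $w_0=\phi$, the generator $\mathcal{G}_z$ depending on $\theta$ only through its drift $-\nabla_z\bar E(\theta,\cdot)$. Differentiating this Kolmogorov equation in $\theta$ — legitimate under \ref{ass:poly} by differentiability of the frozen flow \eqref{eq:frozensde} in the parameter together with the polynomial moment control of Lemma~\ref{lem:boundedmom} — gives $(\partial_t-\mathcal{G}_z)\nabla_\theta w_t=-\nabla_\theta\nabla_z\bar E\cdot\nabla_z w_t$, so Duhamel's formula yields the transfer formula
\begin{equation}\label{eq:transferform}
\nabla_\theta\widetilde{\mathcal{P}}_t\phi=\widetilde{\mathcal{P}}_t\big(\nabla_\theta\phi\big)-\int_0^t\widetilde{\mathcal{P}}_{t-s}\big(\nabla_\theta\nabla_z\bar E\cdot\nabla_z\widetilde{\mathcal{P}}_s\phi\big)\,\mathrm{d}s.
\end{equation}
Set $g_s:=\nabla_\theta\nabla_z\bar E\cdot\nabla_z\widetilde{\mathcal{P}}_s\phi$. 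By \ref{ass:poly} the factor $\nabla_\theta\nabla_z\bar E$, and likewise $\nabla_z\nabla_\theta\nabla_z\bar E$, grows at most like $1+\|\theta\|^{m_\theta}+\|z\|^{m_x}$, while Lemma~\ref{lem:DEx} gives $\|\nabla_z\widetilde{\mathcal{P}}_s\phi\|,\,\|\nabla_z^2\widetilde{\mathcal{P}}_s\phi\|_F\lesssim e^{-\tilde\kappa s}\|\nabla\phi\|_{m_\theta,m_x}(1+\tilde\gamma^\theta_{2m_x}+\|\theta\|^{2m_\theta}+\|z\|^{2m_x})^{1/2}$. Hence $g_s\in C^1_{2m_\theta,2m_x}$ with $|g_s|_{2m_\theta,2m_x}$ and $|\nabla_z g_s|_{2m_\theta,2m_x}$ of order $e^{-\tilde\kappa s}(1+|\nabla^2\bar E|_{m_\theta,m_x})\|\nabla\phi\|_{m_\theta,m_x}(1+\tilde\gamma^\theta_{2m_x})^{1/2}$. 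I emphasise that $\nabla_z g_s$ involves only the $z$-derivatives of the semi-group supplied by Lemma~\ref{lem:DEx} and the third-order control of $\bar E$, and never $\nabla_\theta\widetilde{\mathcal{P}}_s\phi$ — precisely what keeps the argument from being circular.

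Using the $e^{-\tilde\kappa s}$ decay of $\|g_s\|$ and the fact that, by Lemma~\ref{lem:boundedmom}, $\widetilde{\mathcal{P}}_{t-s}$ maps a degree-$k$ polynomial into one of the same degree with a bound uniform in $t-s$, the integral in \eqref{eq:transferform} converges absolutely and uniformly in $t$; combined with $\widetilde{\mathcal{P}}_t\nabla_\theta\phi\to\widetilde{\mathcal{P}}_\infty\nabla_\theta\phi$ and $\widetilde{\mathcal{P}}_{t-s}g_s\to\widetilde{\mathcal{P}}_\infty g_s$ from Lemma~\ref{lem:SES}, dominated convergence identifies
\begin{equation*}
u_\infty:=\lim_{t\to\infty}\nabla_\theta\widetilde{\mathcal{P}}_t\phi=\widetilde{\mathcal{P}}_\infty\big(\nabla_\theta\phi\big)-\int_0^\infty\widetilde{\mathcal{P}}_\infty g_s\,\mathrm{d}s,
\end{equation*}
which depends on $\theta$ alone. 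Subtracting this from \eqref{eq:transferform} and splitting the integral at time $t$ yields $\nabla_\theta\widetilde{\mathcal{P}}_t\phi-u_\infty=\mathrm{(I)}-\mathrm{(II)}+\mathrm{(III)}$, where $\mathrm{(I)}=\widetilde{\mathcal{P}}_t\nabla_\theta\phi-\widetilde{\mathcal{P}}_\infty\nabla_\theta\phi$, $\mathrm{(II)}=\int_0^t\big(\widetilde{\mathcal{P}}_{t-s}g_s-\widetilde{\mathcal{P}}_\infty g_s\big)\,\mathrm{d}s$ and $\mathrm{(III)}=\int_t^\infty\widetilde{\mathcal{P}}_\infty g_s\,\mathrm{d}s$.

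I would then bound the three terms. For $\mathrm{(I)}$, apply Lemma~\ref{lem:SES} to $\nabla_\theta\phi\in C^1_{m_\theta,m_x}$, obtaining $\lesssim\|\nabla\phi\|_{m_\theta,m_x}\sqrt{\tilde c_\theta/\tilde r}\,(1+\tilde\gamma^\theta_{m_x})^{3/2}e^{-\tilde r t/6}(1+\|\theta\|^{m_\theta}+\|z\|^{m_x})$. For $\mathrm{(III)}$, bound $\|\widetilde{\mathcal{P}}_\infty g_s\|=\|\mathbb{E}_{p_\theta^{\otimes N}}g_s(\theta,\cdot)\|$ using $|g_s|\lesssim e^{-\tilde\kappa s}(\cdots)$ and the stationary moment bound $\mathbb{E}_{p_\theta^{\otimes N}}\|z\|^k\le\tilde\gamma^\theta_k$ of Lemma~\ref{lem:boundedmom}, then integrate $\int_t^\infty e^{-\tilde\kappa s}\,\mathrm{d}s=\tilde\kappa^{-1}e^{-\tilde\kappa t}$; this is where the $\tilde\kappa^{-1}$ prefactor enters. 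For $\mathrm{(II)}$, apply Lemma~\ref{lem:SES} to $g_s$ (valid since $g_s$ and $\nabla_z g_s$ are polynomially controlled, and that lemma's proof uses only first-order $z$-information) to get $\|\widetilde{\mathcal{P}}_{t-s}g_s-\widetilde{\mathcal{P}}_\infty g_s\|\lesssim e^{-\tilde\kappa s}e^{-\tilde r(t-s)/6}(1+|\nabla^2\bar E|_{m_\theta,m_x})\|\nabla\phi\|_{m_\theta,m_x}(1+\tilde\gamma^\theta_{2m_x})^{2}(1+\|\theta\|^{2m_\theta}+\|z\|^{2m_x})$, and integrate over $s\in[0,t]$. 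Collecting the three estimates, absorbing the lower-degree polynomials into $1+\|\theta\|^{2m_\theta}+\|z\|^{2m_x}$ and the moment constants into $(1+2\tilde\gamma^\theta_{2m_x})^{5/2}$, delivers the stated inequality. For the final assertion, note that each of the three bounds vanishes uniformly on compact $(\theta,z)$-sets as $t\to\infty$, so $\widetilde{\mathcal{P}}_t\phi\to\widetilde{\mathcal{P}}_\infty\phi$ in $C^1_{\mathrm{loc}}$, whence the limit is $C^1$ and $u_\infty=\nabla_\theta\widetilde{\mathcal{P}}_\infty\phi$, i.e. the limit and the derivative commute.

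The delicate point — localised entirely in $\mathrm{(II)}$ — is combining the two genuinely different exponential rates $e^{-\tilde r(t-s)/6}$ (relaxation of $\widetilde{\mathcal{P}}$ to its invariant measure, Lemma~\ref{lem:SES}) and $e^{-\tilde\kappa s}$ (decay of the $z$-gradient, Lemma~\ref{lem:DEx}) through the convolution $\int_0^t e^{-\tilde r(t-s)/6}e^{-\tilde\kappa s}\,\mathrm{d}s$: this is of order $\min(\tilde\kappa,\tilde r/6)^{-1}e^{-\min(\tilde\kappa,\tilde r/6)t}$, so the clean rate $e^{-\tilde\kappa t}$ of the statement is obtained in the regime $\tilde\kappa\le\tilde r/6$ relevant here (otherwise $\tilde\kappa$ is replaced by the smaller of the two rates). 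The second, more conceptual issue — resolved by the transfer formula \eqref{eq:transferform} itself — is that the source term it produces is a pure $z$-gradient, so $\mathrm{(II)}$ can be controlled entirely through Lemma~\ref{lem:DEx} without ever invoking $\nabla_\theta\widetilde{\mathcal{P}}_s\phi$, the very object being estimated. What remains is bookkeeping: $g_s$ carries the square of the growth appearing in Lemma~\ref{lem:DEx}, which is what forces the polynomial weight $1+\|\theta\|^{2m_\theta}+\|z\|^{2m_x}$ and the exponent $\tfrac{5}{2}$ on $1+2\tilde\gamma^\theta_{2m_x}$.
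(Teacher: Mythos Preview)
Your proposal is correct and follows essentially the same route as the paper: the transfer (Duhamel) identity \eqref{eq:transferform}, the same three-term decomposition $\mathrm{(I)}+\mathrm{(II)}+\mathrm{(III)}$, and the same bounds via Lemma~\ref{lem:SES}, Lemma~\ref{lem:DEx}, and Lemma~\ref{lem:boundedmom}. The paper cites the transfer formula from \cite{Rockner2021} rather than re-deriving it, and states explicitly (as you correctly anticipated) that the clean rate $e^{-\tilde\kappa t}$ is obtained under $\tilde r>6\tilde\kappa$, which can always be arranged by shrinking $\tilde\kappa$.
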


\begin{proof}
Let us begin by observing that, by \ref{ass:poly} and Lemma~\ref{lem:DEx},
\begin{equation}\label{eq:genbound}
\begin{aligned}
\left\|(\nabla_\theta \mathcal{G}_z) \widetilde{\mathcal{P}}_s \phi(\theta, z)\right\| &= \|\nabla_\theta \nabla_z \bar{E}(\theta,z)\| \cdot \|\nabla_z \widetilde{\mathcal{P}}_s \phi(\theta,z)\|\\
&\leq 2 |\nabla^2\bar{E}|_{m_\theta, m_x} \|\nabla\phi\|_{m_\theta,m_x} e^{-\tilde{\kappa} s} (2 + \tilde{\gamma}^\theta_{m_x})(1 + \|\theta\|^{2m_\theta} + \|z\|^{2m_x}).
\end{aligned}
\end{equation}
We now introduce the transfer formula, established in Remark~3.3 \cite{Rockner2021}, which we may apply to our system by, \ref{ass:dissx} and \ref{ass:dissx}:
\begin{equation}\label{eq:transferform}
\nabla_\theta(\widetilde{\mathcal{P}}_t\phi)(\theta,z) = (\widetilde{\mathcal{P}}_t \nabla_\theta\phi)(\theta,z) + \int_0^t \left(\widetilde{\mathcal{P}}_{t-s}\nabla_\theta \mathcal{G}_z \widetilde{\mathcal{P}}_s\phi\right)(\theta,z)\mathrm{d}s.
\end{equation}
This formula allows us to express the $\theta$ derivatives of the semi-group in terms of the $z$ derivatives, which we exploit to ``transfer'' the results from Lemma~\ref{lem:DEx}. Passing the limit $t\to \infty$ for the first term of \eqref{eq:transferform} is easy; for the second term let us write,
\begin{equation*}
\int_0^t \left(\widetilde{\mathcal{P}}_{t-s}\nabla_\theta \mathcal{G}_z \widetilde{\mathcal{P}}_s\phi\right)(\theta,z)\mathrm{d}s = \int_0^\infty \mathbbm{1}_{s<t} \left(\widetilde{\mathcal{P}}_{t-s}\nabla_\theta \mathcal{G}_z \widetilde{\mathcal{P}}_s\phi\right)(\theta,z)\mathrm{d}s.
\end{equation*}
Let us consider,
\begin{equation*}
\lim_{t\to\infty} \mathbbm{1}_{s<t} \left(\widetilde{\mathcal{P}}_{t-s}\nabla_\theta \mathcal{G}_z \widetilde{\mathcal{P}}_s\phi\right)(\theta,z) = \int \left(\nabla_\theta \mathcal{G}_z \widetilde{\mathcal{P}}_s\phi\right)(\theta,z) p_\theta^{\otimes N}(\mathrm{d}z),
\end{equation*}
for each $s$, where we note that the dominated convergence theorem may be applied to the above, by Lemma~\ref{lem:SES} and \eqref{eq:genbound}. Hence, we obtain,
\begin{equation}
\lim_{t\to\infty} \int_0^t \left(\widetilde{\mathcal{P}}_{t-s}\nabla_\theta \mathcal{G}_z \widetilde{\mathcal{P}}_s\phi\right)\mathrm{d}s = \int_0^\infty \int \left(\widetilde{\mathcal{P}}_{t-s}\nabla_\theta \mathcal{G}_z \widetilde{\mathcal{P}}_s\phi\right) p_\theta^{\otimes N}(\mathrm{d}z)\mathrm{d}s.
\end{equation}
With this and the transfer formula \eqref{eq:transferform} we obtain,
\begin{align}
\nabla_\theta (\widetilde{\mathcal{P}}_t \phi)(\theta,z) - \lim_{t\to\infty}& \nabla_\theta (\widetilde{\mathcal{P}}_t \phi)(\theta,z)\nonumber \\ = & \,  (\widetilde{\mathcal{P}}_t \nabla_\theta\phi)(\theta,z) - \int \nabla_\theta\phi(\theta, z)p_\theta^{\otimes N}(\mathrm{d}z) \label{eq:I}\tag{I}\\
&+ \int_0^t \left(\widetilde{\mathcal{P}}_{t-s}- \widetilde{\mathcal{P}}_\infty\right)\left(\nabla_\theta \mathcal{G}_z \widetilde{\mathcal{P}}_s\phi\right)(\theta,z)\mathrm{d}s\label{eq:II}\tag{II}\\
&-\int_t^\infty \int \left(\nabla_\theta \mathcal{G}_z \widetilde{\mathcal{P}}_s\phi\right)(\theta,z) p_\theta^{\otimes N}(\mathrm{d}z)\mathrm{d}s.\label{eq:III}\tag{III}
\end{align}
By the triangle inequality:
\begin{align*}
\| \nabla_\theta (\widetilde{\mathcal{P}}_t \phi)(\theta,z) - \lim_{t\to\infty}& \nabla_\theta (\widetilde{\mathcal{P}}_t \phi)(\theta,z) \| \leq \|\text{I}\| + \|\text{II}\| + \|\text{III}\|.
\end{align*}
We now proceed by bounding each part separately. The bound for \eqref{eq:I}, follows directly from Lemma~\ref{lem:SES}. For \eqref{eq:II}, observe that,
\begin{align*}
\|\text{II}\|\leq&\int_0^t \left\|(\widetilde{\mathcal{P}}_{t-s}-\widetilde{\mathcal{P}}_\infty)\left(\nabla_\theta \mathcal{G}_z \widetilde{\mathcal{P}}_s\phi\right)\right\|\mathrm{d}s\\
\leq& \frac{18}{\tilde{\kappa}}\sqrt{\frac{3\tilde{c}_\theta}{\tilde{r}}}|\nabla^2\bar{E}|_{m_\theta, m_x}\|\nabla\phi\|_{m_\theta,m_x} (1+2\tilde{\gamma}^\theta_{m_x})^\frac{5}{2} e^{-\tilde{\kappa} t}(1 + \|\theta\|^{2m_\theta}+\|z\|^{2m_x}),
\end{align*}
from a simple application of \eqref{eq:genbound} and Lemma~\ref{lem:SES}. Similarly, for \eqref{eq:III}, we may apply the bound from \eqref{eq:genbound}, so
\begin{align*}
\|\text{III}\| \leq & \int_t^\infty \int \left\|\left(\nabla_\theta \mathcal{G}_z \widetilde{\mathcal{P}}_s\phi\right)(\theta,z)\right\| p_\theta^{\otimes N}(\mathrm{d}z)\mathrm{d}s\\
\leq & 2 |\nabla^2\bar{E}|_{m_\theta, m_x} \|\nabla\phi\|_{m_\theta,m_x} (1+\tilde{\gamma}^\theta_{2m_x})\int_t^\infty e^{-\tilde{\kappa} s} \int (1+\|\theta\|^{2m_\theta} + \|z\|^{2m_x}) p_\theta^{\otimes N} (\mathrm{d}z)\mathrm{d}s\\
\leq & \frac{2}{\tilde{\kappa}}  |\nabla^2\bar{E}|_{m_\theta, m_x}\|\nabla\phi\|_{m_\theta,m_x} e^{-\tilde{\kappa} t} (1+\tilde{\gamma}^\theta_{2m_x})^2 (1 + \|\theta\|^{2m_\theta}),
\end{align*}
where the last line follows from Lemma~\ref{lem:boundedmom}. Hence, combining these results we get,
\begin{align*}
\|\nabla_\theta (\widetilde{\mathcal{P}}_t \phi)(\theta,z) - \lim_{t\to\infty}\nabla_\theta (\widetilde{\mathcal{P}}_t \phi)(\theta,z)\| \leq& \frac{18}{\tilde{\kappa}} \sqrt{\frac{3\tilde{c}_\theta}{\tilde{r}}}(1+|\nabla^2\bar{E}|_{m_\theta, m_x}) \|\nabla\phi\|_{m_\theta,m_x} e^{-\tilde{\kappa} t} \\
&\times(1 + 2\tilde{\gamma}^\theta_{2m_x})^\frac{5}{2}(1 +\|\theta\|^{2m_\theta} + \|z\|^{2m_x}).
\end{align*}
This last result follows from the fact that typically $\tilde{r}> 6\tilde{\kappa}$, or $\tilde{\kappa}$ can always be chosen as to satisfy this.
\end{proof}

\begin{theorem}\label{thm:SESDE2}
For $\phi\in C^2_{m_\theta,m_x}$, under the assumptions of Thm.~\ref{thm:frozconv} and \ref{ass:driftfroz}, the semi-group $\widetilde{\mathcal{P}}_t$ exhibits exponential stability in the second-order $\theta$ gradient, i.e.
\begin{align*}
\left\|\nabla_\theta^2 \widetilde{\mathcal{P}}_t \phi(\theta,z) - \lim_{t\to\infty} (\nabla_\theta^2 \widetilde{\mathcal{P}}_t \phi(\theta, z))\right\|_F \leq& \frac{2 K}{\tilde{\kappa}} \|\nabla\phi\|_{m_\theta,m_x} e^{-\frac{\tilde{\kappa}}{2} t}(1 + \|\theta\|^{2m_\theta}+\|z\|^{2m_x})
\end{align*}
where,
\begin{equation*}
K = 18\|\nabla^2\bar{E}\|_{m_\theta,m_x}(1+\tilde{\kappa}^{-1})(1+\tilde{\gamma}^\theta_{2m_x})^\frac{5}{2}\sqrt{\frac{3\tilde{c}_\theta}{\tilde{r}}}.
\end{equation*}
\end{theorem}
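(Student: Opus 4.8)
The plan is to differentiate the transfer formula \eqref{eq:transferform} once more in $\theta$ and to reduce every resulting term to estimates already available: Lemma~\ref{lem:DEx} for the $z$-derivatives of $\widetilde{\mathcal{P}}_s$, Lemma~\ref{lem:SES} for strong exponential stability of $\widetilde{\mathcal{P}}_t$ on $C^2_{m_\theta,m_x}$, Theorem~\ref{thm:SESDE} for the first-order $\theta$-gradient, and Lemma~\ref{lem:boundedmom} together with \ref{ass:poly} for the moment and polynomial-growth bounds. Recall that the commutator appearing in \eqref{eq:transferform} acts as $(\nabla_\theta\mathcal{G}_z)\psi = -\nabla^2_{\theta z}\bar E\,\nabla_z\psi$. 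Applying $\nabla_\theta$ to
\[
\nabla_\theta(\widetilde{\mathcal{P}}_t\phi) = \widetilde{\mathcal{P}}_t\nabla_\theta\phi + \int_0^t \widetilde{\mathcal{P}}_{t-s}(\nabla_\theta\mathcal{G}_z)\widetilde{\mathcal{P}}_s\phi\,\mathrm{d}s,
\]
the first summand is handled by \eqref{eq:transferform} again with $\phi$ replaced by $\nabla_\theta\phi$, while on the integral the gradient $\nabla_\theta$ distributes over the three $\theta$-dependent factors $\widetilde{\mathcal{P}}_{t-s}$, $\nabla^2_{\theta z}\bar E$ and $\nabla_z\widetilde{\mathcal{P}}_s\phi$: differentiating the outer $\widetilde{\mathcal{P}}_{t-s}$ via \eqref{eq:transferform} yields an iterated time integral $\int_0^t\!\int_0^{t-s}\widetilde{\mathcal{P}}_{t-s-u}(\nabla_\theta\mathcal{G}_z)\widetilde{\mathcal{P}}_u\big[(\nabla_\theta\mathcal{G}_z)\widetilde{\mathcal{P}}_s\phi\big]\,\mathrm{d}u\,\mathrm{d}s$; differentiating $\nabla^2_{\theta z}\bar E$ produces a $\nabla^3\bar E$ factor, bounded in the relevant semi-norm by \ref{ass:poly}; and differentiating $\nabla_z\widetilde{\mathcal{P}}_s\phi$ requires a bound on the mixed derivative $\nabla_\theta\nabla_z\widetilde{\mathcal{P}}_s\phi$.

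That mixed derivative is the one genuinely new estimate. I would obtain it by differentiating the transfer formula for $\nabla_\theta\widetilde{\mathcal{P}}_s\phi$ in $z$, i.e. from
\[
\nabla_z\nabla_\theta\widetilde{\mathcal{P}}_s\phi = \nabla_z\widetilde{\mathcal{P}}_s\nabla_\theta\phi + \int_0^s \nabla_z\big(\widetilde{\mathcal{P}}_{s-u}(\nabla_\theta\mathcal{G}_z)\widetilde{\mathcal{P}}_u\phi\big)\,\mathrm{d}u,
\]
applying the ``in particular'' part of Lemma~\ref{lem:DEx} to each $\nabla_z\widetilde{\mathcal{P}}_{s-u}(\cdot)$; the semi-norm of the integrand is in turn controlled using $\nabla^2\bar E,\nabla^3\bar E$ bounded by \ref{ass:poly} and $\|\nabla_z\widetilde{\mathcal{P}}_u\phi\|,\|\nabla_z^2\widetilde{\mathcal{P}}_u\phi\|$ bounded exponentially by Lemma~\ref{lem:DEx}. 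The convolution $\int_0^s e^{-\tilde\kappa(s-u)}e^{-\tilde\kappa u}\,\mathrm{d}u = s e^{-\tilde\kappa s}$ produces the usual linear-in-$s$ prefactor, which I absorb into a halved rate through $s e^{-\tilde\kappa s}\leq \tfrac{2}{\tilde\kappa}e^{-\tilde\kappa s/2}$; this is precisely the source of the $e^{-\tilde\kappa t/2}$ in the statement and of the factor $(1+\tilde\kappa^{-1})$ in $K$.

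With these ingredients, every term in the expansion of $\nabla_\theta^2\widetilde{\mathcal{P}}_t\phi - \lim_{t\to\infty}\nabla_\theta^2\widetilde{\mathcal{P}}_t\phi$ falls into one of three shapes, treated exactly as in the proof of Theorem~\ref{thm:SESDE}: (i) a term of the form $(\widetilde{\mathcal{P}}_{t-s}-\widetilde{\mathcal{P}}_\infty)(\text{decaying integrand})$ or $\widetilde{\mathcal{P}}_t(\text{something in } C^2_{m_\theta,m_x})$ minus its stationary value, bounded by Lemma~\ref{lem:SES}; (ii) a tail contribution $\int_t^\infty\!\int(\cdot)\,p_\theta^{\otimes N}(\mathrm{d}z)\,\mathrm{d}s$, bounded by combining the $e^{-\tilde\kappa s}$ decay of the integrand with the stationary moment bound $\mathbb{E}_{p_\theta^{\otimes N}}\|z\|^{2m_x}\leq\tilde\gamma^\theta_{2m_x}$ from Lemma~\ref{lem:boundedmom}; and (iii) the iterated time integral, whose two nested exponentials combine into $(1+t)e^{-\tilde\kappa t}$ and are again absorbed into $e^{-\tilde\kappa t/2}$. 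Summing the finitely many contributions and collecting constants gives the bound with $K = 18\|\nabla^2\bar E\|_{m_\theta,m_x}(1+\tilde\kappa^{-1})(1+\tilde\gamma^\theta_{2m_x})^{5/2}\sqrt{3\tilde c_\theta/\tilde r}$, and local uniformity of the convergence---hence the interchange of $\lim_{t\to\infty}$ with $\nabla_\theta^2$---follows from the locally bounded polynomial prefactor, exactly as in Theorem~\ref{thm:SESDE}. I expect the \emph{main obstacle} to be bookkeeping rather than conceptual: one must check that each decaying integrand genuinely lies in $C^2_{m_\theta,m_x}$ (at the cost of enlarging the growth exponents to $2m_\theta,2m_x$ and the index of $\tilde\gamma^\theta$ accordingly) so that Lemmas~\ref{lem:SES} and~\ref{lem:DEx} apply, and track how $\tilde\gamma^\theta$, $\tilde c_\theta$, $\tilde\kappa$ and $\|\nabla^k\bar E\|_{m_\theta,m_x}$ propagate through the two iterations of \eqref{eq:transferform} into the single constant $K$.
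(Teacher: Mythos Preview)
Your proposal is correct and follows essentially the same route as the paper. The only organisational difference is that the paper invokes a ready-made second-order transfer formula (Prop.~5.5 of \cite{Crisan_Ottobre_2024}),
\[
\nabla_\theta^2 \widetilde{\mathcal{P}}_t \phi = \widetilde{\mathcal{P}}_t \nabla_\theta^2 \phi + \int_0^t \widetilde{\mathcal{P}}_{t-s}\big(\nabla_\theta^2 \mathcal{G}_z \widetilde{\mathcal{P}}_s \phi + \nabla_\theta \mathcal{G}_z \nabla_\theta \widetilde{\mathcal{P}}_s\phi\big)\,\mathrm{d}s,
\]
and then bounds the integrand in one block before running the (I$'$)/(II$'$)/(III$'$) decomposition, whereas you re-derive the equivalent terms by applying \eqref{eq:transferform} twice and carry the resulting double time integral as a separate piece. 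The crucial mixed-derivative estimate $\|\nabla_\theta\nabla_z\widetilde{\mathcal{P}}_s\phi\|$, obtained by differentiating \eqref{eq:transferform} in $z$ and feeding Lemma~\ref{lem:DEx} into the convolution $\int_0^s e^{-\tilde\kappa(s-u)}e^{-\tilde\kappa u}\,\mathrm{d}u\leq \tfrac{2}{\tilde\kappa}e^{-\tilde\kappa s/2}$, is identical in both arguments, as is the final three-part split via Lemma~\ref{lem:SES} and Lemma~\ref{lem:boundedmom}.
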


\begin{proof}
Let us begin by observing that from the Cauchy--Schwartz Inequality and Lemma~\ref{lem:DEx},
\begin{equation}
\|\nabla_\theta^2 \mathcal{G}_z \widetilde{\mathcal{P}}_t \phi(\theta,z)\|_F \leq 2 e^{-\tilde{\kappa} t} \|\nabla^2 \bar{E}\|_{m_\theta, m_x} \|\nabla\phi\|_{m_\theta,m_x}  (1+ \sqrt{\tilde{\gamma}^\theta_{2m_x}} + \|\theta\|^{2m_\theta}+\|z\|^{2m_x}),\label{eq:genbound3}
\end{equation}
and similarly,
\begin{equation}\label{eq:genbound4}
\|\nabla_z\nabla_\theta \mathcal{G}_z\widetilde{\mathcal{P}}_t \phi\|_F \leq 2 e^{-\tilde{\kappa} t} \|\nabla^2 \bar{E}\|_{m_\theta, m_x} \|\nabla\phi\|_{m_\theta,m_x}  (1+ \sqrt{\tilde{\gamma}^\theta_{2m_x}}+\|\theta\|^{2m_\theta}+\|z\|^{2m_x}).
\end{equation}
Now by \ref{ass:poly} and Cauchy--Schwartz,
\begin{align*}
\|\nabla_\theta \mathcal{G}_z \nabla_\theta \widetilde{\mathcal{P}}_t \phi \|_F \leq & |\nabla^2\bar{E}|_{m_\theta, m_x} \|\nabla_\theta \nabla_z \widetilde{\mathcal{P}}_t \phi\| (1+\|\theta\|^{m_\theta} + \|z\|^{m_x})\\
\leq & |\nabla^2\bar{E}|_{m_\theta, m_x} \left(\|\nabla_z \widetilde{\mathcal{P}}_t \nabla_\theta \phi \| + \left\|\int_0^t \nabla_z (\widetilde{\mathcal{P}}_{t-s} \nabla_\theta \mathcal{G}_z \widetilde{\mathcal{P}}_s \phi)\mathrm{d}s\right\|\right)\\
&\times (1+\|\theta\|^{m_\theta} + \|z\|^{m_x}),
\end{align*}
where the last line follows from the transfer formula \eqref{eq:transferform}. The bound for the first summand follows directly from Lemma~\ref{lem:DEx}. To bound the second summand, we apply Lemma~\ref{lem:DEx} and \eqref{eq:genbound4} to the second summand and obtain
\begin{align*}
\left\|\int_0^t \nabla_z (\widetilde{\mathcal{P}}_{t-s} \nabla_\theta \mathcal{G}_z \widetilde{\mathcal{P}}_s \phi)\mathrm{d}s\right\|_F \leq & 2\int_0^t e^{-\tilde{\kappa}(t-s)} |\nabla_x\nabla_\theta \mathcal{G}_z \widetilde{\mathcal{P}}_s \phi|_{2m_\theta,2m_x} (1+\sqrt{\tilde{\gamma}^\theta_{2m_x}} + \|\theta\|^{2m_\theta} + \|x\|^{2m_x})\mathrm{d}s \\
\leq & 4 |\nabla^2\bar{E}|_{m_\theta,m_x} \|\nabla\phi\|_{m_\theta,m_x} (1+\tilde{\gamma}^\theta_{2m_x})\int_0^t e^{-\tilde{\kappa} t} (1+ \|\theta\|^{2m_\theta} + \|z\|^{2m_x})\mathrm{d}s\\
\leq & \frac{4}{\tilde{\kappa}} e^{-\frac{\tilde{\kappa}}{2}t} |\nabla^2\bar{E}|_{m_\theta, m_x} \|\nabla\phi\|_{m_\theta,m_x}(1+ \tilde{\gamma}^\theta_{2m_x})(1 + \|\theta\|^{2m_\theta} + \|z\|^{2m_x}),
\end{align*}
where we used $z e^{-a z} \leq (1/a) e^{-\frac{a}{2}z}$ for all $z\geq 0$ and $a>0$.

Combining this and \eqref{eq:genbound3} we obtain the following result,
\begin{equation}\label{eq:genbound2}
\begin{aligned}
\left\|\nabla_\theta^2 \mathcal{G}_z \widetilde{\mathcal{P}}_t \phi\right\|_F + \left\|\nabla_\theta \mathcal{G}_z \nabla_\theta \widetilde{\mathcal{P}}_t \phi\right\|_F \leq & 4\|\nabla^2\bar{E}\|_{m_\theta,m_x} (1+\tilde{\kappa}^{-1})(1+\tilde{\gamma}^\theta_{2m_x})  e^{-\frac{\tilde{\kappa}}{2}t}\\
&\times\|\nabla\phi\|_{m_\theta,m_x}(1+ \|\theta\|^{2m_\theta} + \|z\|^{2m_x}).
\end{aligned}
\end{equation}
Before we may proceed we need to introduce another transfer formula from Prop.~5.5 \cite{Crisan_Ottobre_2024},
\begin{equation}
\nabla_\theta^2 \widetilde{\mathcal{P}}_t \phi = \widetilde{\mathcal{P}}_t \nabla_\theta^2 \phi + \int_0^t \widetilde{\mathcal{P}}_{t-s} (\nabla_\theta^2 \mathcal{G}_z \widetilde{\mathcal{P}}_s \phi + \nabla_\theta \mathcal{G}_z \nabla_\theta \widetilde{\mathcal{P}}_s\phi) \mathrm{d}s.
\end{equation}
Using this we observe that,
\begin{align}
\lim_{t\to\infty} (\nabla_\theta^2 \widetilde{\mathcal{P}}_t \phi(\theta, z)) - \nabla_\theta^2 \widetilde{\mathcal{P}}_t \phi(\theta,z) =& \int \nabla_\theta^2 \phi(\theta, z) p_\theta^{\otimes N}(\mathrm{d}z) - \widetilde{\mathcal{P}}_t\nabla_\theta^2 \phi(\theta, z)\tag{I$'$}\label{eq:I'}\\
& + \int_0^t (\widetilde{\mathcal{P}}_\infty - \widetilde{\mathcal{P}}_{t-s}) (\nabla_\theta^2 \mathcal{G}_z \widetilde{\mathcal{P}}_s \phi + \nabla_\theta \mathcal{G}_z \nabla_\theta \widetilde{\mathcal{P}}_s\phi) \mathrm{d}s \tag{II$'$}\label{eq:II'}\\
& + \int_t^\infty \widetilde{\mathcal{P}}_\infty (\nabla_\theta^2 \mathcal{G}_z \widetilde{\mathcal{P}}_s \phi + \nabla_\theta \mathcal{G}_z \nabla_\theta \widetilde{\mathcal{P}}_s\phi) \mathrm{d}s\tag{III$'$}\label{eq:III'}.
\end{align}
By the triangle inequality
\begin{align}
\left\| \lim_{t\to\infty} (\nabla_\theta^2 \widetilde{\mathcal{P}}_t \phi(\theta, z)) - \nabla_\theta^2 \widetilde{\mathcal{P}}_t \phi(\theta,z)\right\|_F \leq \|\text{I}'\|_F + \|\text{II}'\|_F + \|\text{III}'\|_F.
\end{align}
We bound the individual components as follows: using \ref{ass:poly} and Lemma~\ref{lem:SES}, we bound \eqref{eq:I'}; by using \eqref{eq:genbound2} and Lemma~\ref{lem:SES} one has,
\begin{align*}
\|\text{II}'\|_F\leq & K \|\nabla\phi\|_{m_\theta,m_x} e^{-\frac{\tilde{r}}{6} t} t  (1 +\|\theta\|^{2m_\theta}+ \|z\|^{2m_x})\\
\leq &\frac{6K}{\tilde{r}} \|\nabla\phi\|_{m_\theta,m_x}K e^{-\frac{\tilde{r}}{6} t} (1 +\|\theta\|^{2m_\theta}+ \|z\|^{2m_x}).
\end{align*}
For the last summand, we use \eqref{eq:genbound2} and Lemma~\ref{lem:boundedmom}, to get
\begin{align*}
\|\text{III}'\|_F \leq & \int_t^\infty \int \|\nabla_\theta^2 \mathcal{G}_z \widetilde{\mathcal{P}}_s \phi + \nabla_\theta \mathcal{G}_z \nabla_\theta \widetilde{\mathcal{P}}_s\phi\|p_\theta^{\otimes N}(\mathrm{d}z)\mathrm{d}s\\
\leq & \frac{K}{18(1+\tilde{\gamma}^\theta_{2m_x})}\|\nabla\phi\|_{m_\theta,m_x} \int_t^\infty e^{-\frac{\tilde{\kappa}}{2}s} \int (1+ \|\theta\|^{2m_\theta} + \|z\|^{2m_x}) p_\theta^{\otimes N}(\mathrm{d}z) \mathrm{d}s\\
\leq & \frac{K}{9\tilde{\kappa}} \|\nabla\phi\|_{m_\theta,m_x} e^{-\frac{\tilde{\kappa}}{2}t}(1+\|\theta\|^{2m_\theta}).
\end{align*}
Combining the above inequalities, we obtain,
\begin{align*}
\left\| \nabla_\theta^2 \widetilde{\mathcal{P}}_\infty \phi(\theta, z) - \nabla_\theta^2 \widetilde{\mathcal{P}}_t \phi(\theta,z)\right\|_F \leq &\frac{2 K}{\tilde{\kappa}} \|\nabla\phi\|_{m_\theta,m_x} e^{-\frac{\tilde{\kappa}}{2} t}(1 + \|\theta\|^{2m_\theta}+\|z\|^{2m_x})
\end{align*}
and hence the desired result.
\end{proof}

Recall that the solution to the Poisson equation defined in \eqref{eq:poissoneq} is $\Phi:\mathbb{R}^{d_\theta}\times \mathbb{R}^{Nd_x}\to \mathbb{R}^{d_\theta}$ and is given as the integral against $t$ over $\mathbb{R}_+$ for $\widetilde{\mathcal{P}}_t \nabla_\theta \bar{E} - \widetilde{\mathcal{P}}_\infty \nabla_\theta \bar{E}$. Now recall that from Lemma~\ref{lem:SES}, Thm.~\ref{thm:SESDE} and Thm.~\ref{thm:SESDE2}, we have established exponentially stable bounds for the integrand, giving us the following bounds for $\Phi$ and its gradients,
\begin{align}
\|\Phi(\theta, z)\| \leq & 144\sqrt{\tilde{c}_\theta} \left(\frac{1+\tilde{\gamma}^\theta_{m_x}}{\tilde{r}}\right)^\frac{3}{2}\|\nabla_\theta\bar{E}\|_{m_\theta,m_x} (1+\|\theta\|^{m_\theta} +\|z\|^{m_x}),\label{eq:|phi|}\\
\|\nabla_\theta \Phi(\theta, z)\| \leq & \frac{K}{\tilde{\kappa}(1+\tilde{\kappa})} \|\nabla^2\bar{E}\|_{m_\theta, m_x}(1 +\|\theta\|^{2m_\theta} + \|z\|^{2m_x}),\label{eq:|nablaphi|}\\
\|\nabla^2_\theta\Phi(\theta, z)\|_F\leq & \frac{4 K}{\tilde{\kappa}^2} \|\nabla^2\bar{E}\|_{m_\theta,m_x} (1 + \|\theta\|^{2m_\theta}+\|z\|^{2m_x}).\label{eq:|nabla2phi|}
\end{align}
This result is key in the next proof, where we use the linearity of the Poisson Equation to decompose the difference between the averaged semi-group $\mathcal{P}^\varepsilon$ and $\bar{\mathcal{P}}$ into $\Phi$ and the averaged semi-group.

\section{Averaged setting}\label{sec:averaged} As mentioned in Section~\ref{sec:background}, our main goal is to leverage the properties of the averaged dynamics, in the setting of $\varepsilon \to 0$. In particular, we consider the following equation for the averaged process
\begin{equation*}
\mathrm{d}\bar{\theta}_t = \frac{1}{N}\int \nabla_\theta\bar{E}(\bar{\theta}_t, z) p_{\bar{\theta}_t}^{\otimes N}(\mathrm{d}z)\mathrm{d}t + \sqrt{\frac{2}{N}}\mathrm{d}W_t^\theta. \tag{\ref{eq:averagedtheta}}
\end{equation*}
This result follows from classical averaging results, as may be found in \cite{Pardoux_Veretennikov_2003, Pavliotis2008}, but here we are interested in quantifying this behaviour for positive $\varepsilon$ and comparing the stationary distribution of \eqref{eq:basesde} with that of \eqref{eq:averagedtheta} above, $\pi^0$. Indeed, we will confirm the convergence to this system in Section~\ref{sec:error}. To begin, we introduce a classical result for the stationary measure from the study of overdamped Langevin diffusions.

\begin{theorem}
The stationary measure to the averaged process \eqref{eq:averagedtheta}, $\pi^0\in\mathscr{P}(\mathbb{R}^{d_\theta})$, is given as,
\begin{equation*}
\pi^0(\mathrm{d}\theta) \propto Z_\theta^{-N} e^{-N\hat{E}(\theta)}\mathrm{d}\theta,
\end{equation*}
where we set,
\begin{equation*}
\hat{E}(\theta) = \frac{1}{M}\sum_{j=1}^M E(\theta, y_j).
\end{equation*}
\end{theorem}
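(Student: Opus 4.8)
The plan is to recognise \eqref{eq:averagedtheta} as the overdamped Langevin diffusion associated with the negative log-likelihood $V$ at inverse temperature $N$, and then quote the textbook characterisation of its invariant Gibbs measure.

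First I would rewrite the averaged drift in terms of $V$. Since $\bar E(\theta,z)=\sum_{i=1}^N\bigl(E(\theta,x^i)-\hat E(\theta)\bigr)$ and $p_\theta^{\otimes N}$ is a product measure, the integral factorises over the $N$ coordinates, giving
\[
\frac{1}{N}\int \nabla_\theta\bar E(\theta,z)\,p_\theta^{\otimes N}(\mathrm{d}z)
= \int \nabla_\theta E(\theta,x)\,p_\theta(\mathrm{d}x) - \nabla_\theta\hat E(\theta).
\]
Comparing with \eqref{eq:mlegrad} and recalling $V(\theta)=\hat E(\theta)+\log Z_\theta$ from \eqref{eq:loglike}, the right-hand side is exactly $-\nabla_\theta V(\theta)$ (the exchange of gradient and integral being justified by \ref{ass:poly}, as noted below \eqref{eq:mlegrad}). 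Hence \eqref{eq:averagedtheta} coincides with \eqref{eq:global_langevin_mle}, i.e.\ it has generator $\bar{\mathcal G}=-\langle\nabla_\theta V,\nabla_\theta\rangle+\tfrac1N\Delta_\theta$.

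Next I would verify that $\pi^0\propto e^{-NV(\theta)}\,\mathrm{d}\theta$ is stationary by a direct Fokker--Planck computation: for any smooth density $\rho$ one has $\bar{\mathcal G}^*\rho=\nabla_\theta\!\cdot\!\bigl(\rho\,\nabla_\theta V+\tfrac1N\nabla_\theta\rho\bigr)$, and since $\nabla_\theta e^{-NV}=-N\,e^{-NV}\nabla_\theta V$ the expression inside the divergence vanishes identically, so $\bar{\mathcal G}^*\pi^0=0$. To see that $\pi^0$ is a bona fide probability measure one needs $\int e^{-NV(\theta)}\,\mathrm{d}\theta<\infty$; this follows from the dissipativity assumption \ref{ass:dissav}, equivalently $\langle\nabla_\theta V(\theta),\theta\rangle\geq\bar r\|\theta\|^2-\bar b$, which forces the radial derivative of $V$ to be at least $\tfrac{\bar r}{2}\|\theta\|$ once $\|\theta\|$ is large, hence $V(\theta)\gtrsim\|\theta\|^2$ up to an additive constant. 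Uniqueness of the invariant probability measure then follows from non-degeneracy of the (constant, invertible) diffusion coefficient together with this confinement, which is the classical overdamped Langevin result; see \cite{Durmus2017, Hwang}.

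Finally, substituting $V(\theta)=\hat E(\theta)+\log Z_\theta$ gives $e^{-NV(\theta)}=Z_\theta^{-N}e^{-N\hat E(\theta)}$, which is the claimed expression. I expect the only genuine obstacle to be the normalisability/confinement step, and that reduces cleanly to \ref{ass:dissav}; the remainder is the standard identification of a Gibbs measure with the stationary law of a Langevin diffusion, together with the algebraic reduction of the averaged drift to $-\nabla_\theta V$.
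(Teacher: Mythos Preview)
Your proposal is correct and follows essentially the same route as the paper: reduce the averaged drift to $-\nabla_\theta V$ via the product structure of $p_\theta^{\otimes N}$ and the identity $\int\nabla_\theta E\,p_\theta=-\nabla_\theta\log Z_\theta$, then invoke the standard Langevin/Gibbs correspondence. The paper's proof is terser---it simply cites classical results after the drift computation---whereas you spell out the Fokker--Planck check and the integrability argument from \ref{ass:dissav}, but these are just the details behind the citation.
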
 

\begin{proof}
We begin by observing that the drift of the averaged system \eqref{eq:averagedtheta}, satisfies the following,
\begin{align*}
\int \nabla_\theta \bar{E}(\theta, z) p_\theta^{\otimes N}(\mathrm{d}z) =& \sum_{i=1}^N \int\nabla_\theta E(\theta, x) -\frac{1}{M}\sum_{j=1}^M \nabla_\theta E(\theta, y_j) p_\theta(\mathrm{d}x)\\
=& -\frac{N}{M} \sum_{j=1}^M \nabla_\theta E(\theta, y_j) + \frac{1}{Z_\theta}\sum_{i=1}^N \int \nabla_\theta E(\theta, x)e^{-E(\theta,x)} \mathrm{d}x\\
=& -\frac{N}{M} \sum_{j=1}^M \nabla_\theta E(\theta, y_j) - N \nabla_\theta \log Z_\theta.
\end{align*}
The result then follows via classical results available for Langevin diffusions, such as \cite{Pavliotis2014Stochproc, akyildiz2023interacting, akyildiz2024multiscaleperspectivemaximummarginal}, or simply consider the measure left invariant by the dual of the generator $\bar{\mathcal{G}}$ of \eqref{eq:averagedtheta}).
\end{proof}

\begin{remark} 
We recall that in the notation of our negative empirical log-likelihood defined in \eqref{eq:loglike}, this implies that $\pi^0(\mathrm{d}\theta) \propto e^{-N V(\theta)}\mathrm{d}\theta$. This means that, by a classical result \cite{Hwang}, the measure $\pi^0$ will concentrate on the minimisers of $V$ as $N \to \infty$, which are precisely the set of maximum likelihood solutions as defined in \eqref{eq:mle_empirical}. Therefore, once we establish the convergence of our multiscale system to the averaged process (see Section~\ref{sec:error}), we will be then in a position to prove discretisations of the multiscale system (which result in \gls*{pcd} methods) can indeed approximate the maximum likelihood solutions.
\end{remark}
As with the ``frozen'' process described above, we now show the contraction of the laws of \glspl*{sde} to a singular stationary measure.

\begin{lemma}\label{lem:genavbound}
Given \ref{ass:dissav}, we have
\begin{equation*}
\bar{\mathcal{G}}\|\theta\|^2 \leq \bar{c} - \frac{\bar{r}}{2} \|\theta\|^2,
\end{equation*}
for $\theta\in\mathbb{R}^{d_\theta}$ with $\bar{c} = 2(\bar{b} + d_\theta)$.
\end{lemma}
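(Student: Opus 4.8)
The plan is to mirror the one-line Dynkin computation used in Lemma~\ref{lem:genxbound}, now applied to the generator $\bar{\mathcal{G}}$ of the averaged diffusion \eqref{eq:averagedtheta} with the test function $\theta \mapsto \|\theta\|^2$. The generator associated to \eqref{eq:averagedtheta} is $\bar{\mathcal{G}} = \langle b, \nabla_\theta\rangle + \frac{1}{N}\Delta_\theta$, where $b(\theta) = \frac{1}{N}\int \nabla_\theta \bar{E}(\theta, z)\,p_\theta^{\otimes N}(\mathrm{d}z)$ is the (integrated) averaged drift; note that $b$ is evaluated at the current state, so the generator does not differentiate the $\theta$-dependence hidden inside $p_\theta^{\otimes N}$. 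The first step is therefore simply to evaluate $\bar{\mathcal{G}}\|\theta\|^2$ term by term: the drift part produces $2\langle b(\theta), \theta\rangle = \frac{2}{N}\langle \int \nabla_\theta \bar{E}(\theta, z)\,p_\theta^{\otimes N}(\mathrm{d}z), \theta\rangle$, and the diffusion part produces $\frac{1}{N}\Delta_\theta\|\theta\|^2 = \frac{2 d_\theta}{N}$.

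Next I would invoke Assumption~\ref{ass:dissav} directly on the drift term, which gives $\frac{1}{N}\langle \int \nabla_\theta \bar{E}(\theta, z)\,p_\theta^{\otimes N}(\mathrm{d}z), \theta\rangle \leq -\bar{r}\|\theta\|^2 + \bar{b}$, so that the drift contribution is at most $-2\bar{r}\|\theta\|^2 + 2\bar{b}$. Combining this with the diffusion contribution and using $N \geq 1$ to bound $2 d_\theta/N \leq 2 d_\theta$, one obtains $\bar{\mathcal{G}}\|\theta\|^2 \leq -2\bar{r}\|\theta\|^2 + 2(\bar{b} + d_\theta) = \bar{c} - 2\bar{r}\|\theta\|^2$. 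Finally, since $-2\bar{r} \leq -\bar{r}/2$, the stated (slightly weaker, but more convenient) bound $\bar{\mathcal{G}}\|\theta\|^2 \leq \bar{c} - \frac{\bar{r}}{2}\|\theta\|^2$ follows, with $\bar{c} = 2(\bar{b} + d_\theta)$ as claimed.

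There is essentially no substantive obstacle here: the argument is a direct Dynkin-type computation, and the only points requiring care are bookkeeping ones. One must remember that the diffusion coefficient in \eqref{eq:averagedtheta} is $2/N$ rather than $2$, so the Laplacian term carries the factor $1/N$; and one must note that Assumption~\ref{ass:dissav} is already phrased with the $1/N$ absorbed into the inner product, so there is no double counting of the factor $1/N$. As an alternative route that avoids even this bit of care, one can first rewrite the averaged drift as $-\nabla_\theta V$ via \eqref{eq:global_langevin_mle} and use the equivalent dissipativity form $\langle \nabla V(\theta), \theta\rangle \geq \bar{r}\|\theta\|^2 - \bar{b}$ recorded immediately after Assumption~\ref{ass:dissav}; this yields the same conclusion with identical constants.
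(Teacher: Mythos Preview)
Your proposal is correct and follows exactly the approach the paper indicates: the paper does not even spell out a proof, stating only that it ``follows directly from the proof of Lemma~\ref{lem:genxbound},'' and your Dynkin-type computation (apply $\bar{\mathcal{G}}$ to $\|\theta\|^2$, invoke \ref{ass:dissav} on the drift term, collect the Laplacian contribution, and relax the rate constant) is precisely that proof carried over to the averaged generator. The bookkeeping points you flag---the $1/N$ on the diffusion coefficient and the fact that \ref{ass:dissav} already absorbs the $1/N$---are the only things to watch, and you handle them correctly.
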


The proof of this result follows directly from the proof of Lemma~\ref{lem:genxbound}.

\begin{theorem}\label{thm:avconv}
Given \ref{ass:dissav} and \ref{ass:poly}, we obtain,
\begin{equation*}
W_2(\bar{\mathcal{P}}^*_t\delta_\theta, \bar{\mathcal{P}}^*_t\delta_{\theta'}) \leq 4 \sqrt{\frac{\bar{c}(1+\bar{\gamma}_2)}{\bar{r}}} e^{-\frac{\bar{r}}{3}t}\sqrt{1+\mathbb{E} \|\theta\|^4 + \mathbb{E}\|\theta'\|^4}
\end{equation*}
for all $\theta,\theta'\in\mathbb{R}^{d_\theta}$, where $\bar{C}$ and $\bar{\lambda}$ are given in the proof below.
\end{theorem}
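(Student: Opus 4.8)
The plan is to follow the proof of Theorem~\ref{thm:frozconv} almost verbatim, the main simplification being that the averaged process lives on a single copy of $\mathbb{R}^{d_\theta}$, so no tensorisation step is needed. First I would introduce the weighted semi-metric on $\mathscr{P}_4(\mathbb{R}^{d_\theta})$,
\[
\bar{w}(\mu,\nu) = \inf_{\Gamma\in\mathbb{T}(\mu,\nu)} \int\int (1\wedge\|\theta-\theta'\|)\,(1+\|\theta\|^2+\|\theta'\|^2)\,\Gamma(\mathrm{d}\theta,\mathrm{d}\theta'),
\]
and take $\mathcal{V}(\theta) = \|\theta\|^2$ as Lyapunov function. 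By Lemma~\ref{lem:genavbound}, $\bar{\mathcal{G}}\mathcal{V} \leq \bar{c} - (\bar{r}/2)\mathcal{V}$, which is exactly the geometric drift condition required to invoke the Harris-type result of \cite{Hairer2011} (their Thm.~4.4). For Dirac data the right-hand moments $\mathbb{E}\|\theta\|^4$, $\mathbb{E}\|\theta'\|^4$ are simply $\|\theta\|^4$, $\|\theta'\|^4$, but stating the bound for general initial laws costs nothing.

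Next I would verify the remaining hypotheses of that theorem. The diffusion coefficient of \eqref{eq:averagedtheta} is the constant, non-degenerate matrix $\sqrt{2/N}\,I$, and by \ref{ass:poly} the drift $-\nabla_\theta V$ is $C^1$ with at most polynomial growth; hence the averaged SDE has a unique strong solution whose transition kernel admits a smooth, strictly positive density, giving the uniform minorisation on every sublevel set $\{\mathcal{V}\leq R\}$ needed in Hairer's framework, while the contraction of the distance-like component on these sets follows from the same synchronous-coupling estimate used in Theorem~\ref{thm:frozconv}. Collecting the resulting explicit constants yields, in direct parallel with \eqref{eq:distcontract},
\[
\bar{w}(\bar{\mathcal{P}}^*_t\mu,\bar{\mathcal{P}}^*_t\nu) \leq \bar{C}\,e^{-\bar{\lambda} t}\,\bar{w}(\mu,\nu),
\]
with $\bar{C}$ of order $\bar{c}/\bar{r}$ and $\bar{\lambda}$ of order $\bar{r}$; these are the constants named in the statement.

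Finally, I would pass from $\bar{w}$ back to $W_2$. The elementary inequalities $\|\theta-\theta'\|^2 \leq 2(1+\|\theta\|^2+\|\theta'\|^2)$ when $\|\theta-\theta'\|\geq 1$ and $\|\theta-\theta'\|^2\leq 2\|\theta-\theta'\|$ otherwise give $W_2(\mu,\nu)^2\leq 2\,\bar{w}(\mu,\nu)$; applying this with $\mu=\bar{\mathcal{P}}^*_t\delta_\theta$, $\nu=\bar{\mathcal{P}}^*_t\delta_{\theta'}$, bounding $\bar{w}(\delta_\theta,\delta_{\theta'})\leq 1+\|\theta\|^2+\|\theta'\|^2$, and using the averaged-process moment bound $\bar{\mathcal{P}}_t\|\theta\|^2\leq\bar{\gamma}_2$ (the analogue of Lemma~\ref{lem:boundedmom}, obtained from Lemma~\ref{lem:genavbound} by the Gronwall argument used there) to absorb the constants into $\sqrt{1+\bar{\gamma}_2}$, then $\sqrt{a}\leq\sqrt{1+a^2}$ to match the fourth-moment form, produces the claimed inequality after collecting constants; the extra $\sqrt{2}$ under the square root combines with $\bar{C}$ to give the prefactor $4\sqrt{\bar{c}(1+\bar{\gamma}_2)/\bar{r}}$, and the square root halves $\bar{\lambda}$ to the stated rate $\bar{r}/3$.

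The main obstacle is not the algebra but making the minorisation step airtight: one must confirm that \ref{ass:poly} together with the additive non-degenerate noise really delivers the locally uniform lower bound on the transition kernel that Hairer's theorem requires. This is standard for overdamped Langevin diffusions with locally Lipschitz, polynomially growing gradient, but it is the one place where one genuinely uses more than the drift inequality of Lemma~\ref{lem:genavbound}. A secondary, purely bookkeeping point is tracking $\bar{C}$ and $\bar{\lambda}$ precisely enough that, after the square root in the $\bar{w}\to W_2$ step, the exponent comes out exactly as $\bar{r}/3$ — which one can always arrange using the same slack exploited in Lemma~\ref{lem:genxbound}.
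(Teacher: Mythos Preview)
Your proposal is correct and matches the paper's own approach: the paper's proof of Theorem~\ref{thm:avconv} is a single sentence stating that one uses the same argument as in Theorem~\ref{thm:frozconv}, which is precisely the route you outline (weighted semi-metric, Lyapunov drift from Lemma~\ref{lem:genavbound}, Hairer's Thm.~4.4, then $W_2^2\leq 2\bar{w}$). Your additional remark that the tensorisation step of Theorem~\ref{thm:frozconv} is unnecessary here is accurate and explains the slightly faster rate $\bar{r}/3$ versus $\tilde{r}/6$.
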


Using the same approach as in Thm.~\ref{thm:frozconv}, we obtain the desired result.

\begin{lemma}\label{lem:boundedmomav}
For the semi-group of the averaged process \eqref{eq:averagedtheta}, $\bar{\mathcal{P}}_t$, satisfies,
\begin{equation*}
\bar{\mathcal{P}}_t \|\theta\|^k \leq e^{-\bar{\alpha}_kt} \|\theta\|^k + \bar{\gamma}_k,
\end{equation*}
where,
\begin{equation*}
\bar{\alpha}_k = \frac{k\bar{r}}{2}, \qquad \bar{\gamma}_k = \left(\frac{2(\bar{b}+\frac{1}{N}(k-2))}{\bar{r}}\right)^\frac{k}{2}
\end{equation*}
for all $\theta\in\mathbb{R}^{d_\theta}$, $t\geq 0$ and $k\geq 2$ under assumption \ref{ass:dissav} and \ref{ass:poly}.
\end{lemma}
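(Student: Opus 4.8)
The plan is to repeat, verbatim in structure, the Lyapunov argument used for the ``frozen'' process in Lemma~\ref{lem:boundedmom} and Lemma~\ref{lem:genxbound}, now with the generator of the averaged dynamics. Recalling from \eqref{eq:global_langevin_mle} that \eqref{eq:averagedtheta} is the overdamped Langevin diffusion $\mathrm{d}\bar\theta_t = -\nabla_\theta V(\bar\theta_t)\,\mathrm{d}t + \sqrt{2/N}\,\mathrm{d}W_t^\theta$, its generator is $\bar{\mathcal{G}} = -\langle\nabla_\theta V,\nabla_\theta\rangle + \tfrac1N\Delta_\theta$. Applying this to $\theta\mapsto\|\theta\|^k$, using $\nabla_\theta\|\theta\|^k = k\|\theta\|^{k-2}\theta$ and $\Delta_\theta\|\theta\|^k = k(d_\theta+k-2)\|\theta\|^{k-2}$, gives
\[
\bar{\mathcal{G}}\|\theta\|^k = -k\langle\nabla_\theta V(\theta),\theta\rangle\,\|\theta\|^{k-2} + \frac{k}{N}(d_\theta+k-2)\,\|\theta\|^{k-2}.
\]
Invoking the dissipativity of $V$ recorded immediately after \ref{ass:dissav} (equivalently, \ref{ass:dissav} itself applied to the averaged drift $-\nabla_\theta V$), namely $\langle\nabla_\theta V(\theta),\theta\rangle\geq\bar r\|\theta\|^2-\bar b$, yields
\[
\bar{\mathcal{G}}\|\theta\|^k \leq -k\bar r\|\theta\|^k + k\Bigl(\bar b+\tfrac1N(d_\theta+k-2)\Bigr)\|\theta\|^{k-2}.
\]

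Next I would absorb the $\|\theta\|^{k-2}$ term into a fraction of the dissipation by Young's inequality with conjugate exponents $\tfrac{k}{k-2}$ and $\tfrac{k}{2}$, tuning the free parameter so that exactly half of the $-k\bar r\|\theta\|^k$ budget is retained --- this is precisely the step performed in the proof of Lemma~\ref{lem:boundedmom}, and for $k=2$ it is already recorded as Lemma~\ref{lem:genavbound}. The outcome is
\[
\bar{\mathcal{G}}\|\theta\|^k \leq -\frac{k\bar r}{2}\|\theta\|^k + \frac{k\bar r}{2}\,\bar\gamma_k,
\]
with $\bar\gamma_k$ collecting the dissipativity and dimensional constants in the stated form (the $d_\theta/N$ contribution being absorbed, e.g.\ using $N\geq1$, into the loose constant). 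I would then use the Kolmogorov identity $\partial_t\bar{\mathcal{P}}_t\|\theta\|^k = \bar{\mathcal{P}}_t\bar{\mathcal{G}}\|\theta\|^k$ together with positivity of $\bar{\mathcal{P}}_t$ to obtain
\[
\frac{\mathrm{d}}{\mathrm{d}t}\Bigl(e^{\frac{k\bar r}{2}t}\,\bar{\mathcal{P}}_t\|\theta\|^k\Bigr) = e^{\frac{k\bar r}{2}t}\,\bar{\mathcal{P}}_t\Bigl(\tfrac{k\bar r}{2}\|\theta\|^k+\bar{\mathcal{G}}\|\theta\|^k\Bigr) \leq \frac{k\bar r}{2}\,\bar\gamma_k\,e^{\frac{k\bar r}{2}t},
\]
and integrate over $[0,t]$ to conclude with $\bar\alpha_k=k\bar r/2$.

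The only point requiring care --- exactly as in Lemma~\ref{lem:boundedmom} --- is legitimising the identity $\partial_t\bar{\mathcal{P}}_t\|\theta\|^k = \bar{\mathcal{P}}_t\bar{\mathcal{G}}\|\theta\|^k$, which needs a non-explosive strong solution of \eqref{eq:averagedtheta} with finite $k$-th moments so that $\bar{\mathcal{P}}_t\|\theta\|^k<\infty$. This is supplied by the local Lipschitz continuity and polynomial growth of the averaged drift $\theta\mapsto\tfrac1N\int\nabla_\theta\bar E(\theta,z)p_\theta^{\otimes N}(\mathrm{d}z) = -\nabla_\theta V(\theta)$, established via \ref{ass:poly} and Lemma~\ref{lem:boundedmom} in the discussion of the Poisson equation, together with the dissipativity used above, which rules out blow-up; the usual localisation-and-Fatou argument then makes the Gronwall manipulation rigorous. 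Passing $t\to\infty$ moreover gives, as in Lemma~\ref{lem:boundedmom}, the stationary-moment corollary $\mathbb{E}_{\pi^0}\|\theta\|^k\leq\bar\gamma_k$.
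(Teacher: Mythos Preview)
Your proposal is correct and follows exactly the approach the paper intends: the paper omits the proof entirely, stating only that it is identical to that of Lemma~\ref{lem:boundedmom}, and you have carried out precisely that replication with the averaged generator $\bar{\mathcal{G}}$ in place of $\mathcal{G}_z$. Your observation that the raw computation produces a $\tfrac{1}{N}(d_\theta+k-2)$ term rather than the $\tfrac{1}{N}(k-2)$ appearing in the stated $\bar\gamma_k$ is accurate---this appears to be a minor oversight in the paper's statement (compare the $d_z$ retained in $\tilde\gamma_k^\theta$ of Lemma~\ref{lem:boundedmom}), and your handling of it is appropriate.
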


As the proof is identical to that in the proof of Lemma~\ref{lem:boundedmom}, it is neglected here. Further, we require strong exponential stability of the derivative estimates for the averaged system.

\begin{lemma}\label{lem:averagedDE}
Under the assumptions of Thm.~\ref{thm:avconv} and \ref{ass:driftav}, it follows that for the semi-group associated to the averaged regime \eqref{eq:averagedtheta}, the following derivative estimates hold:
\begin{equation*}
\|\nabla_\theta \bar{\mathcal{P}}_t \phi\|^2 + \|\nabla_\theta^2 \bar{\mathcal{P}}_t\phi\|_F^2 \leq \|\nabla_\theta \phi\|_{m_\theta} ^2 e^{-2\bar{\kappa} t} (1+\bar{\gamma}_{2m_\theta}+\|\theta\|^{2m_\theta}).
\end{equation*}
\end{lemma}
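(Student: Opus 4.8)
The plan is to run the argument of Lemma~\ref{lem:DEx} almost verbatim, with the ``frozen'' generator $\mathcal{G}_z$ replaced by the averaged generator $\bar{\mathcal{G}}$, the assumption \ref{ass:driftfroz} replaced by \ref{ass:driftav}, and the moment bound of Lemma~\ref{lem:boundedmom} replaced by that of Lemma~\ref{lem:boundedmomav}; the one thing to watch is the diffusion coefficient $\sqrt{2/N}$, so that the carr\'e-du-champ terms carry a factor $\tfrac{2}{N}$ rather than $2$. Write $f_t = \bar{\mathcal{P}}_t\phi$ and recall that $\bar{\mathcal{G}} = -\langle \nabla_\theta V, \nabla_\theta\rangle + \tfrac{1}{N}\Delta_\theta$, where $-\nabla_\theta V(\theta) = \tfrac1N \int \nabla_\theta \bar{E}(\theta,z)\, p_\theta^{\otimes N}(\mathrm{d}z)$ (the polynomial growth and $C^2$ regularity of this averaged drift having already been established in Section~\ref{sec:poisson}). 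First I would compute the two commutators, exactly as in the derivation of \eqref{eq:nablaf_t}--\eqref{eq:nabla2f_t} but in the $\theta$ variable: $\nabla_\theta \bar{\mathcal{G}} f_t - \bar{\mathcal{G}}\nabla_\theta f_t = -(\nabla_\theta^2 V)\nabla_\theta f_t$ and $\nabla_\theta^2 \bar{\mathcal{G}} f_t - \bar{\mathcal{G}} \nabla_\theta^2 f_t = -\nabla_\theta^3 V \,\nabla_\theta f_t - \nabla_\theta^2 V\, \nabla_\theta^2 f_t - (\nabla_\theta^2 V\, \nabla_\theta^2 f_t)^\top$.

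Combining these with the identities $(\partial_t - \bar{\mathcal{G}})\|\nabla_\theta f_t\|^2 = 2\langle \nabla_\theta f_t, \nabla_\theta \bar{\mathcal{G}} f_t - \bar{\mathcal{G}}\nabla_\theta f_t\rangle - \tfrac2N\|\nabla_\theta^2 f_t\|_F^2$ and its second-order analogue, summing the two, gives
\begin{align*}
(\partial_t - \bar{\mathcal{G}})\big(\|\nabla_\theta f_t\|^2 + \|\nabla_\theta^2 f_t\|_F^2\big) \leq -2\Big(&\langle\nabla_\theta f_t, \nabla_\theta^2 V\, \nabla_\theta f_t\rangle + \Tr\big(\nabla_\theta^2 f_t (\nabla_\theta^3 V\, \nabla_\theta f_t)^\top\big) \\
&+ 2\Tr\big(\nabla_\theta^2 f_t\, \nabla_\theta^2 f_t\, \nabla_\theta^2 V\big) + \tfrac1N\|\nabla_\theta^2 f_t\|_F^2 + \tfrac1N\|\nabla_\theta^3 f_t\|_F^2\Big).
\end{align*}
At this point I would invoke \ref{ass:driftav} with $\zeta = \nabla_\theta f_t$ and $\eta = \nabla_\theta^2 f_t$ (admissible since the Hessian is symmetric), using $\nabla_\theta\big(\tfrac1N\int\nabla_\theta\bar{E}\,p_\theta^{\otimes N}(\mathrm{d}z)\big) = -\nabla_\theta^2 V$: the assumption says precisely that the bracketed quantity above is $\geq \bar{\kappa}(\|\nabla_\theta f_t\|^2 + \|\nabla_\theta^2 f_t\|_F^2)$ once one adds back the non-negative $\tfrac1N\|\nabla_\theta^3 f_t\|_F^2$, so dropping that term yields $(\partial_t - \bar{\mathcal{G}})(\|\nabla_\theta f_t\|^2 + \|\nabla_\theta^2 f_t\|_F^2) \leq -2\bar{\kappa}(\|\nabla_\theta f_t\|^2 + \|\nabla_\theta^2 f_t\|_F^2)$.

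From here the end of the proof is identical to Lemma~\ref{lem:DEx}: Prop.~3.4 of \cite{Crisan_Ottobre_2016} converts this differential inequality into $\partial_s \bar{\mathcal{P}}_{t-s}(\|\nabla_\theta f_t\|^2 + \|\nabla_\theta^2 f_t\|_F^2) \leq -2\bar{\kappa}\,\bar{\mathcal{P}}_{t-s}(\cdots)$, Gr\"onwall's lemma gives $\bar{\mathcal{P}}_{t-s}(\|\nabla_\theta f_t\|^2 + \|\nabla_\theta^2 f_t\|_F^2) \leq e^{-2\bar{\kappa}s}\,\bar{\mathcal{P}}_t(\|\nabla_\theta f_0\|^2 + \|\nabla_\theta^2 f_0\|_F^2)$, and setting $s=t$ together with \ref{ass:poly} (so $\|\nabla_\theta\phi\|,\|\nabla_\theta^2\phi\|_F \leq \|\nabla_\theta\phi\|_{m_\theta}(1+\|\theta\|^{m_\theta})$) and Lemma~\ref{lem:boundedmomav} with $k = 2m_\theta$ (so $\bar{\mathcal{P}}_t(1+\|\theta\|^{2m_\theta}) \leq 1 + \bar{\gamma}_{2m_\theta} + \|\theta\|^{2m_\theta}$) produces the stated estimate, the constant $\|\nabla_\theta\phi\|_{m_\theta}^2$ dominating $|\nabla_\theta\phi|_{m_\theta}^2 + |\nabla_\theta^2\phi|_{m_\theta}^2$. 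I do not expect a genuine obstacle here: the only delicate points are getting the two $\theta$-commutators right and keeping the $\tfrac1N$ from the diffusion coefficient so that it pairs with the $-\tfrac1N\|\eta\|_F^2$ term built into \ref{ass:driftav}; everything else is bookkeeping inherited from Lemma~\ref{lem:DEx}.
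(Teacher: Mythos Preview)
Your proposal is correct and follows essentially the same route as the paper's own proof: compute the first- and second-order commutators $[\nabla_\theta, \bar{\mathcal{G}}]$ and $[\nabla_\theta^2, \bar{\mathcal{G}}]$, combine them into a differential inequality for $\|\nabla_\theta f_t\|^2 + \|\nabla_\theta^2 f_t\|_F^2$, invoke \ref{ass:driftav} (tracking the $\tfrac{1}{N}$ from the diffusion coefficient so it matches the $-\tfrac1N\|\eta\|_F^2$ in the assumption), and close via Prop.~3.4 of \cite{Crisan_Ottobre_2016}, Gr\"onwall, and Lemma~\ref{lem:boundedmomav}. The only cosmetic difference is that you write the averaged drift as $-\nabla_\theta V$ whereas the paper keeps the integral notation $\tfrac1N\int\nabla_\theta\bar E\,p_\theta^{\otimes N}(\mathrm d z)$ throughout.
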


\begin{proof}
Let us again define $f_t = \bar{\mathcal{P}}_t \phi$ and consider $\Gamma(f_t)= \|\nabla_\theta f_t \|^2 + \|\nabla_\theta^2 f_t \|_F^2$. Note now,
\begin{equation*}
(\partial_t - \bar{\mathcal{G}}) \|\nabla_\theta f_t\|^2 = 2 \langle \nabla_\theta f_t, \nabla_\theta \bar{\mathcal{G}} f_t - \bar{\mathcal{G}}\nabla_\theta f_t\rangle - \frac{2}{N} \|\nabla_\theta^2 f_t \|_F^2.
\end{equation*}
The right hand side can be simplified by noting the following,
\begin{equation*}
\nabla_\theta \bar{\mathcal{G}} f_t - \bar{\mathcal{G}}\nabla_\theta f_t = \nabla_\theta \frac{1}{N}\int \nabla_\theta \bar{E}(\theta, z) p_\theta^{\otimes N}(\mathrm{d}z) \nabla_\theta f_t.
\end{equation*}

Similarly observe,
\begin{equation*}
(\partial_t - \bar{\mathcal{G}}) \|\nabla_\theta^2 f_t \|_F^2 = 2\Tr(\nabla_\theta^2 f_t (\nabla_\theta^2 \bar{\mathcal{G}}f_t -\bar{\mathcal{G}}\nabla_\theta^2 f_t)^\top ) - \frac{2}{N} \|\nabla_\theta^3 f_t \|_F^2,
\end{equation*}
where,
\begin{align*}
\nabla_\theta^2 \bar{\mathcal{G}} f_t - \bar{\mathcal{G}}\nabla_\theta^2 f_t =& \nabla_\theta^2 \frac{1}{N}\int\nabla_\theta \bar{E}p_\theta^{\otimes N}(\mathrm{d}x) \nabla_\theta f_t + \nabla_\theta\frac{1}{N} \int \nabla_\theta \bar{E} p_\theta^{\otimes N}(\mathrm{d}x) \nabla_\theta^2 f_t\\
&+ \left(\nabla_\theta\frac{1}{N} \int \nabla_\theta \bar{E} p_\theta^{\otimes N}(\mathrm{d}x) \nabla_\theta^2 f_t\right)^\top.
\end{align*}
From this follows that,
\begin{align*}
(\partial_t - \bar{\mathcal{G}}) \Gamma(f_t) =& 2 \left\langle\nabla_\theta f_t,\nabla_\theta \frac{1}{N}\int \nabla_\theta \bar{E} p_\theta^{\otimes N}(\mathrm{d}x) \nabla_\theta f_t \right\rangle\\
&+2\Tr\left( \nabla_\theta^2 f_t \nabla_\theta \frac{1}{N}\int \nabla_\theta \bar{E} p_\theta^{\otimes N}(\mathrm{d}x) (\nabla_\theta^2 f_t)^\top \right)\\
&+4 \Tr \left(\nabla_\theta^2f_t \nabla_\theta\frac{1}{N} \int \nabla_\theta \bar{E} p_\theta^{\otimes N}(\mathrm{d}x) \nabla_\theta^2 f_t\right) - \frac{2}{N} (\|\nabla_\theta^2 f_t \|^2 + \|\nabla_\theta^3 f_t \|^2).
\end{align*}
From \ref{ass:dissav} it follows that,
\begin{equation*}
(\partial_t - \bar{\mathcal{G}}) \Gamma (f_t) \leq - 2\bar{\kappa} \Gamma (f_t)
\end{equation*}
Again applying Prop.~3.4 from \cite{Crisan_Ottobre_2016} we obtain,
\begin{equation*}
\partial_s \widetilde{\mathcal{P}}_{t-s} \Gamma(f_t) \leq -2\bar{\kappa} \widetilde{\mathcal{P}}_{t-s} \Gamma(f_t).
\end{equation*}
Applying Gronwall's Lemma,
\begin{equation*}
\widetilde{\mathcal{P}}_{t-s} \Gamma(f_t)\leq e^{-2\bar{\kappa} s} \widetilde{\mathcal{P}}_t \Gamma (f_0).
\end{equation*}
Setting $s=t$, using \ref{ass:poly}, Lemma~\ref{lem:boundedmomav} and the positivity of the semi-group, the desired result is obtained.
\end{proof}

We have thus established desirable properties in the averaged regime.

\section{Averaging Error Bound}\label{sec:error}

Using our estimates for the Poisson equation and the regularity results for the semi-group of the averaged process \eqref{eq:averagedtheta}, estimates can be established for the contraction of $\mathcal{P}^\varepsilon_t \phi-\bar{\mathcal{P}}_t \phi$ for polynomial $\phi$. Note that this contraction does not directly imply weak convergence, as the result only holds for $\phi\in C^2_{m_\theta,m_x}$, which is due to the bound requiring bounded polynomial growth in first and second gradients for $\phi$.

\begin{theorem}\label{thm:averror}
Consider $\phi\in C^2_{m_\theta,m_x}(\mathbb{R}^{d_\theta})$ and the semi-groups $\mathcal{P}^\varepsilon_t$ and $\bar{\mathcal{P}}_t$ associated with the \glspl*{sde} \eqref{eq:basesde} and \eqref{eq:averagedtheta}, satisfying assumptions of Thm.~\ref{thm:frozconv}, Thm.~\ref{thm:avconv}, \ref{ass:driftfroz} and \ref{ass:driftav}. Then the following inequality holds,
\begin{equation*}
\|(\mathcal{P}_t^\varepsilon\phi)(\theta, z)-(\bar{\mathcal{P}}_t\phi)(\theta)\|\leq \varepsilon C \|\nabla \phi\|_{m_\theta}(1+\|\theta\|^{5m_\theta} + \|z\|^{3m_x})
\end{equation*}
for all $\theta\in\mathbb{R}^{d_\theta}$, $z\in\mathbb{R}^{Nd_x}$, where $C$ is given as
\begin{equation*}
2K(1+\bar{\gamma}_{2m_\theta}) \left(2 + \frac{K}{N\bar{\kappa}}(|\nabla^2\bar{E}|_{m_\theta,m_x}+2)\right).
\end{equation*}
\end{theorem}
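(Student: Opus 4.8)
The plan is to exploit the linearity of the Poisson equation \eqref{eq:poissoneq} to write the corrector $\Phi$ as the bridge between the two semi-groups. The standard device (as in \cite{Crisan_Ottobre_2024}) is to consider the function $u^\varepsilon(t,\theta,z) = (\bar{\mathcal{P}}_t\phi)(\theta) + \varepsilon (\widetilde{\mathcal{P}}_0$-type correction$)$; concretely, I would set $\psi = \phi - \varepsilon\langle \nabla_\theta(\bar{\mathcal{P}}_{T-t}\phi), \text{something}\rangle$ but the cleaner route is: apply It\^o/Dynkin to $(\bar{\mathcal{P}}_{T-t}\phi)(\theta^\varepsilon_t)$ along the slow-fast dynamics \eqref{eq:sde} and observe that the ``error drift'' that appears is exactly $\tfrac1N\langle \nabla_\theta\bar{E}(\theta^\varepsilon_t,Z^\varepsilon_t) - \int\nabla_\theta\bar{E}(\theta^\varepsilon_t,w)p^{\otimes N}_{\theta^\varepsilon_t}(\mathrm{d}w), \nabla_\theta(\bar{\mathcal{P}}_{T-t}\phi)(\theta^\varepsilon_t)\rangle$, which by \eqref{eq:poissoneq} equals $\langle \mathcal{G}_z\Phi, \nabla_\theta(\bar{\mathcal{P}}_{T-t}\phi)\rangle$. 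Since $\mathcal{G}^\varepsilon = \mathcal{G}_\theta + \tfrac1\varepsilon\mathcal{G}_z$, applying It\^o to $\varepsilon\langle\Phi(\theta^\varepsilon_t,Z^\varepsilon_t), \nabla_\theta(\bar{\mathcal{P}}_{T-t}\phi)(\theta^\varepsilon_t)\rangle$ produces a $\tfrac1\varepsilon\cdot\varepsilon\,\mathcal{G}_z\Phi$ term that cancels the error drift, leaving only $O(\varepsilon)$ remainder terms: the $\mathcal{G}_\theta$ action on $\Phi$, the cross-derivative terms $\nabla_\theta\Phi\cdot\nabla_\theta(\bar{\mathcal{P}}_{T-t}\phi)$, the martingale part, and the boundary term at $t=0$ (which vanishes) and $t=T$ (which is $O(\varepsilon)\|\Phi\|$).

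Carrying this out, I would: (i) write the telescoping identity $(\mathcal{P}^\varepsilon_T\phi)(\theta,z) - (\bar{\mathcal{P}}_T\phi)(\theta) = \mathbb{E}\big[(\bar{\mathcal{P}}_0\phi)(\theta^\varepsilon_T)\big] - (\bar{\mathcal{P}}_T\phi)(\theta) = \int_0^T \tfrac{\mathrm{d}}{\mathrm{d}t}\mathbb{E}[(\bar{\mathcal{P}}_{T-t}\phi)(\theta^\varepsilon_t)]\,\mathrm{d}t$, and expand the integrand using that $\partial_t \bar{\mathcal{P}}_{T-t}\phi = -\bar{\mathcal{G}}\bar{\mathcal{P}}_{T-t}\phi$ together with the generator $\mathcal{G}^\varepsilon$ of the slow-fast system; (ii) identify the non-vanishing (as $\varepsilon\to0$) part of $\mathcal{G}_\theta\bar{\mathcal{P}}_{T-t}\phi - \bar{\mathcal{G}}\bar{\mathcal{P}}_{T-t}\phi$ as the discrepancy between $\tfrac1N\nabla_\theta\bar{E}(\theta,z)$ and its $p^{\otimes N}_\theta$-average, i.e.\ $\mathcal{G}_z\Phi$; (iii) substitute $\Phi$ via \eqref{eq:poissoneq} and integrate by parts in time using the identity $\mathbb{E}[\mathcal{G}_z\Phi(\theta^\varepsilon_t,Z^\varepsilon_t)\cdot g] = \varepsilon\,\tfrac{\mathrm{d}}{\mathrm{d}t}\mathbb{E}[\Phi\cdot g] - \varepsilon\,\mathbb{E}[(\mathcal{G}_\theta\Phi)\cdot g + \text{cross terms}]$, which is where the extra factor of $\varepsilon$ comes from; (iv) bound each leftover term using the polynomial bounds \eqref{eq:|phi|}, \eqref{eq:|nablaphi|}, \eqref{eq:|nabla2phi|} on $\Phi$ and its gradients, the derivative bounds on $\bar{\mathcal{P}}_t\phi$ from Lemma~\ref{lem:averagedDE}, and the uniform-in-time moment bounds from Lemma~\ref{lem:boundedmom} and Lemma~\ref{lem:boundedmomav} to absorb the growth in $\|\theta\|$ and $\|z\|$; (v) collect constants to match the stated $C$, noting the $K/(N\bar{\kappa})$ factor arises from the cross term $\nabla_\theta\Phi\cdot\nabla_\theta(\bar{\mathcal{P}}\phi)$ weighted by the $1/N$ diffusion coefficient in the $\theta$-dynamics, and the powers $\|\theta\|^{5m_\theta}$, $\|z\|^{3m_x}$ come from multiplying the degree-$2m$ bound on $\nabla^2\bar{E}$-type quantities against the degree-$3m$ bounds on $\Phi$'s gradients.

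The main obstacle I expect is the uniform-in-time control: the naive Dynkin expansion gives a bound that grows linearly in $T$ because the leftover drift terms are integrated over $[0,T]$. The resolution---and the technical heart following \cite{Crisan_Ottobre_2024}---is that after the integration-by-parts step in (iii), the remaining terms all carry an exponentially decaying factor coming from the stability of $\nabla_\theta\bar{\mathcal{P}}_{T-t}\phi$ (Lemma~\ref{lem:averagedDE} gives $e^{-2\bar\kappa(T-t)}$-type decay) or from the exponential convergence of $\Phi$'s defining integral, so that $\int_0^T e^{-c(T-t)}\,\mathrm{d}t \leq 1/c$ stays bounded uniformly in $T$. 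One must be careful that the moment bounds used to control the polynomial prefactors are themselves uniform in time---which is exactly why Lemmas~\ref{lem:boundedmom} and~\ref{lem:boundedmomav} were established in the form $\widetilde{\mathcal{P}}_t\|z\|^k \leq e^{-\tilde\alpha_k t}\|z\|^k + \tilde\gamma^\theta_k$ rather than merely a finite-time bound---and that the $\theta$-dependence hidden in $\tilde\gamma^\theta_k$ (which is $O(\|\theta\|^k)$ via $\tilde{b}(\theta) = O(\|\theta\|^2)$) is tracked carefully, since it feeds back into the polynomial growth order of the final bound. A secondary bookkeeping obstacle is keeping the second-order terms in the It\^o expansion (those hitting $\nabla^2_\theta\Phi$ and the mixed $\nabla_z\nabla_\theta$ derivatives) consistently $O(\varepsilon)$ rather than $O(1)$ or $O(\sqrt\varepsilon)$; this works because the $z$-noise has coefficient $\sqrt{2/\varepsilon}$ but $\Phi$ appears pre-multiplied by $\varepsilon$, so the It\^o correction $\tfrac1\varepsilon\cdot\varepsilon = O(1)$ is precisely the cancelling term, and all genuinely new contributions retain a clean factor of $\varepsilon$.
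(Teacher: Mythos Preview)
Your proposal is correct and is the probabilistic dual of the paper's analytic argument. The paper does not apply It\^o to $(\bar{\mathcal{P}}_{T-t}\phi)(\theta^\varepsilon_t)$ and then integrate by parts along the process; instead it works entirely at the level of the backward Kolmogorov equation. It writes a formal expansion $\mathcal{P}^\varepsilon_t\phi = \phi^0_t + \varepsilon\phi^1_t + \dots$, identifies $\phi^0_t = \bar{\mathcal{P}}_t\phi$, defines the corrector $r^\varepsilon_t = \mathcal{P}^\varepsilon_t\phi - \bar{\mathcal{P}}_t\phi - \varepsilon\phi^1_t$, and observes that $r^\varepsilon_t$ solves $\partial_t r^\varepsilon_t = \mathcal{G}^\varepsilon r^\varepsilon_t + \varepsilon(\mathcal{G}_\theta\phi^1_t - \partial_t\phi^1_t)$ with $r^\varepsilon_0 = -\varepsilon\phi^1_0$; variation of constants then gives $r^\varepsilon_t = \mathcal{P}^\varepsilon_t r^\varepsilon_0 + \varepsilon\int_0^t \mathcal{P}^\varepsilon_{t-s}(\mathcal{G}_\theta\phi^1_s - \partial_s\phi^1_s)\,\mathrm{d}s$. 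The two routes meet exactly at your step (iii): the paper's $\phi^1_t$ is $-\langle\Phi,\nabla_\theta\bar{\mathcal{P}}_t\phi\rangle$, which is precisely the auxiliary function you apply It\^o to, and the quantity $(\mathcal{G}_\theta - \partial_s)\phi^1_s$ that the paper bounds in its Lemma~\ref{lem:correctorbound} is the sum of the ``$\mathcal{G}_\theta$ action on $\Phi$'' and ``cross-derivative'' remainders you list. What the paper's packaging buys is that the $\varepsilon$-cancellation and the noise scaling are absorbed automatically into the PDE for $r^\varepsilon_t$, so the $\sqrt{2/\varepsilon}$-bookkeeping you flag as a secondary obstacle never appears; what your It\^o route buys is a more transparent view of \emph{why} the factor of $\varepsilon$ appears (it is literally the prefactor on the auxiliary process you introduce). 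The uniform-in-time mechanism is identical in both: the exponential decay $e^{-\bar{\kappa}s}$ of $\nabla_\theta\bar{\mathcal{P}}_s\phi$ from Lemma~\ref{lem:averagedDE} makes the time integral convergent, and the slow-fast moment bounds handle the polynomial prefactors.
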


\begin{proof}
By the linearity of the semi-group, let us begin by expanding $\mathcal{P}_t^\varepsilon$ in powers of $\varepsilon$ for some $\phi\in C^2$:
\begin{equation*}
\mathcal{P}_t^\varepsilon \phi = \phi_t^0 + \varepsilon \phi_t^1 + \dots
\end{equation*}
Recall that,
\begin{equation*}
\partial_t \mathcal{P}_t^\varepsilon \phi - \mathcal{G}^\varepsilon \mathcal{P}^\varepsilon_t \phi = 0.
\end{equation*}
From this we obtain the following expansion:
\begin{align}
O(\varepsilon^{-1}):&\qquad \mathcal{G}_x \phi_t^0 = 0,\\
O(1):&\qquad \partial_t \phi_t^0 - \mathcal{G}_\theta \phi_t^0 = \mathcal{G}_x \phi_t^1 \label{eq:phit1poiss}
\end{align}
From this follows that $\phi_t^0$ is stationary in $z$. We can now write 
\begin{equation*}
\int\partial_t \phi_t^0 p_\theta^{\otimes N}(\mathrm{d}z) - \int\mathcal{G}_\theta \phi_t^0 p_\theta^{\otimes N}(\mathrm{d}z) = \int\mathcal{G}_x \phi_t^1 p_\theta^{\otimes N}(\mathrm{d}z),
\end{equation*}
where the RHS disappears and the integral of the generator corresponds to the averaged generator. Hence,
\begin{equation*}
\partial_t \phi_t^0(\theta) - \bar{\mathcal{G}} \phi_t^0(\theta) = 0,
\end{equation*}
which has a unique solution (see Prop.~4.1.1 from \cite{analmeth2007} for example) and therefore we have that $\phi_t^0$ coincides with $\bar{\mathcal{P}}_t \phi$. From this we obtain,
\begin{align}
\mathcal{P}^\varepsilon_t \phi - \bar{\mathcal{P}}_t \phi = \varepsilon \phi_t^1 + \dots
\end{align}
Plugging the equality $\bar{\mathcal{P}}_t \phi = \phi_t^0$ into the perturbation of order $1$, we also obtain,
\begin{equation}\label{eq:phi_t^1}
\mathcal{G}_x \phi_t^1 = (\bar{\mathcal{G}}_\theta - \mathcal{G}_\theta) \bar{\mathcal{P}}_t \phi.
\end{equation}

Let us now define a corrector term,
\begin{equation*}
r_t^\varepsilon = \mathcal{P}_t^\varepsilon \phi - \bar{\mathcal{P}}_t \phi - \varepsilon \phi_t^1.
\end{equation*}
Differentiating both sides with respect to time,
\begin{equation*}
\partial_t r_t^\varepsilon = \mathcal{G}^\varepsilon \mathcal{P}_t^\varepsilon \phi - \partial_t \bar{\mathcal{P}}_t \phi - \varepsilon \partial_t \phi_t^1.
\end{equation*}
We now rearrange the definition of $r_t^\varepsilon$ and use the independence of $\bar{\mathcal{P}}_t \phi$ from $x$, to obtain,
\begin{align*}
\partial_t r_t^\varepsilon = & \mathcal{G}^\varepsilon r_t^\varepsilon + \mathcal{G}^\varepsilon \bar{\mathcal{P}}_t \phi - \partial_t \bar{\mathcal{P}}_t\phi + \varepsilon \mathcal{G}^\varepsilon \phi_t^1 - \varepsilon \partial_t \phi_t^1\\
= & \mathcal{G}^\varepsilon r_t^\varepsilon + \mathcal{G}_\theta \bar{\mathcal{P}}_t \phi - \bar{\mathcal{G}}_\theta \bar{\mathcal{P}}_t \phi + \varepsilon \mathcal{G}^\varepsilon \phi_t^1 - \varepsilon\partial_t \phi_t^1\\
= & \mathcal{G}^\varepsilon r_t^\varepsilon + \varepsilon (\mathcal{G}^\theta \phi_t^1 - \partial_t \phi_t^1),
\end{align*}
where the last line follows from \eqref{eq:phi_t^1}. The variation of constants formula, then yields,
\begin{equation*}
r_t^\varepsilon (\theta, z) = \mathcal{P}_t^\varepsilon r_0^\varepsilon(\theta, z) + \varepsilon \int_0^t \mathcal{P}_{t-s}^\varepsilon (\mathcal{G}_\theta \phi_s^1-\partial_s \phi_s^1)(\theta,z)\mathrm{d}s.
\end{equation*}
Now combining the definition of the corrector term with the above expression, we obtain,
\begin{equation*}
\|\mathcal{P}_t^\varepsilon \phi(\theta, z) - \bar{\mathcal{P}}_t \phi(\theta, z)\| = \|\varepsilon \phi_t^1(\theta, z) + \mathcal{P}_t^\varepsilon r_0^\varepsilon(\theta, z) + \varepsilon \int_0^t \mathcal{P}_{t-s}^\varepsilon (\mathcal{G}_\theta \phi_s^1-\partial_s \phi_s^1)(\theta, z)\mathrm{d}s\|.
\end{equation*}
The proof will hence be completed if we can establish the following bounds,
\begin{align}
\|\mathcal{P}_{t-s}^\varepsilon (\mathcal{G}_\theta \phi_s^1-\partial_s \phi_s^1)(\theta,z)\| &\leq \frac{2K^2}{N} \left(\frac{|\nabla^2\bar{E}|_{m_\theta,m_x}}{\tilde{\kappa} + 1} +2 \right) \|\nabla_\theta\phi\|_{m_\theta} e^{-\bar{\kappa} s}(1+\bar{\gamma}_{2m_\theta})\nonumber\\
&\times(1+ \|\theta\|^{5m_\theta} + \|z\|^{3m_x}), \label{eq:genphi1}\\
\|\phi_t^1(\theta, x)\| &\leq \frac{K}{1+\tilde{\kappa}} (1+\bar{\gamma}_{m_\theta})\|\nabla_\theta\phi\|_{m_\theta} e^{-\bar{\kappa}t} (1+\|\theta\|^{2m_\theta} + \|z\|^{m_x}),\label{eq:phi1}\\
\|\mathcal{P}_t^\varepsilon r_0^\varepsilon (\theta,x)\| &\leq \varepsilon\frac{K}{1+\tilde{\kappa}} (1+\bar{\gamma}_{m_\theta})\|\nabla_\theta\phi\|_{m_\theta} (1+\|\theta\|^{2m_\theta} + \|z\|^{m_x}).\label{eq:r_0}
\end{align}
Notice that the last equation follows from the definition of the corrector term, where we obtain at time $t=0$, that $r_0^\varepsilon(\theta,z) = -\varepsilon \phi_0^1(\theta,z)$, for all $\theta\in\mathbb{R}^{d_\theta}$ and $z\in\mathbb{R}^{Nd_x}$. The proof is hence obtained through a simple application of Lemma~\ref{lem:correctorbound}, which provides bounds for $\mathcal{G}_\theta \phi_t^1- \partial_t \phi_t^1$ and $\phi^1_t$, and the positivity of the Markov semi-group.
\end{proof}

All that is left is to show \eqref{eq:genphi1}~--~\eqref{eq:r_0}. To do this we will exploit the linearity of the Poisson equation to decompose $\phi_t^1$ into $\Phi$, for which we have established estimates and derivative estimates, and the gradient of the semi-group $\bar{\mathcal{P}}_t$, which is controlled by Lemma~\ref{lem:averagedDE}.  

\begin{lemma}\label{lem:correctorbound}
Under assumptions \ref{ass:dissx} and \ref{ass:poly}, $\phi^1$, defined in \eqref{eq:phi_t^1}, satisfies the following,
\begin{align*}
\|(\mathcal{G}_\theta \phi_t^1-\partial_s \phi_t^1)(\theta,z)\| \leq& \frac{2K^2}{N} \left(\frac{|\nabla^2\bar{E}|_{m_\theta,m_x}}{\tilde{\kappa} + 1} +2 \right) \|\nabla_\theta\phi\|_{m_\theta} e^{-\bar{\kappa} s}(1+\bar{\gamma}_{2m_\theta})\\
&\times(1+ \|\theta\|^{5m_\theta} + \|z\|^{3m_x}),\\
\|\phi_t^1(\theta, z)\| \leq &\frac{K}{1+\tilde{\kappa}} (1+\bar{\gamma}_{m_\theta})\|\nabla_\theta\phi\|_{m_\theta} e^{-\bar{\kappa}t} (1+\|\theta\|^{2m_\theta} + \|z\|^{m_x}),
\end{align*}
for all $\theta\in\mathbb{R}^{d_\theta}$ and $z\in\mathbb{R}^{Nd_x}$.
\end{lemma}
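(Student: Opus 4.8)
The plan is to obtain a closed form for $\phi_t^1$ from \eqref{eq:phi_t^1} and then bound $\phi_t^1$ and $\mathcal{G}_\theta\phi_t^1-\partial_t\phi_t^1$ by combining the Poisson-equation estimates \eqref{eq:|phi|}--\eqref{eq:|nabla2phi|} with the averaged-semigroup gradient estimates of Lemma~\ref{lem:averagedDE} (the prerequisites of those results, namely \ref{ass:driftfroz}, \ref{ass:dissav}, \ref{ass:driftav}, being in force). Since $\bar{\mathcal{P}}_t\phi$ depends only on $\theta$, the right-hand side of \eqref{eq:phi_t^1} is $-\tfrac1N\langle\nabla_\theta\bar E(\theta,z)-\int\nabla_\theta\bar E(\theta,w)p_\theta^{\otimes N}(\mathrm{d}w),\nabla_\theta\bar{\mathcal{P}}_t\phi(\theta)\rangle$, which by the Poisson equation \eqref{eq:poissoneq} equals $-\langle(\mathcal{G}_z\Phi)(\theta,z),\nabla_\theta\bar{\mathcal{P}}_t\phi(\theta)\rangle$; as $\nabla_\theta\bar{\mathcal{P}}_t\phi$ is $z$-free, $\mathcal{G}_z$ passes through the inner product and this is $\mathcal{G}_z(-\langle\Phi,\nabla_\theta\bar{\mathcal{P}}_t\phi\rangle)$. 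Since $\int\Phi(\theta,\cdot)\,p_\theta^{\otimes N}=0$ (immediate from \eqref{eq:solution} and invariance of $p_\theta^{\otimes N}$ under $\widetilde{\mathcal{P}}$), the centred, solvability-compatible solution is
\[ \phi_t^1(\theta,z)=-\langle\Phi(\theta,z),\nabla_\theta\bar{\mathcal{P}}_t\phi(\theta)\rangle. \]
The bound on $\|\phi_t^1\|$ then follows at once by Cauchy--Schwarz from \eqref{eq:|phi|} and the first-gradient bound of Lemma~\ref{lem:averagedDE}, $\|\nabla_\theta\bar{\mathcal{P}}_t\phi\|\le\|\nabla_\theta\phi\|_{m_\theta}e^{-\bar\kappa t}(1+\bar\gamma_{2m_\theta}+\|\theta\|^{2m_\theta})^{1/2}$, after splitting the square root and pushing a cross term $\|\theta\|^{m_\theta}\|z\|^{m_x}$ into the diagonal via Young's inequality.

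For the corrector, differentiate the closed form. Put $v=\nabla_\theta\bar{\mathcal{P}}_t\phi$ and $\bar F(\theta)=\int\nabla_\theta\bar E(\theta,z)p_\theta^{\otimes N}(\mathrm{d}z)$. The product rule gives $\mathcal{G}_\theta\langle\Phi,v\rangle=\langle\Phi,\mathcal{G}_\theta v\rangle+\langle v,\mathcal{G}_\theta\Phi\rangle+\tfrac2N\langle\nabla_\theta\Phi,\nabla_\theta v\rangle_F$ ($\mathcal{G}_\theta$ acting componentwise), while $\partial_t\langle\Phi,v\rangle=\langle\Phi,\partial_t v\rangle$; using $\partial_t\bar{\mathcal{P}}_t\phi=\bar{\mathcal{G}}\bar{\mathcal{P}}_t\phi$ and that $\bar{\mathcal{G}}$ is $\mathcal{G}_\theta$ with $\nabla_\theta\bar E(\theta,z)$ replaced by $\bar F(\theta)$, one finds $\partial_t v=\tfrac1N(\nabla_\theta\bar F)v+\tfrac1N\langle\bar F,\nabla_\theta v\rangle+\tfrac1N\Delta_\theta v$, hence $\mathcal{G}_\theta v-\partial_t v=\tfrac1N\langle\nabla_\theta\bar E-\bar F,\nabla_\theta v\rangle-\tfrac1N(\nabla_\theta\bar F)v$. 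Two things happen here: the third-order-in-$\theta$ terms $\Delta_\theta v=\Delta_\theta\nabla_\theta\bar{\mathcal{P}}_t\phi$ cancel, and the residual fluctuation $\nabla_\theta\bar E-\bar F$ equals $N\mathcal{G}_z\Phi$ by \eqref{eq:poissoneq}. Therefore
\begin{align*}
\mathcal{G}_\theta\phi_t^1-\partial_t\phi_t^1 = {}& -\langle\Phi,(\nabla_\theta^2\bar{\mathcal{P}}_t\phi)\,\mathcal{G}_z\Phi\rangle + \tfrac1N\langle\Phi,(\nabla_\theta\bar F)\,\nabla_\theta\bar{\mathcal{P}}_t\phi\rangle \\
& - \langle\nabla_\theta\bar{\mathcal{P}}_t\phi,\,\mathcal{G}_\theta\Phi\rangle - \tfrac2N\langle\nabla_\theta\Phi,\,\nabla_\theta^2\bar{\mathcal{P}}_t\phi\rangle_F ,
\end{align*}
and each term is estimated by Cauchy--Schwarz: $\|\Phi\|$ from \eqref{eq:|phi|}; $\|\mathcal{G}_z\Phi\|=\tfrac1N\|\nabla_\theta\bar E-\bar F\|$ and $\|\nabla_\theta\bar F\|$ from \ref{ass:poly} and the bounded moments of $p_\theta^{\otimes N}$ (Lemma~\ref{lem:boundedmom}); $\|\mathcal{G}_\theta\Phi\|\le\tfrac1N\|\nabla_\theta\bar E\|\,\|\nabla_\theta\Phi\|+\tfrac1N\|\Delta_\theta\Phi\|$ from \ref{ass:poly}, \eqref{eq:|nablaphi|} and \eqref{eq:|nabla2phi|}; and $\|\nabla_\theta\bar{\mathcal{P}}_t\phi\|$, $\|\nabla_\theta^2\bar{\mathcal{P}}_t\phi\|_F$ from Lemma~\ref{lem:averagedDE}, which supplies the $e^{-\bar\kappa t}$ decay and the $(1+\bar\gamma_{2m_\theta})^{1/2}$ factor. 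Summing, absorbing the numerical constants and Poisson growth prefactors into $K$, and applying Young's inequality to recast the cross-terms $\|\theta\|^a\|z\|^b$ into the form $1+\|\theta\|^{5m_\theta}+\|z\|^{3m_x}$ — the top powers arising from multiplying a degree-$2m$ bound on $\nabla_\theta\Phi,\nabla_\theta^2\Phi$ by the degree-$m$ growth of $\nabla^2\bar E$ and a degree-$m$ gradient bound — yields the first displayed inequality.

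The main obstacle is precisely the corrector step, and within it these two observations: that the third-order $\theta$-derivatives $\nabla_\theta^3\bar{\mathcal{P}}_t\phi$ cancel in $\mathcal{G}_\theta\phi_t^1-\partial_t\phi_t^1$ — had they not, Lemma~\ref{lem:averagedDE}, which only controls first and second $\theta$-gradients, would be insufficient — and that the surviving fluctuation is exactly $N\mathcal{G}_z\Phi$, so that the already-established bound \eqref{eq:|phi|} on $\Phi$ can be reused rather than re-solving a Poisson problem. After that it is careful but routine bookkeeping of polynomial degrees and constants, arranged so that they assemble into exactly the constant $C$ and polynomial weight appearing in Thm.~\ref{thm:averror}. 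The differentiability of $\bar{\mathcal{P}}_t\phi$ in $t$ and $\theta$ needed to justify the manipulations is supplied by \ref{ass:poly} and the regularity underlying Lemma~\ref{lem:averagedDE}.
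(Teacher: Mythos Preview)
Your proposal is correct and follows essentially the same route as the paper: the closed form $\phi_t^1=-\langle\Phi,\nabla_\theta\bar{\mathcal{P}}_t\phi\rangle$, the expansion of $(\mathcal{G}_\theta-\partial_t)\phi_t^1$ with cancellation of the third-order $\theta$-derivatives, and the term-by-term estimation via \eqref{eq:|phi|}--\eqref{eq:|nabla2phi|} together with Lemma~\ref{lem:averagedDE} are exactly what the paper does. The only cosmetic difference is that the paper invokes Lemma~\ref{lem:SES} and Thm.~\ref{thm:SESDE} explicitly to bound $\|\nabla_\theta\bar E-\bar F\|$ and $\|\nabla_\theta\bar F\|$, whereas you appeal to \ref{ass:poly} and the moment bounds of Lemma~\ref{lem:boundedmom} directly.
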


\begin{proof}
Recall that $\phi_t^1$ is the solution to the Poisson Eq.~\eqref{eq:phit1poiss}. By the linearity of the Poisson equation one can write,
\begin{equation}\label{eq:phi1poiss}
\phi_t^1(\theta, z) = - \langle \Phi(\theta, z), \nabla_\theta \bar{\mathcal{P}}_t \phi(\theta, z)\rangle,
\end{equation}
where $\Phi:\mathbb{R}^{d_\theta}\times \mathbb{R}^{d_x}\to \mathbb{R}^{d_\theta}$ is defined in \eqref{eq:poissoneq}. Now recall that from Lemma~\ref{lem:SES}, Thm.~\ref{thm:SESDE} and Thm.~\ref{thm:SESDE2}, we have,
\begin{align}
\|\Phi\| \leq & 144\sqrt{\tilde{c}_\theta} \left(\frac{1+\tilde{\gamma}^\theta_{m_x}}{\tilde{r}}\right)^\frac{3}{2}\|\nabla_\theta\bar{E}\|_{m_\theta,m_x} (1+\|\theta\|^{m_\theta} +\|z\|^{m_x}),\tag{\ref{eq:|phi|}}\\
\|\nabla_\theta \Phi\| \leq & \frac{K}{\tilde{\kappa}(1+\tilde{\kappa})} |\nabla^2\bar{E}|_{m_\theta, m_x}(1 +\|\theta\|^{2m_\theta} + \|z\|^{2m_x}),\tag{\ref{eq:|nablaphi|}}\\
\|\nabla^2_\theta\Phi\|_F\leq & \frac{4 K}{\tilde{\kappa}^2} \|\nabla^2\bar{E}\|_{m_\theta,m_x} (1 + \|\theta\|^{2m_\theta}+\|z\|^{2m_x}).\tag{\ref{eq:|nabla2phi|}}
\end{align}
Using \eqref{eq:|phi|}, \eqref{eq:phi1poiss} and Lemma~\ref{lem:averagedDE}, the bound for $\phi_t^1$ follows.

Now, for the other inequality, observe that, taking the time derivative of \eqref{eq:phi1poiss}, 
\begin{align*}
\partial_t \phi_t^1 & = - \langle \Phi, \partial_t \nabla_\theta \bar{\mathcal{P}}_t \phi\rangle\\
&= - \langle\Phi , \nabla_\theta \bar{\mathcal{G}} \bar{\mathcal{P}}_t \phi\rangle.
\end{align*}
Now observe that,
\begin{equation*}
\nabla_\theta \bar{\mathcal{G}}\bar{\mathcal{P}}_t \phi = \nabla_\theta \frac{1}{N} \int\nabla_\theta \bar{E}(\theta, z) p_\theta^{\otimes N}(\mathrm{d}z) \nabla_\theta \bar{\mathcal{P}}_t \phi + \bar{\mathcal{G}}\nabla_\theta \bar{\mathcal{P}}_t \phi.
\end{equation*}
From this and \eqref{eq:phi1poiss} it follows that,
\begin{align*}
(\mathcal{G}_\theta - \partial_s)\phi^1_s =& - \frac{1}{N}\langle\nabla_\theta \bar{E}, \nabla_\theta \Phi \nabla_\theta \bar{\mathcal{P}}_s \phi \rangle - \frac{1}{N} \left\langle \nabla_\theta^2 \bar{\mathcal{P}}_s \phi\left(\nabla_\theta \bar{E} - \int\nabla_\theta\bar{E}p_\theta^{\otimes N}(\mathrm{d}z)\right),  \Phi \right\rangle\\
& - \frac{1}{N} (\langle\nabla_\theta^\top \nabla_\theta \Phi, \nabla_\theta \bar{\mathcal{P}}_s \phi\rangle + 2\Tr( \nabla_\theta \Phi^\top \nabla_\theta^2 \bar{\mathcal{P}}_s\phi))\\
&+ \frac{1}{N} \left\langle \Phi, \nabla_\theta \int \nabla_\theta \bar{E}p_\theta^{\otimes N}(\mathrm{d}z) \nabla_\theta \bar{\mathcal{P}}_s \phi\right\rangle.
\end{align*}
By Lemma~\ref{lem:SES} we have,
\begin{align*}
\left\|\nabla_\theta\bar{E} - \int \nabla_\theta \bar{E}p_\theta^{\otimes N}(\mathrm{d}z)\right\| \leq & 24 \| \nabla \bar{E}\|_{m_\theta,m_x} \sqrt{\frac{\tilde{c}_\theta}{\tilde{r}}}(1+2\tilde{\gamma}^\theta_{m_x})^\frac{3}{2} (1+\|\theta\|^{m_\theta}+\|z\|^{m_x}),
\end{align*}
and from Thm.~\ref{thm:SESDE},
\begin{align*}
\|\nabla_\theta \widetilde{\mathcal{P}}_\infty \nabla_\theta\bar{E}\| =& \widetilde{\mathcal{P}}_\infty \nabla_\theta^2 \bar{E} + \int_0^\infty \int(\nabla_\theta \mathcal{G}_z \widetilde{\mathcal{P}}_s \nabla_\theta \bar{E})p_\theta^{\otimes N}(\mathrm{d}z) \mathrm{d}s\\
\leq &  |\nabla^2\bar{E}|_{m_\theta, m_x}\left(1+ \frac{2}{\tilde{\kappa}}\|\nabla^2\bar{E}\|_{m_\theta,m_x} \right)(1+\tilde{\gamma}^\theta_{2m_x})^2 (1 + \|\theta\|^{2m_\theta}).
\end{align*}

Now let us observe that,
\begin{align*}
\|(\mathcal{G}_\theta - \partial_s) \phi_s^1 \| \leq & \frac{1}{N} \left(\left\|\nabla_\theta\int\nabla_\theta\bar{E}p_\theta^{\otimes N}(\mathrm{d}z)\right\|\|\Phi\| + \|\nabla_\theta \bar{E}\|\|\nabla_\theta\Phi\| + \|\nabla_\theta^2\Phi\|\right)\|\nabla_\theta\bar{\mathcal{P}}_s\phi\|\\
& + \frac{1}{N} \left(\left\|\nabla_\theta\bar{E}-\int\nabla_\theta\bar{E}p_\theta^{\otimes N} (\mathrm{d}z) \right\|\|\Phi\|+ 2 \|\nabla_\theta \Phi\|\right)\|\nabla_\theta^2\bar{\mathcal{P}}_s\phi\|_F,
\end{align*}
Applying the results from Lemma~\ref{lem:averagedDE}, \ref{ass:poly}, \eqref{eq:|phi|}, \eqref{eq:|nablaphi|} and \eqref{eq:|nabla2phi|} to the above, one obtains,
\begin{align*}
\|(\mathcal{G}_\theta-\partial_s)\phi_s^1\|\leq &\frac{1}{N} \bigg(\frac{K^2}{\tilde{\kappa}+1} |\nabla^2\bar{E}|_{m_\theta,m_x}(1+\|\theta\|^{3m_\theta}+\|z\|^{3m_x})\|\nabla_\theta\bar{\mathcal{P}}_s\phi\|\\
&\qquad \quad+ K^2 (1+\|\theta\|^{2m_\theta}+\|z\|^{2m_x})\|\nabla_\theta^2\bar{\mathcal{P}}_s\phi\|\bigg)\\
\leq& \frac{K^2}{N} \left(\frac{|\nabla^2\bar{E}|_{m_\theta,m_x}}{\tilde{\kappa} + 1} +2 \right) (1+ \|\theta\|^{3m_\theta} + \|z\|^{3m_x}) (\|\nabla_\theta\bar{\mathcal{P}}_s\phi\|+ \|\nabla_\theta^2\bar{\mathcal{P}}_s\phi\|)\\
\leq & \frac{2K^2}{N} \left(\frac{|\nabla^2\bar{E}|_{m_\theta,m_x}}{\tilde{\kappa} + 1} +2 \right) \|\nabla_\theta\phi\|_{m_\theta} e^{-\bar{\kappa} s}(1+\bar{\gamma}_{2m_\theta})(1+ \|\theta\|^{5m_\theta} + \|z\|^{3m_x}) .
\end{align*}
Thus, the desired result is obtained.
\end{proof}

\section{Numerical Methods}\label{sec:numerical}
In the following section we will introduce results regarding numerical integrators for the proposed system \eqref{eq:basesde}. In line with the results identified above, we seek to identify explicit, \gls*{uit}, weak error bounds between the $n$th solution to the numerical integrators and the corresponding time solution to the multiscale system. We begin by considering an analogue to the PCD scheme, the \gls*{spcdem}, to establish a novel error bound for the PCD algorithm.

We are particularly interested in looking at the case where $\varepsilon$ is close to 0, as the difference between the proposed multiscale system, \eqref{eq:basesde}, and the averaged regime \eqref{eq:averagedtheta}, scales with $O(\varepsilon)$. However, as one may expect from a time-rescaling of order $1/\varepsilon$, the stiffness of the \gls*{sde} grows inversely to this rescaling. To address this we will also consider an alternative numerical discretisation based on the \gls*{srock} scheme, termed the \gls*{spcd}. Indeed, we will show novel results for the asymptotic behaviour of the scheme, in line with the \gls*{uit} results established above.

To show \gls*{uit} results, we will need to consider the case where the multiscale system \eqref{eq:basesde} converges to the stationary measure and is ergodic. This will allow us to use results from \cite{Crisan2021-wg} and \cite{highorder2014Abdulle, S-ROCK} to show \gls*{uit} convergence of the Euler--Maruyama integrator and the \gls*{srock} integrator respectively. Before we turn our attention to the individual results, we show a series of common assumptions, that ensure ergodic behaviour and strong exponential stability for \eqref{eq:basesde}, the system being discretised.

First, we start with the gradient Lipschitz assumption on the energy function $E:\mathbb{R}^{d_\theta+d_x}\to\mathbb{R}$.
\begin{assumptionp}{$(A_L)$} \label{ass:lip}
Suppose there exist a constant $L>1$, independent of $(\theta, x)^\top$ or $(\theta', x')^\top$, such that
\begin{equation*}
\|\nabla E(\theta, x)-\nabla E(\theta', x')\|\leq \frac{L}{2}\sqrt{\|x-x'\|^2 + \|\theta-\theta'\|^2},
\end{equation*}
for all $\theta,\theta'\in\mathbb{R}^{d_\theta}$ and $x,x'\in\mathbb{R}^{d_x}$.
\end{assumptionp}

\begin{remark}
It is quite easy to note the natural extension of these results to $\bar{E}$. Indeed \ref{ass:lip} follows naturally, with Lipschitz constant $L$ in the $x$-gradients,
\begin{equation*}
\|\nabla_z \bar{E}(\theta, z) - \nabla_z\bar{E}(\theta, z')\| \leq L \sqrt{\|\theta-\theta'\|^2 + \|z-z'\|^2},
\end{equation*}
and $NL$ in the $\theta$-gradients,
\begin{equation*}
\|\nabla_\theta \bar{E}(\theta, z)-\nabla_\theta\bar{E}(\theta', z')\|\leq NL\sqrt{\|\theta-\theta'\|^2 + \|z-z'\|^2}.
\end{equation*}
\end{remark}

\begin{assumptionp}{$(A_\mu)$}\label{ass:dissjoint}
    Suppose that for our choice of $E$, there exists a pair of constants $r_\varepsilon, b_\varepsilon\in\mathbb{R}_+$, such that,
    \begin{equation*}
        \frac{1}{N}\langle\nabla_\theta \bar{E}(\theta, z), \theta\rangle - \frac{1}{\varepsilon}\langle \nabla_z\bar{E}(\theta,z), z\rangle \leq -r_\varepsilon(\|\theta\|^2+\|z\|^2) + b_\varepsilon,
    \end{equation*}
    for all $\theta\in\mathbb{R}^{d_\theta}$, $z\in\mathbb{R}^{Nd_x}$ and $\varepsilon>0$.
\end{assumptionp}

For a more thorough treatment of how this implies ergodicity see \cite{MATTINGLY2002185}. Let us further note that \ref{ass:dissjoint} implies that \eqref{eq:basesde} has a unique stationary measure $\pi^\varepsilon$, which can be shown with a proof along the lines of that in Thm.~\ref{thm:avconv}. 

In some of the following proofs, for simplicity, we will denote our system \eqref{eq:basesde} as a single \gls*{sde} in $\mathbb{R}^{d_\theta + Nd_x}$, given as,
\begin{equation*}
\mathrm{d}S_t = f(S_t)\mathrm{d}t + \sqrt{\gamma}\mathrm{d}W_t,
\end{equation*}
where $W_t$ is a $d_\theta + Nd_x$-dimensional Brownian Motion,
\begin{equation*}
f(\theta, z) = \begin{pmatrix}\frac{1}{N}\nabla_\theta \bar{E}(\theta,z)\\ -\frac{1}{\varepsilon}\nabla_z\bar{E}(\theta,z)\end{pmatrix}, \qquad \gamma=\begin{pmatrix} \sqrt{\frac{2}{N}} I_{d_\theta} & 0\\
0 & \sqrt{\frac{2}{\varepsilon}} I_{Nd_x}  
\end{pmatrix}.
\end{equation*}
Let us now consider the $m$-step \gls*{srock} algorithm for the process $S_t$, denoted by $\hat{S}_n$ for $n\geq 0$.

To show explicit bounds for the ergodic error established with Thm.~4.3 from \cite{highorder2014Abdulle} and Thm.~3.2 in \cite{Crisan2021-wg}, we will need to replicate some of the semigroup derivative estimates established above for the semigroup of the full system $\mathcal{P}^\varepsilon$. To do this we require an analogue of \ref{ass:driftfroz} or \ref{ass:driftav} for the joint system.

\begin{assumptionp}{$(A_\kappa)$}\label{ass:driftjoint}
    Suppose there exists a constant $\kappa\in\mathbb{R}_+$, such that the following drift condition is satisfied,
    \begin{align*}
        \langle \zeta,  \nabla f \zeta\rangle + \Tr(\eta^\top \nabla^2 f \zeta) + 2 \Tr(\eta \nabla f \eta^\top) + \Tr(\xi^\top \nabla^3 f \zeta) \qquad &\\+ 3 \sum_{i,j,k,l=1}^{d_z} \xi_{ijk} (\partial_j f_l \xi_{ijk} + \partial_{ij} f_l \eta_{kl}) + \|\eta\|_F^2 + \|\xi\|_F^2 &\geq \kappa (\|\zeta\|^2 + \|\eta\|_F^2 +\|\xi\|_F^2),
    \end{align*}
    for all $\zeta\in\mathbb{R}^{d_\theta + Nd_x}$, $\eta\in\mathbb{R}^{(d_\theta + Nd_x)^2}$ and $\xi\in\mathbb{R}^{(d_\theta + Nd_x)^3}$, where $\eta$ and $\xi$ are symmetric.
\end{assumptionp}

We can now show the Lemmas \ref{lem:jointmombound} and \ref{lem:derivativejoint}, which replicate some results from the ``frozen'' process studied for the Poisson Equation, now applied to the joint process \eqref{eq:basesde}, under \ref{ass:dissjoint} and \ref{ass:driftjoint}.

\begin{lemma}\label{lem:jointmombound}
For the semi-group $\mathcal{P}_t^\varepsilon$ of the process \eqref{eq:basesde} under \ref{ass:dissjoint},
\begin{equation*}
\mathcal{P}_t^\varepsilon\|s\|^k \leq e^{-\alpha_kt}\|s\|^k + \gamma_k,
\end{equation*}
with,
\begin{equation*}
\alpha_k=\frac{k r_\varepsilon}{2}, \qquad \gamma_k = \left(\frac{2(b_\varepsilon +  d_z + k-2)}{r_\varepsilon}\right)^\frac{k}{2},
\end{equation*}
for all $s\in\mathbb{R}^{d_\theta + Nd_x}$, $t\geq 0$ and $k\geq 2$.
\end{lemma}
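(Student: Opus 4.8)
The plan is to prove this exactly as the joint-system analogue of Lemma~\ref{lem:boundedmom}: first establish a Lyapunov (drift) inequality $\mathcal{G}^\varepsilon\|s\|^k\le-\alpha_k\|s\|^k+\alpha_k\gamma_k$ for the function $s=(\theta,z)^\top\mapsto\|s\|^k=(\|\theta\|^2+\|z\|^2)^{k/2}$, where $\mathcal{G}^\varepsilon=\mathcal{G}_\theta+\tfrac1\varepsilon\mathcal{G}_z$ is the generator of \eqref{eq:basesde}, and then integrate it along the semigroup $\mathcal{P}^\varepsilon_t$.

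For the drift inequality, using $\nabla\|s\|^k=k\|s\|^{k-2}s$ and $\nabla^2\|s\|^k=k\|s\|^{k-2}I+k(k-2)\|s\|^{k-4}ss^\top$ (which is continuous for $k\ge2$, so It\^{o} applies), the first-order part of $\mathcal{G}^\varepsilon\|s\|^k$ is $k\|s\|^{k-2}\big(\tfrac1N\langle\nabla_\theta\bar E,\theta\rangle-\tfrac1\varepsilon\langle\nabla_z\bar E,z\rangle\big)$, which by \ref{ass:dissjoint} is at most $k\|s\|^{k-2}(-r_\varepsilon\|s\|^2+b_\varepsilon)=-kr_\varepsilon\|s\|^k+kb_\varepsilon\|s\|^{k-2}$. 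The second-order part $\tfrac1N\Delta_\theta\|s\|^k+\tfrac1\varepsilon\Delta_z\|s\|^k$ evaluates to a term of the form $k\,c_\varepsilon\,\|s\|^{k-2}$, with $c_\varepsilon$ collecting the diffusion weights $\tfrac1N,\tfrac1\varepsilon$, the dimensions, and the factor $k-2$; this is the bookkeeping that produces the $d_z+k-2$ in $\gamma_k$. Summing and then absorbing the $\|s\|^{k-2}$ terms into $-\tfrac{kr_\varepsilon}{2}\|s\|^k$ by Young's inequality in the form $c\,\|s\|^{k-2}\le\tfrac{r_\varepsilon}{2}\|s\|^k+\tfrac{r_\varepsilon}{2}\big(\tfrac{2c}{r_\varepsilon}\big)^{k/2}$ (precisely as in the last display of the proof of Lemma~\ref{lem:boundedmom}) gives $\mathcal{G}^\varepsilon\|s\|^k\le-\alpha_k\|s\|^k+\alpha_k\gamma_k$ with $\alpha_k=\tfrac{kr_\varepsilon}{2}$ and $\gamma_k$ as claimed.

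To pass from this to the semigroup bound, write $\partial_t\mathcal{P}^\varepsilon_t\|s\|^k=\mathcal{P}^\varepsilon_t\mathcal{G}^\varepsilon\|s\|^k$; by positivity of the Markov semigroup and $\mathcal{P}^\varepsilon_t1=1$, the drift inequality gives $\partial_t\mathcal{P}^\varepsilon_t\|s\|^k\le-\alpha_k\mathcal{P}^\varepsilon_t\|s\|^k+\alpha_k\gamma_k$; multiplying by $e^{\alpha_k t}$ and integrating over $[0,t]$ yields $\mathcal{P}^\varepsilon_t\|s\|^k\le e^{-\alpha_k t}\|s\|^k+\gamma_k(1-e^{-\alpha_k t})\le e^{-\alpha_k t}\|s\|^k+\gamma_k$.

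The one genuinely delicate point is justifying $\partial_t\mathcal{P}^\varepsilon_t\|s\|^k=\mathcal{P}^\varepsilon_t\mathcal{G}^\varepsilon\|s\|^k$ for the unbounded test function $\|s\|^k$; I would do this by the usual localisation, applying It\^{o} to $\|S_{t\wedge\tau_R}\|^k$ with $\tau_R=\inf\{t\ge0:\|S_t\|\ge R\}$, taking expectations, using the drift inequality for an $R$-uniform bound, and sending $R\to\infty$ by monotone convergence; non-explosion and the needed integrability follow from \ref{ass:lip} and \ref{ass:dissjoint} (cf.\ \cite{MATTINGLY2002185}). That, and keeping the anisotropic weights $\tfrac1N$ versus $\tfrac1\varepsilon$ tracked correctly inside $c_\varepsilon$ (here $\varepsilon\ll1$, so these contribute to the $\varepsilon$-dependence already present in $r_\varepsilon,b_\varepsilon$), are the only deviations from a verbatim repetition of Lemma~\ref{lem:boundedmom}; there is no substantive obstacle.
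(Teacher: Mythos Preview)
Your proposal is correct and is exactly the approach the paper takes: the paper simply states that the proof is identical to that of Lemma~\ref{lem:boundedmom}, and your write-up carries out precisely that computation (drift inequality for $\mathcal{G}^\varepsilon\|s\|^k$ via \ref{ass:dissjoint}, Young's inequality to absorb the $\|s\|^{k-2}$ term, then the integrating-factor/Gronwall step). Your added remarks on localisation and on tracking the anisotropic diffusion weights are extra care beyond what the paper provides, but do not change the argument.
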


We leave the proof for this result out as it is identical to that of Lemma~\ref{lem:boundedmom}.

\begin{lemma}\label{lem:derivativejoint}
Under assumptions \ref{ass:dissjoint}, \ref{ass:driftjoint} and \ref{ass:poly}, the semi-group $\mathcal{P}_t^\varepsilon$ satisfies the following property,
\begin{equation*}
\|\nabla\mathcal{P}_t^\varepsilon\phi(s)\|_F^2 + \|\nabla^2\mathcal{P}_t^\varepsilon\phi(s)\|_F^2 + \|\nabla^3\mathcal{P}_t^\varepsilon\phi(s)\|_F^2 \leq 2\|\nabla\phi\|_m^2 e^{-2\kappa t}(1+\|s\|^m),
\end{equation*}
for $\phi\in C^2_m$ and $s\in\mathbb{R}^{d_\theta+Nd_x}$.
\end{lemma}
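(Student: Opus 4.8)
The plan is to mirror the proof of Lemma~\ref{lem:DEx} almost verbatim, but now for the joint generator $\mathcal{G}^\varepsilon = \mathcal{G}_\theta + \tfrac{1}{\varepsilon}\mathcal{G}_z$ and carrying the estimate up to third-order gradients, since the ergodic-error results from \cite{highorder2014Abdulle, Crisan2021-wg} that we invoke later require control of $\nabla^i \mathcal{P}_t^\varepsilon\phi$ for $i \le 3$. Write $f_t = \mathcal{P}_t^\varepsilon\phi$ and set $\Gamma(f_t) = \|\nabla f_t\|_F^2 + \|\nabla^2 f_t\|_F^2 + \|\nabla^3 f_t\|_F^2$, with all gradients taken in the full variable $s = (\theta, z)$. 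The first step is to compute, for each $i \in \{1,2,3\}$, the commutator $\nabla^i \mathcal{G}^\varepsilon f_t - \mathcal{G}^\varepsilon \nabla^i f_t$; because $\mathcal{G}^\varepsilon$ has the form $\langle b, \nabla\rangle + \tfrac12\,\mathrm{tr}(\sigma\sigma^\top \nabla^2)$ with constant diffusion $\sigma = \gamma$ and drift $b = f$ (in the notation introduced just above the statement), the commutator involves only derivatives of $f$ contracted against derivatives of $f_t$, exactly as in Lemma~\ref{lem:DEx}. This produces, using $(\partial_t - \mathcal{G}^\varepsilon)\|\nabla^i f_t\|_F^2 = 2\,\mathrm{tr}(\nabla^i f_t\,[\nabla^i\mathcal{G}^\varepsilon f_t - \mathcal{G}^\varepsilon\nabla^i f_t]^\top) - 2\,\|\nabla^{i+1} f_t\|_{\gamma}^2$ (the last term being the carré-du-champ correction, which dominates $\|\gamma^{1/2}\nabla^{i+1} f_t\|_F^2$ up to the scaling built into $\gamma$), a differential inequality whose right-hand side is precisely the left-hand side of \ref{ass:driftjoint} up to sign.

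The second step is to invoke \ref{ass:driftjoint}: the assumption is stated exactly so that, after collecting the commutator terms and the carré-du-champ terms, one obtains
\begin{equation*}
(\partial_t - \mathcal{G}^\varepsilon)\,\Gamma(f_t) \le -2\kappa\,\Gamma(f_t).
\end{equation*}
As in Lemma~\ref{lem:DEx}, this is then turned into a semigroup statement via Prop.~3.4 of \cite{Crisan_Ottobre_2016}, giving $\partial_s\,\mathcal{P}^\varepsilon_{t-s}\Gamma(f_t) \le -2\kappa\,\mathcal{P}^\varepsilon_{t-s}\Gamma(f_t)$, and Gronwall's lemma yields
\begin{equation*}
\mathcal{P}^\varepsilon_{t-s}\,\Gamma(f_t) \le e^{-2\kappa s}\,\mathcal{P}^\varepsilon_t\,\Gamma(f_0).
\end{equation*}
Setting $s = t$ gives $\Gamma(f_t) = \Gamma(\mathcal{P}^\varepsilon_t\phi) \le e^{-2\kappa t}\,\mathcal{P}^\varepsilon_t\,\Gamma(\phi)$. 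The third step is to bound $\mathcal{P}^\varepsilon_t\,\Gamma(\phi)$: by \ref{ass:poly}, $\Gamma(\phi)(s) = \|\nabla\phi(s)\|_F^2 + \|\nabla^2\phi(s)\|_F^2 + \|\nabla^3\phi(s)\|_F^2 \le \|\nabla\phi\|_m^2(1 + \|s\|^m)$ up to the polynomial-growth constants (absorbing the higher-order gradients into $\|\nabla\phi\|_m$ as the paper does elsewhere), and then the moment bound Lemma~\ref{lem:jointmombound} controls $\mathcal{P}^\varepsilon_t\|s\|^m$ uniformly in $t$, so that $\mathcal{P}^\varepsilon_t\,\Gamma(\phi)(s) \le 2\|\nabla\phi\|_m^2(1+\|s\|^m)$ after folding $\gamma_m$ into the constant. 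Combining the last two displays gives the claimed inequality.

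The main obstacle is the bookkeeping in the first step: writing out the third-order commutator $\nabla^3\mathcal{G}^\varepsilon f_t - \mathcal{G}^\varepsilon\nabla^3 f_t$ correctly, keeping track of all index permutations and of the $1/N$ and $1/\varepsilon$ weights that distinguish the $\theta$- and $z$-blocks of $f$ and $\gamma$, and verifying that the resulting expression matches term-for-term the left-hand side of \ref{ass:driftjoint} — in particular that the $\varepsilon$-dependence is benign, i.e. that the negative contributions from the $\tfrac{1}{\varepsilon}\mathcal{G}_z$ part only help the contraction so that $\kappa$ can be taken independent of $\varepsilon$. This is exactly the sort of term-matching done implicitly in Lemma~\ref{lem:DEx} (which handled orders one and two for the frozen generator), so the argument is structurally routine; the risk is purely in the algebra of the third-order terms and in confirming the carré-du-champ terms $\|\eta\|_F^2$ and $\|\xi\|_F^2$ appearing in \ref{ass:driftjoint} are precisely what the Itô correction produces. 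Once that identification is made, steps two and three are immediate adaptations of the corresponding parts of Lemma~\ref{lem:DEx} and Lemma~\ref{lem:boundedmom}.
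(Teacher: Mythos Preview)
Your proposal is correct and follows essentially the same approach as the paper: define $\Gamma(f_t)=\|\nabla f_t\|^2+\|\nabla^2 f_t\|_F^2+\|\nabla^3 f_t\|_F^2$, compute the commutators $\nabla^i\mathcal{G}^\varepsilon f_t-\mathcal{G}^\varepsilon\nabla^i f_t$ (the paper writes out the third-order one explicitly as $\nabla^3 f\,\nabla f_t + 3\nabla(\nabla f\,\nabla^2 f_t)$), match the resulting terms against \ref{ass:driftjoint} to get $(\partial_t-\mathcal{G}^\varepsilon)\Gamma(f_t)\le -2\kappa\,\Gamma(f_t)$, and then finish via Prop.~3.4 of \cite{Crisan_Ottobre_2016}, Gronwall, and the moment bound of Lemma~\ref{lem:jointmombound}. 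The only minor point is that \ref{ass:driftjoint} is stated for the full drift $f$ which already carries the $1/\varepsilon$ and $1/N$ factors, so $\kappa$ is simply the constant from that assumption and no separate verification of ``benign $\varepsilon$-dependence'' is needed.
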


\begin{proof}
The proof of this result follows very closely to the results obtained in Lemma~\ref{lem:DEx} and Lemma~\ref{lem:averagedDE}, so we will simply present the key difference between the results presented here and these proofs. Let us redenote $f_t=\mathcal{P}_t^\varepsilon \phi$ and 
\begin{equation*}
\Gamma(f_t) = \|\nabla f_t\|^2 + \|\nabla^2f_t\|_F^2 + \|\nabla^3 f_t\|_F^2.
\end{equation*}
Now we may observe that under \ref{ass:dissjoint}, following the proofs for Lemma~\ref{lem:DEx},
\begin{align*}
(\partial_t-\mathcal{G}^\varepsilon)(\|\nabla f_t\|^2 + \|\nabla^2 f_t\|_F^2) \leq -2 (&\langle\nabla f_t , \nabla f \nabla f_t\rangle + \Tr(\nabla^2 f_t^\top (\nabla^2 f\nabla f_t))\\
& +2\Tr(\nabla^2 f_t \nabla f \nabla^2 f_t) + \|\nabla^2f_t\|_F^2 + \|\nabla^3 f_t\|_F^2).
\end{align*}

Now let us observe that,
\begin{equation*}
(\partial_t-\mathcal{G}^\varepsilon)\|\nabla^3 f_t\|_F^2 = 2\Tr( \nabla^3 f_t^\top (\nabla^3 \mathcal{G}^\varepsilon f_t - \mathcal{G}^\varepsilon \nabla^3 f_t)) - 2\|\nabla^4 f_t\|_F^2,
\end{equation*}
where,
\begin{align*}
\nabla^3\mathcal{G}^\varepsilon f_t - \mathcal{G}^\varepsilon\nabla^3 f_t =& \nabla^3 ( \langle f, \nabla f_t\rangle + 2 \Delta f_t) - ((\nabla^4 f_t) f+ 2 \nabla^3 \Delta f_t)\\
=& \nabla^3 f \nabla f_t + 3 \nabla (\nabla f \nabla^2 f_t),
\end{align*}
which implies that,
\begin{equation*}
(\partial_t-\mathcal{G}^\varepsilon) \|\nabla^3 f_t\|_F^2 \leq 2(\Tr( \nabla^3 f_t^\top \nabla^3 f \nabla f_t) + 3 \Tr( \nabla^3 f_t^\top \nabla (\nabla f \nabla^2 f_t))).
\end{equation*}
Hence, combining the two results above with \ref{ass:driftjoint}, we may now proceed as in Lemma~\ref{lem:DEx}.
\end{proof}

\subsection{Euler--Maruyama}
To establish an analogue to the PCD, we introduce the Euler--Maruyama discretisation for \eqref{eq:basesde}. Recall, that in this case, the two processes differ by the addition of a small noise in the $\theta$-dynamics for \gls*{spcdem}. For a positive step-size $\delta$, the \gls*{spcdem} is given as,
\begin{equation}\label{eq:euler}
\hat{\theta}_{n+1} = \hat{\theta}_n + \delta \frac{1}{N} \nabla_\theta \bar{E}(\hat{\theta}_n, \hat{Z}_n) + \sqrt{\frac{2\delta}{N}} \hat{W}_n^\theta, \qquad \hat{Z}_{n+1}= \hat{Z}_n - \frac{\delta}{\varepsilon}\nabla_z\bar{E}(\hat{\theta}_n, \hat{Z}_n) + \sqrt{\frac{2\delta}{\varepsilon}}\hat{W}_n^z,
\end{equation}
where $\hat{W}_n^\theta = \delta^{-1}(W^\theta_{t_{n+1}} - W^\theta_{t_n})$ and $\hat{W}_n^z = \delta^{-1}(W^z_{t_{n+1}} - W^z_{t_n})$, with $t_n=n\delta$. Recall that the objective of the previous results, was to show weak convergence with a constant independent of $t$. However most results focus on considering finite-time intervals and show results with an exponential dependence on time. For consistency we will consider the result established in \cite{Crisan2021-wg}, which relies on similar assumptions to those used here.

\begin{theorem}{(Thm.~3.2 \cite{Crisan2021-wg})}\label{thm:euler}
Suppose that \ref{ass:lip}, \ref{ass:dissjoint}, \ref{ass:poly} and \ref{ass:driftjoint} hold, then the solution to the Euler--Maruyama integrator \eqref{eq:euler} satisfies the following inequality for all $\phi\in C_m^4$,
\begin{align*}
\left\|\mathbb{E}\phi(\hat{\theta}_n, \hat{Z}_n) - \mathbb{E} \phi(\theta_{t_n}, Z_{t_n})\right\|\leq \frac{8}{\kappa} \left(L+\frac{1}{\varepsilon} + \frac{1}{N}\right)^2 (\|\phi\|_m + \|\nabla^2\phi\|_m)(1 + \|\theta_0\|^{4m} + \|Z_0\|^{4m}) \delta,
\end{align*}
for all $\hat{\theta}_0\in\mathbb{R}^{d_\theta}$, $\hat{Z}_0 \in\mathbb{R}^{Nd_x}$ and $n\geq 1\geq \varepsilon$.
\end{theorem}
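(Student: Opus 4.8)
The plan is to obtain this as a direct consequence of Theorem~3.2 of \cite{Crisan2021-wg}, by checking that the single-SDE reformulation $\mathrm{d}S_t = f(S_t)\mathrm{d}t + \sqrt{\gamma}\,\mathrm{d}W_t$ introduced above meets every hypothesis of that theorem, and then tracking how the abstract constant there specialises to $\varepsilon$, $N$ and $L$. First I would record the global Lipschitz constant of the joint drift: by \ref{ass:lip} and the Remark following it, $\tfrac{1}{N}\nabla_\theta\bar{E}$ is Lipschitz with constant $L$ and $\tfrac{1}{\varepsilon}\nabla_z\bar{E}$ with constant $L/\varepsilon$, so $f$ is globally Lipschitz with a constant of order $L(1+\varepsilon^{-1})$; together with the $1/N$ appearing in the $\theta$-drift and diffusion coefficient this accounts for the $(L+\varepsilon^{-1}+N^{-1})$ factor, which enters squared in the order-one weak-error constant exactly as in the cited bound. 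This step also secures existence and uniqueness of strong solutions of \eqref{eq:basesde}.

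Next I would verify the remaining structural requirements. The dissipativity hypothesis of \cite{Crisan2021-wg} is precisely \ref{ass:dissjoint}, which simultaneously gives the moment bound of Lemma~\ref{lem:jointmombound}—hence the $(1+\|\theta_0\|^{4m}+\|Z_0\|^{4m})$ dependence on the initial datum—and ergodicity of \eqref{eq:basesde} towards $\pi^\varepsilon$. The polynomial-growth and smoothness conditions on $f$, and the admissibility of test functions $\phi\in C^4_m$, follow from \ref{ass:poly}. The one genuinely substantive ingredient is the uniform-in-time control of the first three spatial derivatives of the semigroup $\mathcal{P}^\varepsilon_t$, which is what upgrades the weak error from an exponentially-growing-in-$t$ estimate to a \gls*{uit} one; this is supplied by Lemma~\ref{lem:derivativejoint}, whose proof rests on \ref{ass:driftjoint}. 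I would point out explicitly that the contraction rate $\kappa$ from \ref{ass:driftjoint} is the quantity controlling the semigroup-derivative decay, so that the $8/\kappa$ prefactor in the statement is traced to exactly that constant.

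With the hypotheses verified, applying Theorem~3.2 of \cite{Crisan2021-wg} yields the claimed inequality, and the only remaining work is cosmetic: substituting our notation and our explicit constants—the Lipschitz constant $\lesssim L+\varepsilon^{-1}+N^{-1}$, the decay rate $\kappa$, and the moment bounds from Lemma~\ref{lem:jointmombound}—into their abstract constant. The main obstacle I anticipate is purely bookkeeping of the $\varepsilon$-dependence: one must check that no hidden power of $1/\varepsilon$ sneaks in beyond the squared Lipschitz factor, in particular that the constant from \cite{Crisan2021-wg} depends only on the Lipschitz constant, the dissipativity constants and the derivative-decay rate $\kappa$, and not, e.g., on the conditioning of the diffusion matrix $\gamma$ or on the step-size beyond the stated regime $n\ge 1\ge\varepsilon$; since the stiff regime $\varepsilon\ll 1$ is exactly the case of interest, getting this dependence honest is the whole point of the estimate.
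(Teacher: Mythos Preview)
Your proposal is correct and matches the paper's approach: the theorem is imported directly from \cite{Crisan2021-wg} (as flagged in its title), and the paper does not supply a separate proof beyond having established Lemmas~\ref{lem:jointmombound} and~\ref{lem:derivativejoint} under \ref{ass:dissjoint} and \ref{ass:driftjoint} precisely so that the hypotheses of that external result are in force. Your verification of the Lipschitz, dissipativity, polynomial-growth and semigroup-derivative conditions, and your tracking of the $\kappa$ and $(L+\varepsilon^{-1}+N^{-1})$ dependence, is exactly the bookkeeping the paper leaves implicit.
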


Note that in standard works one may find Milstein-type results with exponential time dependence on the weak error bound (see e.g. \cite{Kloeden2010-va}). We may now combine this result with the result in Thm.~\ref{thm:averror} via a simple triangle inequality, to obtain the following result for our PCD-like scheme \gls*{spcdem}.

\begin{theorem}
Suppose that the assumptions of Thm.~\ref{thm:averror} and Thm.~\ref{thm:euler} hold. Then for all $\phi\in C^4_m$,
\begin{align*}
\left\|\mathbb{E}_{\hat{\pi}^\varepsilon}\phi(\hat{\theta}) - \mathbb{E}_{\pi^0} \phi(\theta)\right\|\leq& \underbrace{\varepsilon C\|\nabla_\theta \phi\|_{m} (1+\gamma_{4m})}_\text{averaging error} + \underbrace{\frac{8}{\kappa} \left(L+\frac{1}{\varepsilon} + \frac{1}{N}\right)^2 \delta (\|\phi\|_m+ \|\nabla^2\phi\|_m ) (1+ \gamma_{4m})}_\text{EM weak error},
\end{align*}
where $\hat{\pi}^\varepsilon$ is the stationary measure of \eqref{eq:euler} and the constant $C$ is the same as that given in Thm.~\ref{thm:averror}.
\end{theorem}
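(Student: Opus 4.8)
The claim follows by a triangle inequality through the continuous-time slow--fast diffusion \eqref{eq:basesde}, separating the total error into the Euler--Maruyama weak error (Thm.~\ref{thm:euler}) and the averaging error (Thm.~\ref{thm:averror}), and then passing to the stationary limit. Fix $\phi\in C^4_m$, regarded where necessary as a ($z$-independent) function on the joint space; since $C^4_m\subset C^2_m$, both theorems apply with $m_\theta=m_x=m$. Let $\pi^\varepsilon$ be the unique stationary measure of \eqref{eq:basesde} guaranteed by \ref{ass:dissjoint}, and recall that under \ref{ass:lip} and \ref{ass:dissjoint} the scheme \eqref{eq:euler} is geometrically ergodic with stationary measure $\hat\pi^\varepsilon$ for $\delta$ small. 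Couple the three processes by taking $(\hat\theta_0,\hat Z_0)=(\theta_0,Z_0)$ with $(\theta_0,Z_0)\sim\pi^\varepsilon$ and $\bar\theta_0=\theta_0$, so that $(\theta_t,Z_t)\sim\pi^\varepsilon$ for all $t\ge 0$.

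For $n\ge 1$ set $t_n=n\delta$ and write
\begin{equation*}
\bigl\|\mathbb{E}\phi(\hat\theta_n,\hat Z_n)-\mathbb{E}\phi(\bar\theta_{t_n})\bigr\|\le\bigl\|\mathbb{E}\phi(\hat\theta_n,\hat Z_n)-\mathbb{E}\phi(\theta_{t_n},Z_{t_n})\bigr\|+\bigl\|\mathbb{E}\phi(\theta_{t_n},Z_{t_n})-\mathbb{E}\phi(\bar\theta_{t_n})\bigr\|.
\end{equation*}
Conditioning on the shared initial state, Thm.~\ref{thm:euler} bounds the first summand by $\tfrac{8}{\kappa}(L+\tfrac1\varepsilon+\tfrac1N)^2(\|\phi\|_m+\|\nabla^2\phi\|_m)\,\mathbb{E}_{\pi^\varepsilon}[1+\|\theta_0\|^{4m}+\|Z_0\|^{4m}]\,\delta$, uniformly in $n$, while Thm.~\ref{thm:averror} bounds the second by $\varepsilon C\|\nabla_\theta\phi\|_m\,\mathbb{E}_{\pi^\varepsilon}[1+\|\theta_0\|^{5m}+\|Z_0\|^{3m}]$, also uniformly in $n$ since that estimate carries no time dependence. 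Both expectations are finite and controlled by $\gamma$-type constants via the stationary moment bound $\mathbb{E}_{\pi^\varepsilon}\|s\|^k\le\gamma_k$ of Lemma~\ref{lem:jointmombound} (using $\|\theta\|^k+\|z\|^k\le\|s\|^k$ for $k\ge 2$), which we collect into the single factor $1+\gamma_{4m}$ appearing in the statement.

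Finally, send $n\to\infty$. The uniform-in-time character of the two bounds is what makes this work: their right-hand sides do not depend on $n$, so it suffices to pass to the limit on the left. Geometric ergodicity of \eqref{eq:euler} gives $\mathcal{L}(\hat\theta_n,\hat Z_n)\to\hat\pi^\varepsilon$ and Thm.~\ref{thm:avconv} gives $\mathcal{L}(\bar\theta_{t_n})\to\pi^0$; since $\phi$ has at most polynomial growth, convergence of $\mathbb{E}\phi$ along these sequences follows by combining the weak convergence with uniform-in-$n$ moment control (the discrete analogue of Lemma~\ref{lem:jointmombound} for the scheme, and Lemma~\ref{lem:boundedmomav} for the averaged flow), which supplies the required uniform integrability. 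Taking limits then yields exactly the asserted inequality, with $C$ as in Thm.~\ref{thm:averror}.

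The principal obstacle is precisely this passage to the stationary limit: because the admissible test functions grow polynomially, weak convergence of the iterate laws is not enough to conclude convergence of $\mathbb{E}\phi$, so one must supply uniform moment estimates — which is exactly why uniform-in-time error bounds (rather than finite-horizon, Milstein-type bounds with exponentially growing constants, which would blow up as $n\to\infty$) are indispensable here. A secondary, purely bookkeeping difficulty is that the polynomial prefactors in Thm.~\ref{thm:averror} and Thm.~\ref{thm:euler} carry different exponents and must be absorbed into a common moment bound of $\pi^\varepsilon$; this is harmless since every monomial is a valid Lyapunov function for \eqref{eq:basesde}, but it requires Lemma~\ref{lem:jointmombound} to be invoked at the appropriate order.
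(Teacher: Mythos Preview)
Your proposal is correct and follows the same route as the paper, which merely states that the result ``follows via a simple triangle inequality'' combining Thm.~\ref{thm:euler} and Thm.~\ref{thm:averror}. Your write-up is in fact a careful elaboration of that sketch: initialising at $\pi^\varepsilon$, applying the two uniform-in-time bounds conditionally and then averaging, and passing $n\to\infty$ using ergodicity together with the moment bounds of Lemma~\ref{lem:jointmombound} are exactly the missing details, and your remark that the \gls*{uit} nature of both estimates is what permits the stationary limit is the point the paper leaves implicit.
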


\subsection{S-ROCK}
The \glsfirst*{srock} algorithm is particularly well-suited for stiff \glspl*{sde}, while maintaining order 1 strong stability with an explicit method and with a large mean-square stable domain \cite{S-ROCK}. The model expands the use of Chebyshev methods for stiff ODEs to the treatment of semi-stiff \glspl*{sde}, showing the availability of stable, explicit methods for these processes. For our proposed system \eqref{eq:basesde}, the $m$-step \gls*{srock} algorithm is as follows: given step-size $\delta>0$, initialisations $\hat{\theta}_n\in\mathbb{R}^{d_\theta}$ and $\hat{Z}_n=(\hat{X}_n^1,\dots,\hat{X}_n^N)^\top\in\mathbb{R}^{Nd_x}$, the one-step update is,
\begin{align}
&\text{\makebox[.5\linewidth][c]{$\theta$\textit{-dynamics under S-ROCK,}}}\nonumber\\
K_0^\theta =& \hat{\theta}_n\nonumber\\
K_1^\theta = & K_0^\theta + \frac{\delta}{m^2N} \nabla_\theta \bar{E}(K_0^\theta, K_0^z)\nonumber\\
K_l^\theta = & \frac{2\delta}{m^2N} \nabla_\theta \bar{E}(K_{l-1}^\theta, K_{l-1}^z) + 2 K_{l-1}^\theta - K_{l-2}^\theta\nonumber\\
K_{m-1}^\theta =& \frac{2\delta}{m^2N} \nabla_\theta \bar{E}(K_{m-2}^\theta, K_{m-2}^z) + 2 K_{m-2}^\theta - K_{m-3}^\theta + \sqrt{\frac{\delta}{2N}} \hat{W}^\theta_n\nonumber\\
\hat{\theta}_{n+1} = K^\theta_m =& \frac{2\delta}{m^2N} \nabla_\theta \bar{E}(K_{m-1}^\theta, K_{m-1}^z) + 2 K_{m-1}^\theta - K_{m-2}^\theta, \label{eq:srock}\\
&\text{\makebox[.5\linewidth][c]{\textit{Particle dynamics under S-ROCK,}}}\nonumber\\
K_0^z =& \hat{Z}_n\nonumber\\
K_1^z = & K_0^z - \frac{\delta}{m^2\varepsilon} \nabla_z \bar{E}(K_0^\theta, K_0^z)\nonumber\\
K_l^z = & -\frac{2\delta}{m^2\varepsilon} \nabla_z \bar{E}(K_{l-1}^\theta, K_{l-1}^z) + 2 K_{l-1}^z - K_{l-2}^z\nonumber\\
K_{m-1}^z =& -\frac{2\delta}{m^2\varepsilon} \nabla_z \bar{E}(K_{m-2}^\theta, K_{m-2}^z) + 2 K_{m-2}^z - K_{m-3}^z + \sqrt{\frac{\delta}{2\varepsilon}} \hat{W}^z_n\nonumber\\
\hat{Z}_{n+1} = K^z_m =& -\frac{2\delta}{m^2\varepsilon} \nabla_z \bar{E}(K_{m-1}^\theta, K_{m-1}^z) + 2 K_{m-1}^z - K_{m-2}^z.\label{eq:srock_}
\end{align}
The algorithm has $m$ interleaving steps, where $m>2$, though, as can be seen in the proofs below, this attenuates the stiffness of the drift term by a factor of $1/m^2$. The proof presented below for the error bound of the \gls*{srock} algorithm applied to our problem is closely related to the proofs of Thm.~3.1 in \cite{S-ROCK} and Thm.~3.4 from \cite{Burrage2001}, though, to obtain quantitative bounds, we keep track of the coefficients that appear.

\begin{theorem}
The \gls*{srock} algorithm, defined in \eqref{eq:srock} and \eqref{eq:srock_} and under assumption \ref{ass:lip} satisfies the following error-bound inequality,
\begin{equation*}
\mathbb{E}[\|\hat{\theta}_n-\theta_{t_n}\|^2]^\frac{1}{2} \leq 2\delta C e^{t_n(1+2\lambda+3\delta\lambda^2)},
\end{equation*}
where, $\hat{\theta}_0=\theta_0$ and 
\begin{align*}
C=&\frac{L}{m^2}\left(\frac{1}{N}+ \frac{1}{\varepsilon}\right)^\frac{5}{2} \hspace{-5pt}\left(\frac{4\delta L}{m^2}\left(\frac{1}{N}+\frac{1}{\varepsilon}\right)^\frac{1}{2}\sum_{l=1}^{m-2} c_{m,l+1} \left(\frac{\delta L}{m^2}\left(\frac{1}{N}+\frac{1}{\varepsilon}\right)^2\right)^{l-1} \hspace{-8pt}+ \sqrt{\delta} \right),\\
\lambda \leq& C + L\left(\frac{1}{N} + \frac{1}{\varepsilon}\right),
\end{align*}
$c_{i,l}$ is defined in the proof below and $t_n$ is the time-step corresponding to the $n$th iterate of the numerical integrator.
\end{theorem}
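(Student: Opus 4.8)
The plan is to establish a purely strong (mean-square) finite-time estimate: because the diffusion matrix $\gamma$ in \eqref{eq:basesde} is constant and the only hypothesis invoked is the global Lipschitz bound \ref{ass:lip}, this reduces to a one-step consistency estimate followed by a discrete Gronwall inequality, in the spirit of \cite{S-ROCK,Burrage2001}; the S--ROCK-specific work is the quantitative bookkeeping of the $m$ internal Chebyshev stages and the resulting $1/m^2$ attenuation of the stiffness.

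\textbf{Unrolling the stages.} Both recursions \eqref{eq:srock}--\eqref{eq:srock_} have the form $K_l = \beta_l\, g(K_{l-1}) + 2K_{l-1}-K_{l-2}$, with $\beta_l\in\{\tfrac{\delta}{m^2N},\tfrac{2\delta}{m^2N}\}$ in the $\theta$-block and $\beta_l\in\{\tfrac{\delta}{m^2\varepsilon},\tfrac{2\delta}{m^2\varepsilon}\}$ in the particle block, and the Brownian increment is injected only at stage $m-1$. Since the homogeneous recurrence $u_l=2u_{l-1}-u_{l-2}$ has only affine solutions, one solves explicitly and obtains
\begin{equation*}
\hat{\theta}_{n+1}=\hat{\theta}_n+\tfrac{1}{N}\sum_{l=1}^{m}c_{m,l}\,\nabla_\theta\bar E(K_{l-1}^\theta,K_{l-1}^z)+2\sqrt{\tfrac{\delta}{2N}}\,\hat W_n^\theta+R_n^\theta,
\end{equation*}
together with the analogue for $\hat Z_{n+1}$, where $c_{m,l}\ge 0$ are the weights of the underlying Runge--Kutta--Chebyshev method, each of size $O(m^{-2})$ and with $\sum_{l}c_{m,l}=\delta$ by consistency, and $R_n^\theta$ collects the Lipschitz-controlled terms produced because the last stage re-evaluates the drift at $K_{m-1}^\theta$, which already carries the noise. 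Note $2\sqrt{\delta/(2N)}=\sqrt{2\delta/N}$, so the diffusive part of one S--ROCK step coincides to leading order with that of \eqref{eq:basesde}.

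\textbf{Stage bounds and one-step error.} An induction on $l$ using \ref{ass:lip} shows that, from a common starting point, $\|K_l^\theta-\hat\theta_n\|+\|K_l^z-\hat Z_n\|$ is bounded by $\tfrac{\delta L}{m^2}\bigl(\tfrac1N+\tfrac1\varepsilon\bigr)\sum_{j}c_{m,j}\bigl(\tfrac{\delta L}{m^2}(\tfrac1N+\tfrac1\varepsilon)^2\bigr)^{j-1}$ times a linear-growth factor; this is exactly the geometric sum over intermediate stages occurring in the constant $C$, and is where the $1/m^2$ attenuation absorbs the $m$ stages. Comparing one S--ROCK step issued from $(\theta_{t_n},Z_{t_n})$ with the corresponding exact increment of \eqref{eq:basesde} over $[t_n,t_{n+1}]$, the Brownian contributions are identical by construction (the weight $\sqrt{\delta/(2N)}$ at stage $m-1$ becoming $\sqrt{2\delta/N}$ after the final doubling), and the drift contributions differ only by (i) the order-$\delta^2$ consistency defect of the Chebyshev weights and (ii) the order-$\delta^2$ fluctuation of $(\theta_s,Z_s)$ across the step, both controlled via \ref{ass:lip}; collecting constants (and the drift growth, as in Thm.~\ref{thm:euler}) gives a one-step mean-square error of order $\delta^2$ with constant essentially $C\,(1+2\lambda+3\delta\lambda^2)$ and $\lambda\le C+L(\tfrac1N+\tfrac1\varepsilon)$ identified as the effective one-step Lipschitz constant of the composed S--ROCK map.

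\textbf{Propagation and main obstacle.} Writing $e_n=\mathbb{E}[\|\hat\theta_n-\theta_{t_n}\|^2+\|\hat Z_n-Z_{t_n}\|^2]^{1/2}$, I split $e_{n+1}$ into the transported error from step $n$ plus the one-step error above; applying \ref{ass:lip} stage-by-stage to the unrolled one-step map bounds the transport multiplier by $1+\delta(1+2\lambda+3\delta\lambda^2)$, the quadratic $\delta\lambda^2$ term and the integer factors coming from chaining the coupled $\theta$- and $z$-stage recursions and the squaring inherent in the mean-square bound. This gives the discrete Gronwall recursion $e_{n+1}\le(1+\delta(1+2\lambda+3\delta\lambda^2))\,e_n+2\delta^2C(1+2\lambda+3\delta\lambda^2)$; since $e_0=0$, summing the geometric series and using $(1+a)^n\le e^{na}$ yields $e_n\le 2\delta C\,e^{t_n(1+2\lambda+3\delta\lambda^2)}$, which dominates $\mathbb{E}[\|\hat\theta_n-\theta_{t_n}\|^2]^{1/2}$. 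The hard part is making the stage bookkeeping quantitative enough that the $1/m^2$ attenuation genuinely cancels the $m$ Chebyshev stages and that the cross-coupling between the $\theta$- and $z$-stages is folded into the single constant $\lambda$ without leaking extra powers of $1/\varepsilon$; once \ref{ass:lip} is in force the remaining steps are routine.
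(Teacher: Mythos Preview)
Your overall architecture is right---unroll the Chebyshev stages, extract a local error, propagate by discrete Gronwall---but there is a genuine gap in the local-error step that breaks the argument as written.

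You claim that after matching the Brownian increments, ``the drift contributions differ only by (i) the order-$\delta^2$ consistency defect \ldots\ and (ii) the order-$\delta^2$ fluctuation,'' and hence the one-step mean-square error is $O(\delta^2)$. This is not correct. Because the final stage re-evaluates the drift at $K_{m-1}$, which already carries $\sqrt{\delta/(2N)}\,\hat W_n^\theta$ and $\sqrt{\delta/(2\varepsilon)}\,\hat W_n^z$, the one-step update \eqref{eq:srock}--\eqref{eq:srock_} contains a term of the form $\tfrac{\delta}{m^2}\nabla^2\bar E\cdot\sqrt{\delta}\,\hat W_n$. Its contribution to the local error $l_{n+1}$ has root-mean-square size $O(\delta^{3/2})$, not $O(\delta^2)$; only its \emph{mean} is $O(\delta^2)$. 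The paper makes exactly this distinction, recording
\[
\bigl(\mathbb{E}\|l_{n+1}\|^2\bigr)^{1/2}=O(\delta^{3/2}),\qquad \|\mathbb{E}\,l_{n+1}\|=O(\delta^2),
\]
and then runs the Milstein-type recursion of \cite{Burrage2001} on $\mathbb{E}\|\varepsilon_n\|^2$ (not on its square root), using the second estimate to control the cross term $\mathbb{E}\langle l_{n+1},\varepsilon_n\rangle$ via $\tfrac{1}{\delta}\|\mathbb{E}l_{n+1}\|^2+\delta\,\mathbb{E}\|\varepsilon_n\|^2$. That is what produces the inhomogeneity $\mathbb{E}\|l_{n+1}\|^2+\tfrac{1}{\delta}\|\mathbb{E}l_{n+1}\|^2=O(\delta^3)$ and hence the global $O(\delta)$ bound.

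Your recursion $e_{n+1}\le(1+\delta(\cdots))e_n+2\delta^2C(\cdots)$ with $e_n=(\mathbb{E}\|\cdot\|^2)^{1/2}$ cannot be obtained by a triangle inequality alone: plugging in the true local size $O(\delta^{3/2})$ would give only a global $O(\delta^{1/2})$ bound. The missing ingredient is precisely the separate bookkeeping of $\|\mathbb{E}l_n\|$ and $\mathbb{E}\|l_n\|^2$, followed by a recursion in the \emph{squared} error. A minor point: the $c_{i,l}$ in the statement are not the Chebyshev stage weights summing to $\delta$ that you describe, but the specific combinatorial constants $c_{i,j}=\prod_{k=0}^{j-1}\tfrac{i^2-k^2}{(2k+1)(2k+2)}$ arising from iterating the Taylor remainders $R_l^\theta,R_l^z$ across stages.
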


\begin{proof}
Let us consider the update scheme given in \eqref{eq:srock}. In particular the proof assumes $m>2$, but the argument follows similarly for $m=2$. The first couple updates,
\begin{align*}
\begin{split}
K_0^\theta=&\hat{\theta}_n,\\
K_1^\theta =& \hat{\theta}_n +\frac{\delta}{m^2N}\nabla_\theta \bar{E}(\hat{\theta}_n, \hat{Z}_n),
\end{split}\qquad
\begin{split}
K_0^z =& \hat{Z}_n,\\
K_1^z =& \hat{Z}_n - \frac{\delta}{m^2\varepsilon}\nabla_z\bar{E}(\hat{\theta}_n, \hat{Z}_n).
\end{split}
\end{align*}
For the next terms, we will use Taylor's Thm. to obtain the following,
\begin{align*}
K_2^\theta =& \hat{\theta}_n + \frac{4\delta}{m^2N}\nabla_\theta \bar{E}(\hat{\theta}_n, \hat{Z}_n) + \frac{2\delta}{m^2N} R_1^\theta (\hat{\theta}_n, \hat{Z}_n),\\
K_2^z=& \hat{Z}_n - \frac{4\delta}{m^2\varepsilon}\nabla_z \bar{E}(\hat{\theta}_n, \hat{Z}_n) + \frac{2\delta}{m^2\varepsilon} R_1^z(\hat{\theta}_n,\hat{Z}_n),
\end{align*}
where we define, following the Lagrange form of the remainder term,
\begin{equation}\label{eq:taylorcorrector}
\begin{aligned}
R_l^\theta(\hat{\theta}_n, \hat{Z}_n) =& \frac{1}{N} (K_l^\theta-\hat{\theta}_n)\int_0^1 (1-t)\nabla_\theta^2\bar{E}(\hat{\theta}_n+ t(K_l^\theta-\hat{\theta}_n), \hat{Z}_n + t(K_l^z-\hat{Z}_n))\mathrm{d}t\\
&+\frac{1}{N}(K_l^z-\hat{Z}_n)\int_0^1(1-t)\nabla_\theta \nabla_z\bar{E}(\hat{\theta}_n + t(K_l^\theta-\hat{\theta}_n), \hat{Z}_n + t(K_l^z-\hat{Z}_n))\mathrm{d}t,\\
R_l^z(\hat{\theta}_n, \hat{Z}_n) =& \frac{1}{\varepsilon} (\hat{\theta}_n- K_l^\theta)\int_0^1 (1-t)\nabla_\theta\nabla_z\bar{E}(\hat{\theta}_n+ t(K_l^\theta-\hat{\theta}_n), \hat{Z}_n + t(K_l^z-\hat{Z}_n))\mathrm{d}t\\
&+\frac{1}{\varepsilon}(\hat{Z}_n-K_l^z)\int_0^1(1-t)\nabla_z^2\bar{E}(\hat{\theta}_n + t(K_l^\theta-\hat{\theta}_n), \hat{Z}_n + t(K_l^z-\hat{Z}_n))\mathrm{d}t.\\
\end{aligned}
\end{equation}
By induction we obtain,
\begin{align*}
K_l^\theta=&\hat{\theta}_n + \frac{l^2\delta}{m^2N}\nabla_\theta\bar{E}(\hat{\theta}_n, \hat{Z}_n) + \frac{2\delta}{m^2N}\sum_{k=1}^{l-1} (l-k)R_k^\theta(\hat{\theta}_n, \hat{Z}_n),\\
K_l^z=&\hat{Z}_n-\frac{l^2\delta}{m^2\varepsilon}\nabla_z\bar{E}(\hat{\theta}_n, \hat{Z}_n) + \frac{2\delta}{m^2\varepsilon} \sum_{k=1}^{l-1}(l-k) R_k^z(\hat{\theta}_n, \hat{Z}_n),
\end{align*}
for $l\leq m-2$. By combining the previous two results we can observe that all $R_k^\theta, R_k^z=O(\delta)$ and hence, we replicate the result in Thm.~3.1 \cite{S-ROCK}, which gives us that,
\begin{equation}\label{eq:srockorder}
\begin{aligned}
K_l^\theta =&\hat{\theta}_n + \frac{l^2\delta}{m^2N}\nabla_\theta\bar{E}(\hat{\theta}_n, \hat{Z}_n) + O(\delta^2),\\
K_l^z=& \hat{Z}_n -\frac{l^2\delta}{m^2\varepsilon} \nabla_z\bar{E}(\hat{\theta}_n, \hat{Z}_n) + O(\delta^2).
\end{aligned}
\end{equation}

Let us now turn our attention to bounding $R_l^\theta$ and $R^z_l$ for $l\leq m-2$. By \ref{ass:lip},
\begin{align*}
\|R_l^\theta(\hat{\theta}_n, \hat{Z}_n)\| + \|R_l^z(\hat{\theta}_n, \hat{Z}_n)\| \leq & \frac{\delta L}{2m^2}\left(\frac{1}{N}+\frac{1}{\varepsilon}\right)^2 \bigg(i^2\|\nabla\bar{E}(\hat{\theta}_n,\hat{Z}_n)\| \\
& +2\sum_{k=1}^{i-1}(i-k)(\|R_k^\theta(\hat{\theta}_n, \hat{Z}_n)\| + \|R_k^z(\hat{\theta}_n, \hat{Z}_n)\|)\bigg).
\end{align*}
Solving for the left hand side,
\begin{equation*}
\|R_i^\theta (\hat{\theta}_n, \hat{Z}_n)\| + \|R_i^z(\hat{\theta}_n, \hat{Z}_n)\|\leq \|\nabla\bar{E}(\hat{\theta}_n, \hat{Z}_n)\|\sum_{j=1}^i c_{i,j} \left(\frac{\delta L}{m^2} \left(\frac{1}{N}+\frac{1}{\varepsilon}\right)^2\right)^j,
\end{equation*}
where,
\begin{equation*}
c_{i,j}=\prod_{k=0}^{j-1} \frac{i^2-k^2}{(2k+1)(2k+2)}.
\end{equation*}
From this we can observe that the $O(\delta^2)$ terms from \eqref{eq:srockorder} are bounded by,
\begin{equation*}
\frac{4\delta}{m^2} \left(\frac{1}{N}+\frac{1}{\varepsilon}\right) \|\nabla\bar{E}(\hat{\theta}_n, \hat{Z}_n)\| \sum_{j=2}^{i} c_{i,j} \left(\frac{\delta L}{m^2} \left(\frac{1}{N}+\frac{1}{\varepsilon}\right)^2\right)^{j-1}.
\end{equation*}

We now turn our attention to the last terms $K_{m-1}$ and $K_m$. Observe, 
\begin{align*}
K_{m-1}^\theta =& \hat{\theta}_n + \frac{(m-1)^2\delta}{m^2N} \nabla_\theta\bar{E}(\hat{\theta}_n, \hat{Z}_n) + \sqrt{\frac{\delta}{2N}} W_n^\theta + \frac{2\delta}{m^2N}\sum_{k=1}^{m-2} (m-1-k) R_k^\theta(\hat{\theta}_n, \hat{Z}_n),\\
K_{m-1}^z =& \hat{Z}_n - \frac{(m-1)^2\delta}{m^2\varepsilon}  \nabla_z\bar{E}(\hat{\theta}_n, \hat{Z}_n) + \sqrt{\frac{\delta}{2\varepsilon}} W_n^z + \frac{2\delta}{m^2\varepsilon} \sum_{k=1}^{m-2} (m-1-k) R_k^z(\hat{\theta}_n, \hat{Z}_n).
\end{align*}
and
\begin{equation}\label{eq:Y_n+1}
\begin{aligned}
\hat{\theta}_{n+1}=K_m^\theta =& \hat{\theta}_n + \frac{\delta}{N} \nabla_\theta\bar{E}(\hat{\theta}_n, \hat{Z}_n) + \sqrt{\frac{2\delta}{N}} W_n^\theta + \frac{2\delta}{m^2N}\sum_{k=1}^{m-1} (m-k) R_k^\theta(\hat{\theta}_n, \hat{Z}_n),\\
\hat{Z}_{n+1} =K_m^z =& \hat{Z}_n - \frac{\delta}{\varepsilon}  \nabla_z\bar{E}(\hat{\theta}_n, \hat{Z}_n) + \sqrt{\frac{2\delta}{\varepsilon}} W_n^z + \frac{2\delta}{m^2\varepsilon} \sum_{k=1}^{m-1} (m-k) R_k^z(\hat{\theta}_n, \hat{Z}_n).
\end{aligned} 
\end{equation}
Let us introduce the notation $R_k(\hat{\theta}_n, \hat{Z}_n) = \|R_k^\theta(\hat{\theta}_n, \hat{Z}_n)\|+ \|R_k^z(\hat{\theta}_n, \hat{Z}_n)\|$. We note that,
\begin{align*}
\sum_{k=1}^{m-1}(m-k)R_k(\hat{\theta}_n, \hat{Z}_n) \leq & 2\|\nabla\bar{E}(\hat{\theta}_n, \hat{Z}_n)\|\sum_{l=2}^{m-2}c_{m,l} \left(\frac{\delta L}{m^2}\left(\frac{1}{N} + \frac{1}{\varepsilon}\right)^2\right)^{l-1}\\& + R_{m-1}(\hat{\theta}_n, \hat{Z}_n)
\end{align*}
and bound $R_{m-1}^\theta(\hat{\theta}_n, \hat{Z}_n)$ by recalling the definition in \eqref{eq:taylorcorrector} and \ref{ass:lip},
\begin{align*}
R_{m-1}(\hat{\theta}_n, \hat{Z}_n) \leq & \frac{L}{2}\left(\frac{1}{N} + \frac{1}{\varepsilon}\right)\bigg(\frac{(m-1)^2\delta L}{2m^2}\left(\frac{1}{N} + \frac{1}{\varepsilon}\right)\|\nabla\bar{E}(\hat{\theta}_n,\hat{Z}_n)\| + \sqrt{\frac{\delta}{2}}\left(\frac{1}{\sqrt{N}} + \frac{1}{\sqrt{\varepsilon}}\right)\|W_n\|\bigg)\\
& +  \|\nabla\bar{E}(\hat{\theta}_n, \hat{Z}_n)\| \sum_{j=2}^{m-1} c_{m-1,l} \left(\frac{\delta L}{m^2} \left(\frac{1}{N}+\frac{1}{\varepsilon}\right)^2\right)^l\\
\leq& 2\|\nabla\bar{E}(\hat{\theta}_n, \hat{Z}_n)\| \sum_{l=1}^{m-1} c_{m-1, l} \left(\frac{\delta L}{m^2} \left(\frac{1}{N}+\frac{1}{\varepsilon}\right)^2\right)^l \\&+ \frac{L}{2}\left(\frac{1}{N} + \frac{1}{\varepsilon}\right)\sqrt{\frac{\delta}{2}}\left(\frac{1}{\sqrt{N}} + \frac{1}{\sqrt{\varepsilon}}\right)\|\hat{W}_n\|
\end{align*}
which enables the bound,
\begin{align*}
\sum_{k=1}^{m-1}(m-k)R_k(\hat{\theta}_n, \hat{Z}_n) \leq & 2\|\nabla\bar{E}(\hat{\theta}_n, \hat{Z}_n)\|\left(\sum_{l=1}^{m-2}c_{m,l+1} \left(\frac{\delta L}{m^2} \left(\frac{1}{N}+\frac{1}{\varepsilon}\right)^2\right)^l \right)\\
& +  \frac{L}{2}\left(\frac{1}{N} + \frac{1}{\varepsilon}\right)\sqrt{\frac{\delta}{2}}\left(\frac{1}{\sqrt{N}} + \frac{1}{\sqrt{\varepsilon}}\right)\|\hat{W}_n\|
\end{align*}
It is easy to observe from this that the corrector term for the last terms $K_m^\theta$ and $K_m^z$ are of order $\delta^\frac{3}{2}$.

Let us now turn our attention to control over the error. Indeed, the results above will allow us to apply a Milstein type result as in Thm.~3.4 in \cite{Burrage2001}. To do this, let us also consider the Taylor expansion to the solution of the \gls*{sde} \eqref{eq:basesde}, given as,
\begin{equation}\label{eq:sdetaylor}
\begin{aligned}
\theta_t =& \theta_0 + \frac{t}{N} \nabla_\theta\bar{E}(\theta_0, Z_0) + \sqrt{\frac{2}{N}}W_t^\theta + \frac{1}{N}R_t^\theta,\\
Z_t =& Z_0 - \frac{t}{\varepsilon}\nabla_z\bar{E}(\theta_0, Z_0) + \sqrt{\frac{2}{\varepsilon}}W_t^z + \frac{1}{\varepsilon}R_t^z,
\end{aligned}
\end{equation}
where we note that the remainder terms $R_t^\theta$ and $R_t^z$ are bounded by $\frac{Lt^2}{2}$, by \ref{ass:lip}. Let us now set, $\hat{\theta}_n=\theta_t$ and $\hat{Z}_n=Z_t$, to observe that, from the bounds established above,
\begin{align*}
\mathbb{E}[\|\hat{\theta}_{n+1} - \theta_{t+\delta}\|^2 + \|\hat{Z}_{n+1}-Z_{t+\delta}\|^2]^\frac{1}{2} =& O(\delta^\frac{3}{2}),\\
\|\mathbb{E}(\hat{\theta}_{n+1}-\theta_{t+\delta}) + \mathbb{E}(\hat{Z}_{n+1}-Z_{t+\delta})\| =& O(\delta^2),
\end{align*}
where we assume the true solution to \eqref{eq:basesde} and the solution to the numerical integrator \eqref{eq:srock} to be synchronously coupled. We will denote the one-step error, as defined above with $(\hat{\theta}_{n+1} -\theta_{t+\delta}, \hat{Z}_{n+1}-Z_{t+\delta})^\top$ with $l_{n+1}$ (here the two systems are initialised at a common point $(\theta_t, Z_t)^\top$). Let us denote the global error of the \gls*{srock} scheme with $\varepsilon_{n+1}$ and let $r_{n}$ denote the difference between $\hat{\theta}_{n+1}$ and $\hat{Z}_{n+1}$ initialised at $\hat{\theta}_n$ and $\hat{Z}_n$, compared to $\hat{\theta}_{n+1}$ and $\hat{Z}_{n+1}$ initialised at $\theta_t$ and $Z_t$. From this follows the recursion,
\begin{equation*}
\varepsilon_{n+1}=l_{n+1} + \varepsilon_n + r_n.
\end{equation*}
By using the Cauchy--Schwarz inequality and the independence of $l_{n+1}$ and $\varepsilon_n$, we obtain,
\begin{align*}
\mathbb{E}\|\varepsilon_{n+1}\|^2 \leq & \mathbb{E}\|l_{n+1}\|^2 + 2\mathbb{E}\|\varepsilon_n r+n\| + \mathbb{E} \|r_n\|^2 + \mathbb{E}\|\varepsilon_n\|^2\\
& + \frac{2}{\sqrt{\delta}} \|\mathbb{E}l_{n+1}\| \sqrt{\delta} (\mathbb{E}\|\varepsilon_n\|^2)^\frac{1}{2} + 2\mathbb{E}\|l_{n+1}\|^2 + 2 \mathbb{E}\|r_n\|^2\\
\leq & \mathbb{E}\|l_{n+1}\|^2 + \frac{1}{\delta} \|\mathbb{E}l_{n+1}\|^2 + (1+\delta)\mathbb{E}\|\varepsilon_n\|^2 + 3 \mathbb{E}\|r_n\|^2+ 2 \mathbb{E}\|\varepsilon_n\|\|r_n\|.
\end{align*}

Let us now observe that by the previous bounds we have,
\begin{align*}
\|r_n\| \leq  \|\varepsilon_n\|\delta\Bigg(&L \left(\frac{1}{N}+\frac{1}{\varepsilon}\right) +  \frac{ L}{m^2}\left(\frac{1}{N} + \frac{1}{\varepsilon}\right)^2\sqrt{\frac{\delta}{2}}\left(\frac{1}{\sqrt{N}} + \frac{1}{\sqrt{\varepsilon}}\right)\|\hat{W}_n\|\\
&+\frac{4L}{m^2}\left(\frac{1}{N}+\frac{1}{\varepsilon}\right)\sum_{l=1}^{m-2}c_{m,l+1} \left(\frac{\delta L}{m^2} \left(\frac{1}{N}+\frac{1}{\varepsilon}\right)^2\right)^l \Bigg).
\end{align*}
For notational convenience, let us denote the coefficient of $\|\varepsilon_n\|$ by $\delta\lambda$. Hence, we obtain,
\begin{align*}
\mathbb{E}\|\varepsilon_{n+1}\|^2 \leq & \mathbb{E}\|l_{n+1}\|^2 + \frac{1}{\delta} \|\mathbb{E}l_{n+1}\|^2 + (1+ \delta(1 + 2\lambda + 3\delta\lambda^2))\mathbb{E}\|\varepsilon_n\|^2.
\end{align*}
Hence,
\begin{equation*}
\mathbb{E}\|\varepsilon_{n+1}\|^2 \leq e^{n\delta(1+2\lambda+ 3\delta\lambda)} \max_{i\leq n+1}\left(\mathbb{E}\|l_{i}\|^2 + \frac{1}{\delta}\|\mathbb{E}l_{i}\|^2\right).
\end{equation*}
We now recall that,
\begin{equation*}
\mathbb{E}\|l_n\|^2 = O(\delta^3), \qquad \frac{1}{\delta}\|\mathbb{E}l_n\|^2= O(\delta^3),
\end{equation*}
from above and hence the proof is completed by combining the results above.
\end{proof}

We now turn our attention to the asymptotic regime and seek to show that the ergodic average of the \gls*{srock} iterates converges to the expectation under the stationary measure $\pi^\varepsilon$ of the two timescale system \eqref{eq:basesde}. To do this, we will use Thm.~4.3 in \cite{highorder2014Abdulle}, which requires ergodicity (as satisfied under \ref{ass:dissjoint}, discussed above).

A further condition imposed by Thm.~4.3 in \cite{highorder2014Abdulle} is that the numerical scheme $\hat{\theta}_n, \hat{Z}_n$ satisfies the following breakdown of the one-step expectation,
\begin{equation*}
\mathbb{E}[\phi(\hat{\theta}_n, \hat{Z}_n)|\hat{\theta}_{n-1} = \theta, \hat{Z}_{n-1}=z] = \phi(\theta, z) + \delta \mathcal{A}_0\phi(\theta, z) + \delta^2\mathcal{A}_1 \phi(\theta, z)+\dots,
\end{equation*}
for any sufficiently regular $\phi$, where $\mathcal{A}_i$ are operators on $L_2$. It turns out that in the case where our method is at least order one locally, in a weak sense, as in our case, $\mathcal{A}_0$ will coincide with $\mathcal{G}^\varepsilon$ \cite{highorder2014Abdulle}. Indeed, we can verify this to be true for \eqref{eq:srock} as follows: consider a Taylor expansion of $\phi(\hat{\theta}_n, \hat{Z}_n)$ in $\mathbb{E}[\phi(\hat{\theta}_n, \hat{Z}_n)|\hat{\theta}_{n-1}=\theta, \hat{Z}_{n-1}=z]$, centred around $\phi(\hat{\theta}_{n-1}, \hat{Z}_{n-1})$, which gives us,
\begin{equation*}
\mathbb{E}[\phi(\hat{\theta}_n, \hat{Z}_n)|\hat{\theta}_{n-1}=\theta, \hat{Z}_{n-1}=z] = \phi(\theta, z) + \nabla\phi(\theta, z) \mathbb{E}(\hat{\theta}_n-\theta, \hat{Z}_n-z)^\top +\dots 
\end{equation*}
Let us now recall from \eqref{eq:Y_n+1}, that by using the Taylor expansion above we obtain the following operators up to order $\delta^2$,
\begin{align*}
\mathbb{E}[(\hat{\theta}_n -\theta, \hat{Z}_n-z)^\top|\hat{\theta}_{n-1}=\theta, \hat{Z}_{n-1}=z] =& \frac{\delta}{N}\nabla_\theta \bar{E}(\theta,z) + \frac{2\delta}{m^2 N}\sum_{k=1}^{m-1}(m-k)\mathbb{E}R_k^\theta(\theta,z)\\
&-\frac{\delta}{\varepsilon}\nabla_z\bar{E}(\theta,z) + \frac{2\delta}{m^2\varepsilon}\sum_{k=1}^{m-1} (m-k)\mathbb{E}R_k^z(\theta, z)\\
\leq \delta \left(\frac{1}{N} +\frac{1}{\varepsilon}\right)\|\nabla\bar{E}(\theta,z)\|&\left(1+\frac{4}{m^2} \sum_{l=1}^m c_{m,l} \left(\frac{\delta L}{m^2}\left(\frac{1}{N}+\frac{1}{\varepsilon}\right)^2\right)^l\right)
\end{align*}
which similarly extends to the other orders of $\hat{Z}_n-z$. This follows by observing that odd powers of $W_n$ have expectation 0, so fractional powers of $\delta$ vanish. Hence the form required by Thm.~4.3 in \cite{highorder2014Abdulle} is obtained for our scheme \eqref{eq:srock}. Let us now observe that,
\begin{align*}
\mathcal{A}_0=&\mathcal{G}^\varepsilon, \\\mathcal{A}_1=\left(\mathcal{G}^\varepsilon\right)^2 + 6\left(\frac{1}{N^2}\nabla_\theta\bar{E}-\frac{1}{\varepsilon^2}\nabla_z\bar{E}\right)&\nabla^3 + \frac{2}{m^2\delta} \sum_{k=1}^{m-1}(m-k)\mathbb{E}\left(\frac{1}{N}R_k^\theta + \frac{1}{\varepsilon} R_k^z\right)\nabla.
\end{align*}
These results will become relevant in the following theorem.

\begin{theorem}\label{thm:srockasymp}
Suppose our system \eqref{eq:basesde} satisfies \ref{ass:lip}, \ref{ass:poly} and \ref{ass:dissjoint}, then,
\begin{align*}
\lim_{K\to\infty} \frac{1}{K+1} \sum_{k=0}^K \phi(\hat{S}_k) - \int\phi(s)\pi^\varepsilon(\mathrm{d}s) =& \delta \int_0^\infty \int\left(\mathcal{A}_1 - \frac{1}{2}\left(\mathcal{G}^\varepsilon\right)^2\right)\mathcal{P}_t^\varepsilon\phi(s) \pi^\varepsilon(\mathrm{d}s) \mathrm{d}t\\
&+ O(\delta^2),
\end{align*}
for all $\phi\in C^2_{m}$. 

Further, under \ref{ass:driftjoint},
\begin{align*}
\int_0^\infty \int\left(\mathcal{A}_1 - \frac{1}{2}\left(\mathcal{G}^\varepsilon\right)^2\right)\mathcal{P}_t^\varepsilon\phi(s)\pi^\varepsilon(\mathrm{d}s) \mathrm{d}t \leq \left(\frac{1}{N}+\frac{1}{\varepsilon}\right)^2 \lambda_\varepsilon \|\nabla\phi\|_m (1+\gamma_m).
\end{align*}
$\gamma_m$ is given below in Lemma~\ref{lem:jointmombound} and 
\begin{equation*}
\lambda_\varepsilon = \frac{4L}{\kappa} \left(3 + \frac{2}{m^4} \left(\frac{1}{N}+\frac{1}{\varepsilon}\right)\sum_{l=1}^m c_{m,l} \left(\frac{\delta L}{m^2}\left(\frac{1}{N}+\frac{1}{\varepsilon}\right)^2\right)^{l-1}\right).
\end{equation*}
\end{theorem}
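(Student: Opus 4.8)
The plan is to derive the first identity by applying Thm.~4.3 of \cite{highorder2014Abdulle} as a black box, and then to estimate the resulting $O(\delta)$ coefficient directly, using the uniform-in-time semigroup derivative bounds of Lemma~\ref{lem:derivativejoint} together with the invariant-measure moment bounds obtained from Lemma~\ref{lem:jointmombound} (by letting $t\to\infty$ in its estimate).

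For the first claim I would verify the three hypotheses needed to invoke Thm.~4.3 of \cite{highorder2014Abdulle}: (i) geometric ergodicity of both \eqref{eq:basesde} and the \gls*{srock} chain, which follows from \ref{ass:dissjoint} (cf.\ \cite{MATTINGLY2002185} and the fact that \eqref{eq:basesde} admits a unique stationary measure $\pi^\varepsilon$); (ii) polynomial-in-$\|s\|$ control, uniform in $t$, of $\mathcal{P}_t^\varepsilon\phi$ and its first three derivatives, which is exactly Lemma~\ref{lem:derivativejoint} together with \ref{ass:poly}; and (iii) the one-step weak expansion $\mathbb{E}[\phi(\hat S_n)\mid \hat S_{n-1}=s]=\phi(s)+\delta\mathcal{G}^\varepsilon\phi(s)+\delta^2\mathcal{A}_1\phi(s)+O(\delta^3)$ with $\mathcal{A}_0=\mathcal{G}^\varepsilon$, which was established for \eqref{eq:srock}--\eqref{eq:srock_} in the discussion preceding the theorem (including the explicit form of $\mathcal{A}_1$). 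The theorem then yields directly
\begin{equation*}
\lim_{K\to\infty}\frac{1}{K+1}\sum_{k=0}^K\phi(\hat S_k)-\int\phi(s)\pi^\varepsilon(\mathrm{d}s)=\delta\int_0^\infty\int\Big(\mathcal{A}_1-\tfrac12(\mathcal{G}^\varepsilon)^2\Big)\mathcal{P}_t^\varepsilon\phi(s)\,\pi^\varepsilon(\mathrm{d}s)\,\mathrm{d}t+O(\delta^2).
\end{equation*}

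For the quantitative bound I would first observe that, because $\pi^\varepsilon$ is stationary for $\mathcal{G}^\varepsilon$ and (by \ref{ass:poly}, Lemma~\ref{lem:derivativejoint}) $\mathcal{G}^\varepsilon\mathcal{P}_t^\varepsilon\phi$ is a function of polynomial growth with finite $\pi^\varepsilon$-moments (Lemma~\ref{lem:jointmombound}), one has $\int(\mathcal{G}^\varepsilon)^2\mathcal{P}_t^\varepsilon\phi\,\mathrm{d}\pi^\varepsilon=\int\mathcal{G}^\varepsilon(\mathcal{G}^\varepsilon\mathcal{P}_t^\varepsilon\phi)\,\mathrm{d}\pi^\varepsilon=0$ for each $t$. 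Hence, after integrating against $\pi^\varepsilon$, only the two genuinely new terms in $\mathcal{A}_1$ survive: the third-derivative term $6\big(\frac{1}{N^2}\nabla_\theta\bar E-\frac{1}{\varepsilon^2}\nabla_z\bar E\big)\nabla^3\mathcal{P}_t^\varepsilon\phi$ and the remainder-correction term $\frac{2}{m^2\delta}\sum_{k=1}^{m-1}(m-k)\mathbb{E}\big(\frac1N R_k^\theta+\frac1\varepsilon R_k^z\big)\nabla\mathcal{P}_t^\varepsilon\phi$. For the former I would use $\big\|\frac{1}{N^2}\nabla_\theta\bar E-\frac{1}{\varepsilon^2}\nabla_z\bar E\big\|\le(\frac1N+\frac1\varepsilon)^2\|\nabla\bar E\|$, the (at most) linear growth of $\|\nabla\bar E\|$ coming from \ref{ass:lip}, and $\|\nabla^3\mathcal{P}_t^\varepsilon\phi\|\le\sqrt2\|\nabla\phi\|_m e^{-\kappa t}(1+\|s\|^{m/2})$ from Lemma~\ref{lem:derivativejoint}; integrating the product over $\pi^\varepsilon$ via Lemma~\ref{lem:jointmombound} and over $t\in[0,\infty)$ produces the factor $1/\kappa$, the factor $(\frac1N+\frac1\varepsilon)^2$, and the constant ``$3$'' in $\lambda_\varepsilon$. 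For the latter I would reuse the estimates on $R_k^\theta,R_k^z$ proved within the preceding \gls*{srock} theorem, namely $\frac1N\|R_k^\theta\|+\frac1\varepsilon\|R_k^z\|\le(\frac1N+\frac1\varepsilon)\|\nabla\bar E\|\sum_{j\ge1}c_{k,j}\big(\frac{\delta L}{m^2}(\frac1N+\frac1\varepsilon)^2\big)^j$; the single factor of $\delta$ in the lowest-order term cancels the $1/\delta$ prefactor, leaving the $\frac{2}{m^4}(\frac1N+\frac1\varepsilon)\sum_l c_{m,l}(\cdots)^{l-1}$ contribution, which is then paired with $\|\nabla\mathcal{P}_t^\varepsilon\phi\|$ and integrated as before. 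Summing the two contributions and pulling out the common factor $\frac{4L}{\kappa}$ gives the stated $\lambda_\varepsilon$ and hence the claimed bound.

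I expect the main obstacle to be bookkeeping rather than conceptual. Concretely: (a) justifying the vanishing of $\int(\mathcal{G}^\varepsilon)^2\mathcal{P}_t^\varepsilon\phi\,\mathrm{d}\pi^\varepsilon$ requires pinning down the growth/integrability classes so that the stationarity identity $\int\mathcal{G}^\varepsilon g\,\mathrm{d}\pi^\varepsilon=0$ is legitimate — this leans on \ref{ass:poly}, Lemma~\ref{lem:derivativejoint} and Lemma~\ref{lem:jointmombound}; and (b) matching the final constant exactly to $\lambda_\varepsilon$ means tracking the $6$, the $c_{m,l}$ sums, the powers of $\frac1N+\frac1\varepsilon$, and the $1/\kappa$ from the time integral through both estimates without slack. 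A secondary technical point is checking that the precise regularity required by Thm.~4.3 of \cite{highorder2014Abdulle} is genuinely met by the multi-stage map \eqref{eq:srock}--\eqref{eq:srock_}, whose remainder operators $R_k$ depend on $\delta$ and on $\nabla^2\bar E$ and must be controlled uniformly through \ref{ass:lip}.
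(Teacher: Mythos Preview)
Your proposal follows essentially the same route as the paper: invoke Thm.~4.3 of \cite{highorder2014Abdulle} as a black box to obtain the expansion, kill the $(\mathcal{G}^\varepsilon)^2$ contribution via $\int\mathcal{G}^\varepsilon g\,\mathrm{d}\pi^\varepsilon=0$, and then bound the two surviving pieces of $\mathcal{A}_1$ using Lemma~\ref{lem:derivativejoint} for the semigroup derivatives and Lemma~\ref{lem:jointmombound} for the $\pi^\varepsilon$-moments, reusing the $R_k$ estimates from the preceding theorem.

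One point to flag: the hypotheses you list as (i)--(iii) for Thm.~4.3 are not quite the ones the paper verifies. Beyond ergodicity and the one-step weak expansion (which were established before the theorem statement), the paper checks three \emph{one-step increment} conditions on the numerical scheme itself: $\|\mathbb{E}[\hat S_1-\hat S_0\mid\hat S_0=s]\|\lesssim(1+\|s\|)\delta$, an almost-sure bound $\|\hat S_1-\hat S_0\|\lesssim M(1+\|\hat S_0\|)\sqrt{\delta}$, and a local weak error $\|\mathbb{E}[\phi(\hat S_1)]-\mathbb{E}[\phi(S_\delta)]\|\le C(s,\phi)\delta^2$. These are read off from the expansion \eqref{eq:Y_n+1} and the $R_k$ bounds. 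The semigroup derivative control from Lemma~\ref{lem:derivativejoint} is used only in the second half, exactly as you propose; it is not itself a hypothesis of Thm.~4.3. You anticipated this under your ``secondary technical point,'' so it is a bookkeeping adjustment rather than a gap.
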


\begin{proof}
To show this result we will seek to apply Thm.~4.3 in \cite{highorder2014Abdulle} to the \gls*{srock} scheme in our case, \eqref{eq:srock}. We have already verified the ergodicity of \eqref{eq:basesde} under \ref{ass:dissjoint} and we have verified that the one-step expectation of \eqref{eq:srock} takes on the required form for all $\phi$ under \ref{ass:poly}. What is left to check is that,
\begin{align}
\|\mathbb{E}[\hat{S}_1-\hat{S}_0|\hat{S}_0=s]\| &\lesssim (1+\|s\|)\delta,\label{eq:i}\tag{i}\\
\|\hat{S}_1-\hat{S}_0\| &\lesssim M(1+\|\hat{S}_0\|) \sqrt{\delta},\label{eq:ii}\tag{ii}\\
\|\mathbb{E}[\phi(\hat{Z}_1)|\hat{S}_0=s]-\mathbb{E}[\phi(S_\delta)|S_0=s]\|& \leq C(s, \phi) \delta^2, \label{eq:iii}\tag{iii}
\end{align}
where $M$ is a r.v. independent of $\hat{S}_0$ and $\delta$ and $C$ maps to a positive constant.

Observe that by \eqref{eq:Y_n+1}, \eqref{eq:i} and \eqref{eq:ii} are satisfied easily. For \eqref{eq:iii}, let us apply Taylor's Thm., which gives,
\begin{align*}
\mathbb{E}[\phi(\hat{S}_1)-\phi(S_\delta)|\hat{S}_0 =S_0=s] =& \nabla\phi(s) \mathbb{E}[\hat{S}_n-S_\delta|\hat{S}_0=S_0=s]\\
&+ \frac{\nabla^2\phi}{2} \mathbb{E}[(\hat{S}_1-s)^2-(S_\delta-s)^2|\hat{S}_0=S_0=s] + \dots
\end{align*}
since $\phi$ satisfies \ref{ass:poly}. Let us now recall from \eqref{eq:Y_n+1} and \eqref{eq:sdetaylor}, that,
\begin{align*}
\mathbb{E}[\hat{S}_n-S_\delta|\hat{S}_0,S_0=s] =& \mathbb{E}\left[\frac{2\delta}{m^2}\sum_{k=1}^{m-1}(m-k)\mathbb{E}\left(\frac{1}{N}R_k^\theta(\theta,z) + \frac{1}{\varepsilon} R_k^z(\theta, z)\right) + R\right]\\
\leq & \frac{4\delta^2 L}{m^4} \left(\frac{1}{N}+\frac{1}{\varepsilon}\right)^3 \sum_{l=1}^mc_{m,l} \left(\frac{\delta L}{m^2}\left(\frac{1}{N}+\frac{1}{\varepsilon}\right)^2\right)^{l-1} + \frac{L\delta^2}{2}.
\end{align*}
Similarly, the higher order terms can also be verified to have order $\delta^2$. Hence, we have verified all the assumptions required for Thm.~4.3 in \cite{highorder2014Abdulle} and so the first statement of the theorem is shown.

Let us now turn our attention to bounding $\lambda_\varepsilon$. Let us recall the form we found for $\mathcal{A}_1$ to observe that,
\begin{align*}
-\lambda_\varepsilon =& \int_0^\infty \int -\frac{1}{2}\left(\mathcal{G}^\varepsilon\right)^2 \mathcal{P}_t^\varepsilon \phi(s) - 6 \left(\frac{1}{N^2}\nabla_\theta\bar{E}-\frac{1}{\varepsilon^2}\nabla_z\bar{E}\right) \nabla^3 \mathcal{P}_t^\varepsilon \phi(s)\\
& \qquad \qquad \qquad \quad - \frac{2}{m^2\delta}\sum_{k=1}^{m-1}(m-k)\left(\frac{1}{N}R_k^\theta+\frac{1}{\varepsilon}R_k^z\right)\nabla \mathcal{P}_t^\varepsilon \phi(s)\pi^\varepsilon(\mathrm{d}s) \mathrm{d}t\\
\leq& \left(\frac{1}{N}+\frac{1}{\varepsilon}\right)^2 \int_0^\infty \int 6\|\nabla\bar{E}\| \|\nabla^3 \mathcal{P}_t^\varepsilon\phi(s)\|_F \\
&\qquad \qquad  + \frac{4L}{m^4} \left(\frac{1}{N}+\frac{1}{\varepsilon}\right)\sum_{l=1}^m c_{m,l} \left(\frac{\delta L}{m^2}\left(\frac{1}{N}+\frac{1}{\varepsilon}\right)^2\right)^{l-1}\|\nabla \mathcal{P}_t^\varepsilon\phi(s) \|\pi^\varepsilon(\mathrm{d}s)\mathrm{d}t,
\end{align*}
as by definition $\int\mathcal{G}^\varepsilon\mathcal{P}_t^\varepsilon\phi(s) \pi^\varepsilon(\mathrm{d}s)=0$. By an application of \ref{ass:dissjoint}, \ref{ass:lip} and Lemma~\ref{lem:derivativejoint}, we obtain,
\begin{align*}
|\lambda_\varepsilon| \leq & \left(\frac{1}{N}+\frac{1}{\varepsilon}\right)^2\int\int_0^\infty\left( 6\|\nabla\bar{E}(s)\| + \frac{4L}{m^4} \left(\frac{1}{N}+\frac{1}{\varepsilon}\right)\sum_{l=1}^m c_{m,l} \left(\frac{\delta L}{m^2}\left(\frac{1}{N}+\frac{1}{\varepsilon}\right)^2\right)^{l-1}\right) \\
&\qquad\qquad \qquad \qquad  \times (\|\nabla\mathcal{P}_t^\varepsilon \phi(s)\| + \|\nabla^3\mathcal{P}_t^\varepsilon \phi(s)\|_F)\mathrm{d}t\pi^\varepsilon(\mathrm{d}s)\\
\leq & \left(\frac{1}{N}+\frac{1}{\varepsilon}\right)^2\frac{2}{\kappa}\int \left(3\|\nabla\bar{E}(\theta,z)\| + \frac{2L}{m^4} \left(\frac{1}{N}+\frac{1}{\varepsilon}\right)\sum_{l=1}^m c_{m,l} \left(\frac{\delta L}{m^2}\left(\frac{1}{N}+\frac{1}{\varepsilon}\right)^2\right)^{l-1}\right)\\
&\qquad \qquad \qquad\qquad \times \|\nabla\phi\|_m(1+\|\theta\|^\frac{m}{2} + \|z\|^\frac{m}{2}) \mathrm{d}\pi^\varepsilon\\
\leq & \left(\frac{1}{N}+\frac{1}{\varepsilon}\right)^2\frac{4L}{\kappa}\int \left(3 + \frac{2}{m^4} \left(\frac{1}{N}+\frac{1}{\varepsilon}\right)\sum_{l=1}^m c_{m,l} \left(\frac{\delta L}{m^2}\left(\frac{1}{N}+\frac{1}{\varepsilon}\right)^2\right)^{l-1}\right)\\
&\qquad \qquad \qquad\qquad \times \|\nabla\phi\|_m (1+\|\theta\|^m + \|z\|^m) \mathrm{d}\pi^\varepsilon.
\end{align*}
The result is now obtained by a simple application of Lemma~\ref{lem:jointmombound}.
\end{proof}

We now are in the position to combine our results to quantify the discrepancy between the \gls*{srock} estimates and the \gls*{mle} target.

\begin{lemma}\label{lem:numericalconc}
Under the assumptions of Thm.~\ref{thm:averror} and Thm.~\ref{thm:srockasymp}, for all $\phi\in C_m^2$, 
\begin{align*}
\|\mathbb{E}_{\hat{\pi}^\varepsilon} \phi(z) - \mathbb{E}_{\pi^0} \phi(z)\| \leq \|\nabla\phi\|_m\left(\varepsilon C(1+\gamma_{4m}) + \left(\frac{1}{N}+\frac{1}{\varepsilon}\right)^2 \lambda_\varepsilon (1 + \gamma_m )\right)+ O(\delta^2),
\end{align*}
where $\hat{\pi}^\varepsilon$ is the stationary measure of the scheme \eqref{eq:srock}, $C$ is the constant from Thm.~\ref{thm:averror} and $\lambda_\varepsilon$ is the constant from Thm.~\ref{thm:srockasymp}.
\end{lemma}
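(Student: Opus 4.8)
The plan is to interpose the stationary measure $\pi^\varepsilon$ of the multiscale system \eqref{eq:basesde} and split the total error by the triangle inequality,
\[
\|\mathbb{E}_{\hat{\pi}^\varepsilon}\phi-\mathbb{E}_{\pi^0}\phi\|\;\le\;\underbrace{\|\mathbb{E}_{\hat{\pi}^\varepsilon}\phi-\mathbb{E}_{\pi^\varepsilon}\phi\|}_{\text{(a): discretisation bias}}\;+\;\underbrace{\|\mathbb{E}_{\pi^\varepsilon}\phi-\mathbb{E}_{\pi^0}\phi\|}_{\text{(b): averaging error}},
\]
mirroring the combination carried out above for the Euler--Maruyama scheme. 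Here \ref{ass:dissjoint} guarantees that \eqref{eq:basesde} admits a unique invariant law $\pi^\varepsilon$, while the ergodicity hypotheses underlying Thm.~4.3 of \cite{highorder2014Abdulle} give the same for the \gls*{srock} chain, with invariant law $\hat{\pi}^\varepsilon$; throughout, $\phi$ is viewed as a function on the joint space $\mathbb{R}^{d_\theta+Nd_x}$ that is constant in $z$, so that $\mathcal{P}_t^\varepsilon\phi$, $\bar{\mathcal{P}}_t\phi$ and both stationary expectations are well defined.

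For term (b) I would lift Thm.~\ref{thm:averror} to the stationary level. By stationarity of $\pi^\varepsilon$ one has $\mathbb{E}_{\pi^\varepsilon}\phi=\int(\mathcal{P}_t^\varepsilon\phi)(\theta,z)\,\pi^\varepsilon(\mathrm{d}\theta\,\mathrm{d}z)$ for every $t\ge0$, hence for all $t$
\[
\|\mathbb{E}_{\pi^\varepsilon}\phi-\mathbb{E}_{\pi^0}\phi\|\le\Bigl\|\int\!\bigl((\mathcal{P}_t^\varepsilon\phi)(\theta,z)-(\bar{\mathcal{P}}_t\phi)(\theta)\bigr)\pi^\varepsilon(\mathrm{d}\theta\,\mathrm{d}z)\Bigr\|+\Bigl\|\int\!(\bar{\mathcal{P}}_t\phi)(\theta)\,\pi^\varepsilon(\mathrm{d}\theta\,\mathrm{d}z)-\mathbb{E}_{\pi^0}\phi\Bigr\|.
\]
The first summand is controlled uniformly in $t$ by the pointwise estimate of Thm.~\ref{thm:averror}, $\varepsilon C\|\nabla\phi\|_m(1+\|\theta\|^{5m}+\|z\|^{3m})$, whose integral against $\pi^\varepsilon$ is at most $\varepsilon C\|\nabla\phi\|_m(1+\gamma_{4m})$ once the polynomial moments of $\pi^\varepsilon$ are bounded by (the $t\to\infty$ limit of) Lemma~\ref{lem:jointmombound}. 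The second summand equals $\|\mathbb{E}_{\bar{\mathcal{P}}_t^*\mu_\theta}\phi-\mathbb{E}_{\pi^0}\phi\|$ with $\mu_\theta$ the $\theta$-marginal of $\pi^\varepsilon$; since $\pi^0$ is the unique stationary law of \eqref{eq:averagedtheta}, Thm.~\ref{thm:avconv} gives $W_2(\bar{\mathcal{P}}_t^*\mu_\theta,\pi^0)\to0$, and Lemma~\ref{lem:Equiv} together with the moment bound Lemma~\ref{lem:boundedmomav} upgrades this to $\|\mathbb{E}_{\bar{\mathcal{P}}_t^*\mu_\theta}\phi-\mathbb{E}_{\pi^0}\phi\|\to0$. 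The left-hand side being independent of $t$, letting $t\to\infty$ yields term (b) $\le\varepsilon C\|\nabla\phi\|_m(1+\gamma_{4m})$.

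For term (a) I would apply Thm.~\ref{thm:srockasymp} directly: ergodicity of the \gls*{srock} chain identifies $\mathbb{E}_{\hat{\pi}^\varepsilon}\phi$ with $\lim_{K\to\infty}\frac{1}{K+1}\sum_{k=0}^{K}\phi(\hat S_k)$, and Thm.~\ref{thm:srockasymp} evaluates $\mathbb{E}_{\hat{\pi}^\varepsilon}\phi-\mathbb{E}_{\pi^\varepsilon}\phi$ as $\delta$ times the space--time integral displayed there plus an $O(\delta^2)$ remainder, the integral itself being bounded in the second assertion of that theorem by $\bigl(\tfrac{1}{N}+\tfrac{1}{\varepsilon}\bigr)^2\lambda_\varepsilon\|\nabla\phi\|_m(1+\gamma_m)$, which is precisely the second bracketed term of the statement. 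Summing the contributions of (a) and (b) and extracting the common factor $\|\nabla\phi\|_m$ gives the claimed inequality. The main obstacle is the handling of term (b): one needs $\pi^\varepsilon$ to carry moments of high enough order to dominate the $\|\theta\|^{5m}$ and $\|z\|^{3m}$ growth in Thm.~\ref{thm:averror}---which is exactly why \ref{ass:dissjoint}, hence Lemma~\ref{lem:jointmombound}, is invoked---while the uniform-in-time character of Thm.~\ref{thm:averror} is what legitimises integrating its bound against $\pi^\varepsilon$ and then passing $t\to\infty$ by dominated convergence. The remaining steps---the triangle inequality, the ergodic-average identification of $\mathbb{E}_{\hat{\pi}^\varepsilon}\phi$, and the bookkeeping of constants---are routine.
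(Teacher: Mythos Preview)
Your proposal is correct and follows exactly the route the paper takes: the paper's entire proof is the sentence ``The result follows from a simple triangle inequality and the results from Thm.~\ref{thm:averror} and Thm.~\ref{thm:srockasymp},'' and your decomposition into the discretisation bias (a) and the averaging error (b), with (a) handled by Thm.~\ref{thm:srockasymp} and (b) by integrating the uniform-in-time bound of Thm.~\ref{thm:averror} against $\pi^\varepsilon$ via Lemma~\ref{lem:jointmombound}, is precisely the intended argument spelled out in full.
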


The result follows from a simple triangle inequality and the results from Thm.~\ref{thm:averror} and Thm.~\ref{thm:srockasymp}. 

\section{Experiments}
To verify the efficacy of the proposed discretisation we conduct a series of numerical simulations to compare the proposed multiscale system \eqref{eq:basesde}, implemented via Euler--Maruyama integrator, denoted as \gls*{spcdem}, and via the \gls*{srock} integrator, denoted by \gls*{spcd}, as well as \gls*{pcd}. We begin by making these comparisons on a two-dimensional sampling problem from a banana density, followed by the more complex problem of sampling integers from the MNIST dataset.

\subsection{Synthetic Dataset}
We begin by considering a simple distribution in $\mathbb{R}^2$, that we can accurately sample from. Consider a variation on the classical banana density, where $x=(x_1, x_2)$, 
\begin{equation*}
p(\mathrm{d}x) \propto \exp\left(-\frac{1}{2} (x_1^2 + (2x_2 - x_1^2)^2)\right) \mathrm{d}x.
\end{equation*}
This variant is chosen as it can be quickly and accurately sampled from, as $X_1\sim Y_1$ and $X_2\sim \frac{1}{2}(Y_2+Y_1^2)$ for $Y_1$ and $Y_2$ sampled from the standard Gaussian. Our goal in this setting will be to learn the underlying distribution with a neural-network to model $E(\theta, x)$ (more details are given in the \hyperref[appn]{Appendix}). As we have access to the true distribution and accurate samples, we will use the Sinkhorn distance to evaluate relative performance, as it enables reliable and scalable numerical implementation of an optimal transport metric \cite{feydy2019}, by using entropic regularisation as a computationally cost-efficient approach to optimal transport. Indeed it is shown in \cite{feydy2019} that this loss is non-negative, definite and metrises the convergence in law.

\begin{figure}[t]
    \centering
    \includegraphics[width=.8\linewidth]{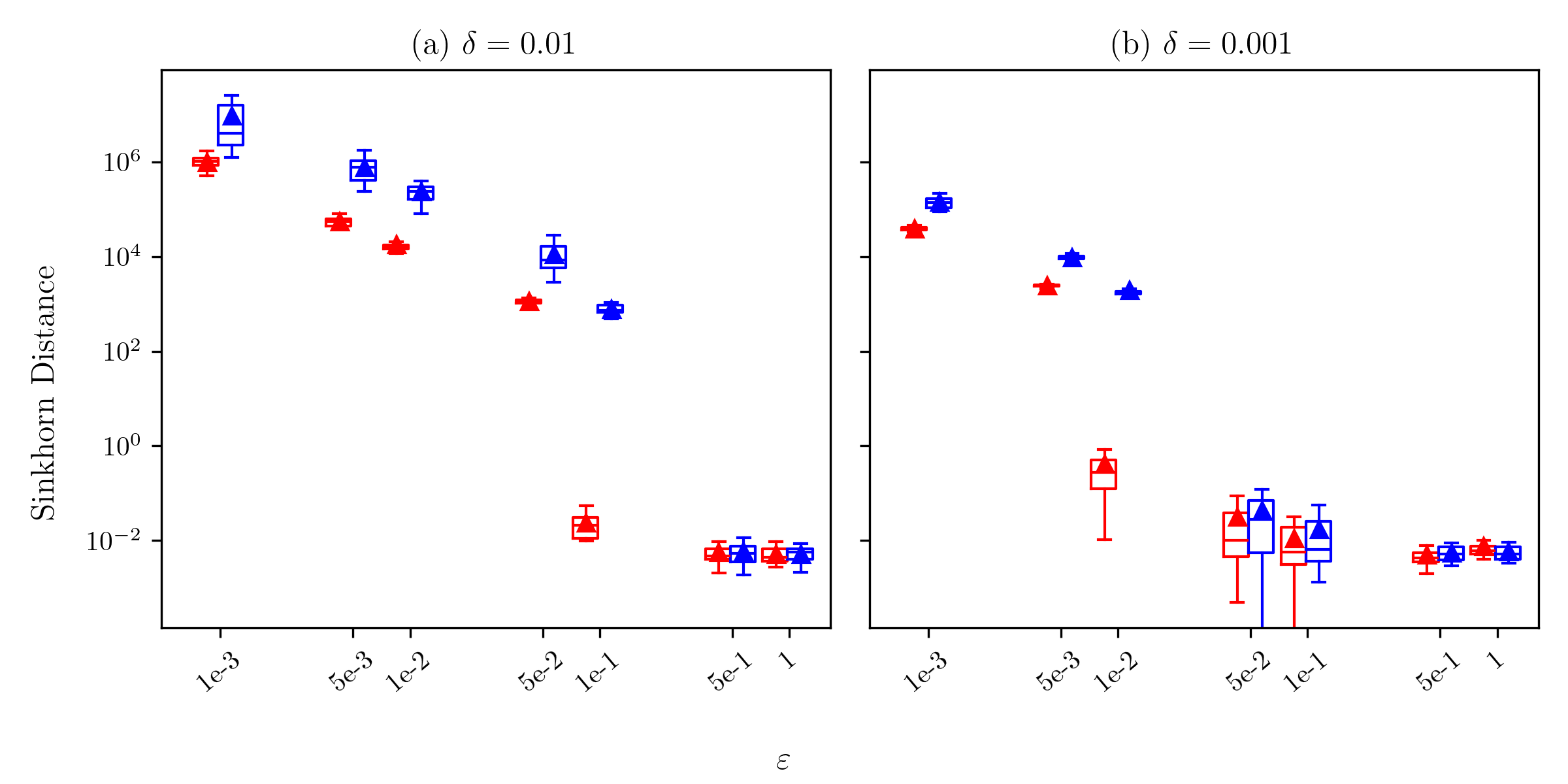}
    \caption{The accuracy of the S-ROCK (in red) and the Euler--Maruyama (in blue) is compared over 50 simulations to highlight the greater stability of S-ROCK to small values of $\varepsilon$. In (a) we look at the larger step-size $\delta=0.01$ and in (b) the smaller step-size $\delta=0.001$, where the latter has a larger stability region, in which the Euler--Maruyama integrator converges. For further details see \hyperref[appn]{Appendix}.}
    \label{fig:boxplot}
\end{figure}

For this experiment we observe, in Fig.~\ref{fig:boxplot}, the greater stability of the S-ROCK scheme, dampening the error induced by the ``stiffer'' drifts induced by smaller values of $\varepsilon$. However, we also observe that for smaller values of $\varepsilon$, there are more simulations obtaining lower Sinkhorn distances to the true distribution, suggesting the result obtained above in Thm.~\ref{thm:averror}. Unfortunately, it seems that mostly, the error from the numerical integrator---which, unlike the averaging discrepancy, grows inversely with $\varepsilon$---dominates. Hence, it becomes clear that the numerical integrator chosen should dampen the ``stiffness'' of the $x$-dynamics to exploit the greater accuracy obtained with smaller $\varepsilon$. Indeed, in \cite{Tieleman_2008}, this is dealt with by updating the $x$-dynamics multiple steps, in the original time scaling, for every update of the $\theta$-dynamics.

Overall, the SPCD scheme is able to accurately sample and estimate distributions in low-dimensional settings and, in particular, smaller values of $\varepsilon$ are more likely to produce better estimates, provided the numerical integrator's error does not dominate. Indeed, using the S-ROCK scheme helps dampen the error induced by the ``stiffness'' of the problem, as discussed in Sec.~\ref{sec:numerical}, yielding improved results, when compared to Euler--Maruyama. Recall, that for $m=3$, the S-ROCK scheme requires three times as many gradient computations as Euler--Maruyama, however gaining a nine-fold dampening of the gradient updates.

\begin{figure}[t]
    \centering
    \begin{subfigure}[t]{.3\textwidth}
    \centering
    \includegraphics[width=\linewidth]{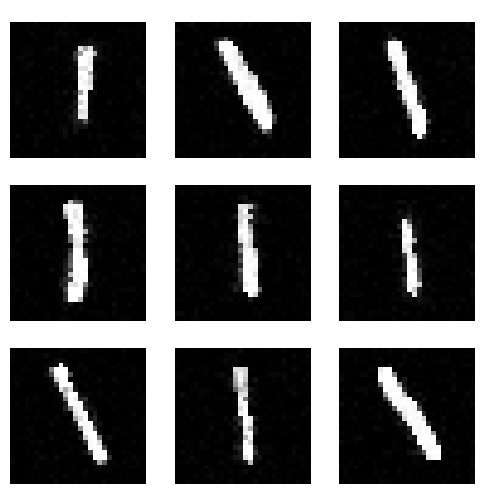}
    \end{subfigure}
    \begin{subfigure}[t]{.3\textwidth}
    \includegraphics[width=\linewidth]{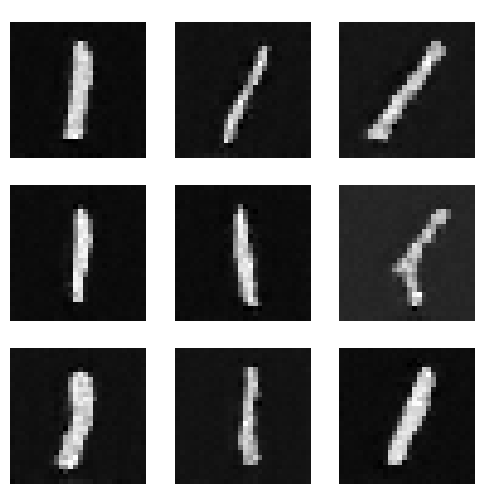}
    \end{subfigure}
    
    \begin{subfigure}[t]{.3\textwidth}
    \centering
    \includegraphics[width=\linewidth]{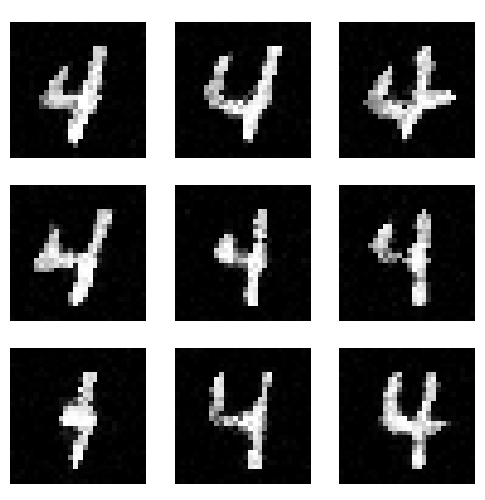}
    \caption{Samples via SPCD}
    \end{subfigure}
    \begin{subfigure}[t]{.3\textwidth}
    \includegraphics[width=\linewidth]{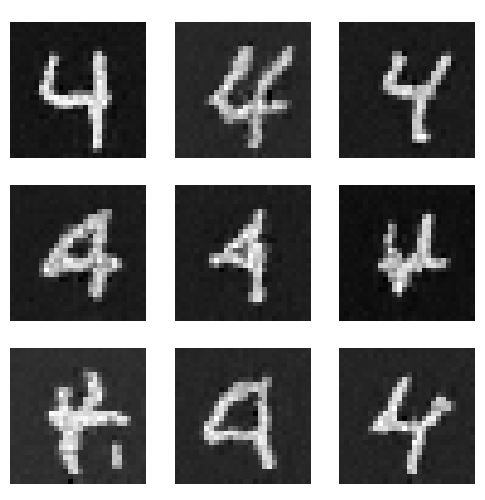}
    \caption{Samples via PCD}
    \end{subfigure}
    \caption{The samples obtained by training the SPCD and PCD schemes, for 60 epochs (details of the learning routine are given in the \hyperref[appn]{Appendix}). In the top row the algorithms are trained on the images of ones, whilst in the second row the algorithms were trained on images for the digit 4. The samples shown are chosen randomly from the samples generated.}
    \label{fig:MNIST}
\end{figure}

\subsection{MNIST Generation}
For a more relevant demonstration of the efficacy of the proposed algorithm, we will consider the problem of generating image samples; specifically, hand-drawn integers based on the MNIST dataset. In this case a convolutional neural network (CNN) is used to model $E(\theta, x)$ and the particles are $x\in\mathbb{R}^{28\times 28}$, corresponding to the size in pixels of the images (more details are given in the \hyperref[appn]{Appendix}). For simplicity we will focus on identifying the \gls*{mle} $\bar{\theta}^\star$ for $\{y_i\}_{i=1}^M$ sampled from characters depicting ones and fours. Note further, that for computational efficiency and added stability, we will batch the MNIST dataset and iterate through the batches for each of the time increments evaluated by the numerical integrator.

For this experiment we observe that the added stability of the S-ROCK scheme is brought to bear. Indeed, the PCD algorithm appears to be unable to successfully produce artefact-free samples consistently, in the same number of iterations (or gradient computations) as the S-ROCK scheme. We can see this in samples drawn after training both routines with the same model in Fig.~\ref{fig:MNIST}.

\section{Discussion}
In this paper we introduced a novel continuous-time, diffusion-based, framework for the analysis of \gls*{pcd} schemes. Through this lens, we introduce a weak \gls*{uit} error bound for Langevin-based \gls*{pcd} schemes,  exploiting recent results from \cite{Crisan_Ottobre_2024}. With this characterisation of \gls*{pcd}, we are able to directly and explicitly bound the error between \gls*{pcd} analogues and the \gls*{mle} gradient flow. Further, we demonstrated how this continuous-time perspective paves the way to novel \gls*{pcd} algorithms, which exploit explicit time discretisations of \glspl*{sde}, empirically demonstrating improvements in training stability. To this end, we  introduced a \gls*{srock} discretisation and have shown a novel ergodic bound for the scheme, to obtain a \gls*{uit} bound for the numerical integrator's error. 

Due to the need for strong exponential stability  \cite{Crisan_Ottobre_2024, Crisan2021-wg, schuh2024conditionsuniformtimeconvergence}, our theory requires a restrictive set of assumptions.   However, we expect such bounds to hold outside this regime, as has been demonstrated in the numerical experiments.   Future work will explore how these assumptions can be weakened, for example leveraging the semigroup gradient bound estimates presented in \cite{Crisan_Ottobre_2024, schuh2024conditionsuniformtimeconvergence}, which avoid \ref{ass:driftav}, perhaps at the cost of not having explicit constants.

This paper builds on a growing body of works which exploit multiscale dynamics for sampling and optimisation, particularly relevant to developing novel approaches in machine learning and computational statistics.   We believe that the use of stabilised numerical integrators, as presented in this paper, further extend the applicability of such approaches, and hope that this framework will continue to motivate the exploration of such schemes.

\begin{appendix}\label{appn}
\section*{Model Architectures for Section~\ref{sec:numerical}}

In this section we describe the models used in Section~\ref{sec:numerical}.

\subsection*{Syntetic Experiment Model Architecture}
For the synthetic data experiment we use a neural network architecture for the energy function $E(\theta, x)$. We use five fully connected layers with latent dimension 128 and tanh activations, with no activation on the scalar output. 

For the learning, we set $M=N=5000$, sampled directly from the distribution and for \gls*{srock}, we set $m=3$. The remaining learning parameters are specified in each experiment.

\subsection*{MNIST Experiment Model Architecture}
To parametrise the energy-based model's potential function for the MNIST dataset, we use a Convolutional Neural Network (CNN). This model processes greyscale images in $\mathbb{R}^{28\times 28}$ through a series of convolutional and fully connected layers, with Swish activation functions and spectral normalisation. We give the exact model architecture in Fig.~\ref{fig:model}.

\begin{figure}[t!]
    \centering
    \includegraphics[width=.6\textwidth]{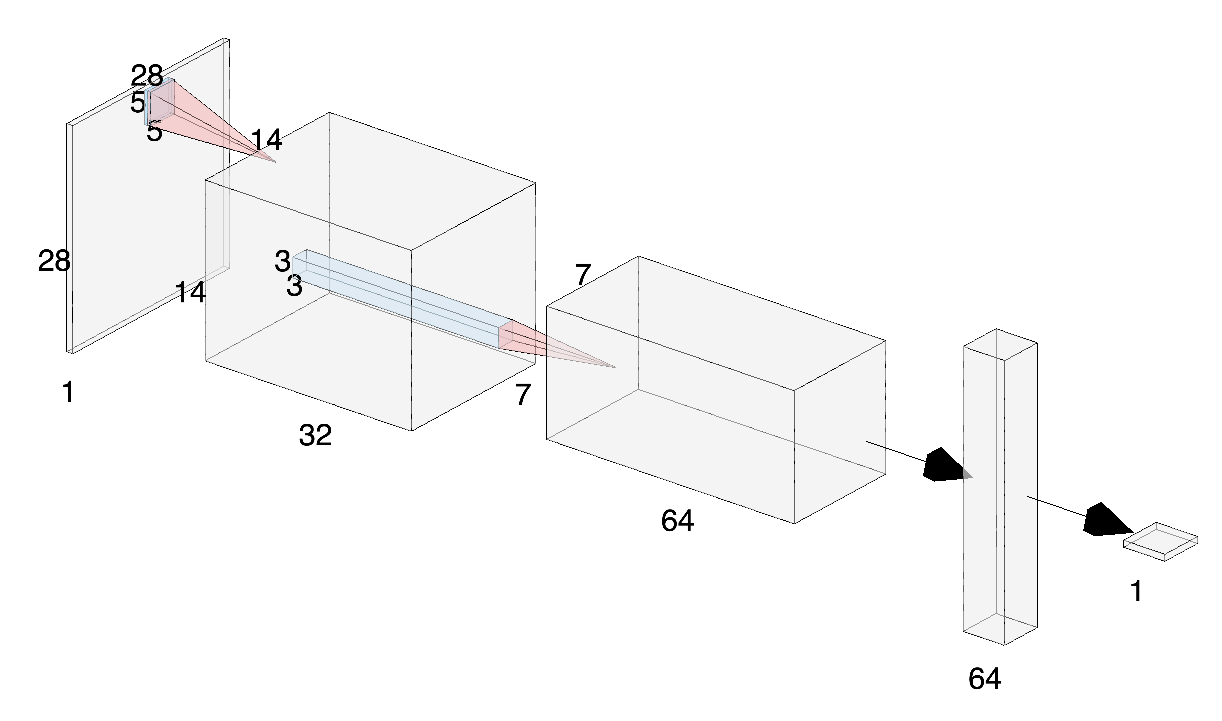}
    \caption{The model structure of $E(\theta, x)$, where the pyramids represent convolutions and the vectors represent fully connected linear layers. On the left we have a realisation of $x$ and on the right the scalar output of $E(\theta, x)$. We note that between convolutions we apply spectral normalisation and Swish activations (the Swish activation is given as $x\mapsto x\sigma(x)$, with $\sigma$ corresponding to the sigmoid activation). For the linear transformations we similarly normalise and apply Swish activations, except for the last layer.}
    \label{fig:model}
\end{figure}

We note that the learning of this model is performed via the SPCD and PCD algorithms, where $\varepsilon=1$, $\delta = 10^{-4}$, with batch-wise updates with 64 data points and 64 particles. With this partition of the dataset, there are 92 batches per epoch, and the experiment is run for 60 epochs. Note that the SPCD algorithm is implemented for $m=3$, so to account for this each epoch is run three times for the PCD algorithm, to guarantee that the gradient computations are equalised across computational methods.

\end{appendix}

\subsection*{Acknowledgements}
The authors would like to thank Iain Souttar for his insightful comments and encouragement.

\subsection*{Funding}
PVO is supported by the EPSRC through the Modern Statistics and Statistical Machine Learning (StatML) CDT programme, grant no. EP/S023151/1.

\bibliographystyle{plain}
\bibliography{ref}

\end{document}